\newtheorem{theorem}{Theorem}[section]
\newtheorem{lemma}[theorem]{Lemma}
\newtheorem{corollary}[theorem]{Corollary}
\newcommand{\cB}{{\cal B}}
\newcommand{\cA}{{\cal A}}
\newcommand{\cY}{{\cal Y}}
\newcommand{\cX}{{\cal X}}
\newcommand{\bu}{\boldsymbol{u}}
\newcommand{\bx}{\boldsymbol{x}}
\newcommand{\by}{\boldsymbol{y}}
\newcommand{\bz}{\boldsymbol{z}}
\newcommand{\bI}{\boldsymbol{I}}
\newcommand{\bX}{{\bf X}}
\newcommand{\bzero}{\bf 0}
\newcommand{\bZ}{{\bf Z}}
\newcommand{\bM}{{\bf M}}
\renewcommand{\k}{k}
\newcommand{\given}{ \, \vert \, }
\renewcommand{\vec}[1]{\boldsymbol{#1}}
\newcommand{\Algo}[1]{\textsc{#1}}
\newcommand{\prob}{\mathbb{P}}
\newcommand{\argmax}{\operatornamewithlimits{argmax}}
\newcommand\R{\mathbb{R}}   % for the real numbers
\newcommand\N{\mathbb{N}}   % for the natural numbers
\renewcommand{\P}{{\mathbf P}}
\newcommand{\X}{{\mathcal X}}
\renewcommand{\l}{{\boldsymbol l}}
\renewcommand{\k}{{\mathbf k}}
\newcommand{\p}{{\mathbf p}}
\newcommand{\q}{{\mathbf q}}
\renewcommand{\r}{{\mathbf r}}
\newcommand{\x}{{\mathbf x}}
\newcommand{\z}{{\mathbf z}}
\newcommand{\sign}{\text{sgn}}
\newcommand{\expect}{{\mathop{\mathbb{E}}}}
\newcommand{\btheta}{\boldsymbol \theta}
\newcommand{\bell}{\boldsymbol \ell}
\title{Stochastic Contextual Dueling Bandits under Linear Stochastic Transitivity Models}
\author{ Viktor Bengs \\
	Institute of Informatics, LMU Munich, Germany \\
	Viktor.bengs@lmu.de
	\And 
	Aadirupa Saha \\
	Microsoft Research, New York City \\
	aadirupa.saha@microsoft.com
	\And
	Eyke H\"ullermeier \\ 
	Institute of Informatics \& Munich Center of Machine Learning, LMU Munich, Germany \\
	eyke@lmu.de	
}
\begin{document}

	\maketitle
	
%	\begin{icmlauthorlist}
%		\icmlauthor{Viktor Bengs}{LMU}
%		\icmlauthor{Aadirupa Saha}{MSR}
%		\icmlauthor{Eyke H\"ullermeier}{LMU} 
%	\end{icmlauthorlist}
%	
%	\icmlaffiliation{LMU}{Institute of Informatics, LMU Munich, Germany}
%	\icmlaffiliation{MSR}{Microsoft Research, New York City, US} 
%	
	
	\begin{abstract}
		We consider the regret minimization task in a dueling bandits problem with context information. In every round of the sequential decision problem, the learner makes a context-dependent selection of two choice alternatives (arms) to be compared with each other and receives feedback in the form of noisy preference information. We assume that the feedback process is determined by a linear stochastic transitivity model with contextualized utilities (CoLST), and the learner's task is to include the best arm (with highest latent context-dependent utility) in the duel. We propose a computationally efficient algorithm, \Algo{CoLSTIM}, which makes its choice based on imitating the feedback process using perturbed context-dependent utility estimates of the underlying CoLST model. If each arm is associated with a $d$-dimensional feature vector, we show that \Algo{CoLSTIM} achieves a regret of order $\tilde O( \sqrt{dT})$ after $T$ learning rounds. Additionally, we also establish the optimality of \Algo{CoLSTIM} by showing a lower bound for the weak regret that refines the existing average regret analysis. Our experiments demonstrate its superiority over state-of-art algorithms for special cases of CoLST models.
%		
%		We consider the regret minimization task in a dueling bandits problem with context information. In every round of the sequential decision problem, the learner makes a context-dependent selection of two choice alternatives (arms) to be compared with each other and receives feedback in the form of noisy preference information. We assume that the feedback process is determined by a linear stochastic transitivity model with contextualized utilities (CoLST), and the learner's task is to include the best arm (with highest latent context-dependent utility) in the duel. We propose a computationally efficient algorithm, \Algo{CoLSTIM}, which makes its choice based on imitating the feedback process using perturbed context-dependent utility estimates of the underlying CoLST model. If each arm is associated with a $d$-dimensional feature vector, we show that \Algo{CoLSTIM} achieves a regret of order $\tilde O( \sqrt{dT})$ after $T$ learning rounds. Additionally, we also establish the optimality of \Algo{CoLSTIM} by showing a tighter lower bound for the weak regret, compared to the existing average regret analysis. Our experiments demonstrate its superiority over state-of-art algorithms for special cases of CoLST models. 
		%
	\end{abstract}

	\section{Introduction} \label{sec:introdution}
	
	The \emph{multi-armed bandit} (MAB) problem represents a class of online machine learning problems in which a learner can perform different actions (metaphorically referred to as ``pulling arms'') resulting in numerical rewards within a sequential decision process \citep{LaSz18}.  
	From a learning perspective, the main challenge of this problem is the \emph{exploration-exploitation dilemma}. The learner's lack of knowledge about the underlying reward mechanism associated with the actions forces it to try different actions often enough, so as to gradually understand these mechanisms better in the course of time (exploration). On the other side, the learner is inclined to choose actions deemed rewarding, as as to maximize the rewards cumulated over time (exploitation).

	Although this learning scenario has been used in many application areas, such as web advertising or medical treatments, it has been generalized in various ways to take specific aspects arising in practical problems into account. 
	In many applications, for example, additional information is available about the context in which an action is performed, for instance, the user profile in web advertising or a patient's medical record.
	In the \emph{contextual MAB setting} \citep{ChLiReSc11,valko2013finite}, an action's reward mechanism might depend on the contextual information, so that one or the other action is optimal depending on the context. 

	Another practically relevant extension of the classical MAB problem is the \emph{dueling} \citep{YuJo09} or \emph{preference-based}  \citep{bengs2021preference} bandit problem, where the learner's action is to select a pair of arms resulting in a noisy qualitative comparison between these arms (which arm generated the highest reward?) rather than selecting a single arm resulting in a numerical reward (what is the generated reward?).
	An example is A/B testing, where users are presented with two options from which the more preferred one should be selected, but no (meaningful) numerical reward can be measured for that choice. 
	Although the dueling bandits problem itself\,---\,just like the standard MAB problem\,---\,has already been the starting point for various generalizations, little attention has been paid to a contextual learning scenario in the same spirit as the contextual bandits.
	One reason might be that, in contrast to numerical reward learning scenarios, in preference-based learning scenarios it is often far from obvious to specify basic characteristics such as the optimal (pair of) arm(s) or a practically meaningful model assumption for the feedback mechanism.  

	One way to get around these issues, which we propose in this paper, is to use the so-called \emph{linear stochastic transitivity} (LST) models\footnote{Some authors refer to this model class also as Thurstone preference models. e.g., \citet{jin2020rank}.}, which enjoy great popularity in the fields of economics, behavioral science \citep{cattelan2012models} with the Bradley-Terry-Luce and the Thurstone-Mosteller model being their most well-known instantiations.
	They are especially famous for their use in rating or ranking systems like the Elo system \citep{elo1978rating} or TrueSkill \citep{herbrich2006trueskill}.
	These LST models describe the mechanism leading to the observation of the outcome of a comparison between two objects (choice alternatives, players, etc.) in a probabilistic way by assuming latent utility values of the objects and a noisy perception of these utility values.
	The probability of the outcome of the pairwise comparison, say A over B, is then equal to the probability that A's noisy utility value is greater than B's.
	%
	%These models describe the mechanism of choosing between two choice alternatives in a probabilistic way by assuming latent utility values of the choice alternatives and a noisy perception of these utility values leading to the final choice via sorting the noisy utility values.
	%
	%In our setup, we assume that the contextualized feedback mechanism of the dueling bandits problem is determined by a LST, where the utility values of the arms (choice alternatives) depend on the context in a linear way.

	\subsection*{Our Contributions} 
	\textbf{(1). Contextual dueling bandits under linear stochastic transitivity models: } We tackle the contextualized dueling bandits problem and extend the class of possible contextualized feedback mechanisms beyond the contextual Plackett-Luce model by allowing it to be determined by an LST model in which the utility values of the arms (choice alternatives) depend linearly on the context.
	
	\textbf{(2). A computationally efficient and near-optimal algorithm: } We construct a learning algorithm, \Algo{CoLSTIM}, which for the choice of its ``first arm'' in the duel imitates the selection mechanism of an LST model, while the choice of the second arm is tailored towards the task of regret minimization.
	%  by first drawing and then sorting randomized values within the confidence bounds of the arms' utility value functions for the current context, 
	%
	\Algo{CoLSTIM} enjoys under mild assumptions a bound on its average regret of order $O(\sqrt{dT}\log(T))$ matching the minimax lower bound up to logarithmic terms.
	From a practical point of view, \Algo{CoLSTIM} is computationally more efficient than current algorithms for more restrictive assumptions on the feedback mechanism and performs quite well in numerical simulations in comparison.
	%and performs reasonably well in numerical simulations compared to current algorithms for more restrictive assumptions on the (contextualized) feedback mechanism.
	%
	
	\textbf{(3). Stronger lower bound:} We show a lower bound for weak regret\,---\,a weaker form of the usual average regret\,---\,which has the same order as the stronger average regret. This result shows that it is not possible to derive stronger regret guarantees for minimizing weak regret in the contextual dueling bandits problem. 
	
	%Additionally, we have also corroborated our theoretical guarantees with extensive experimental evaluation.
	
	\paragraph{Outline.} The paper is organized as follows. 
	In Section \ref{sec_rum_basics}, we recall the linear stochastic transitivity model along with all important definitions and theoretical notions.
	We give a formal description of the contextualized dueling problem under LST in Section \ref{sec_contextual_pre_bandits} and propose with \Algo{CoLSTIM} a learning algorithm for tackling this problem, which we also analyze theoretically in the form of an upper bound on its expected average regret. 
	In an experimental study we show its empirical superiority over state-of-art algorithms for special cases of LST models in Section \ref{sec_exp}.
	Finally, we discuss the works closest related to ours in Section \ref{sec_related_work}, prior to concluding the paper with
%	 with a summary and 
	an outlook on future work in Section \ref{sec_conclusion}.
	All proofs are given in the supplementary materials.
	
	\paragraph{Notation.}
	For $n \in \mathbb N$, we denote by $[n]$ the set $\{1,\ldots,n\}$ and by $1_{\lbrack \cdot \rbrack}$ the indicator function.
	We write $\| \bx \|_{A} = \sqrt{\bx^\top A \bx}$ for any $\bx\in \R^d$ and any positive semi-definite matrix $A\in \R^{d \times d},$ while $\| \bx \| = \| \bx \|_{\bI_d}$  for $\bI_d$ being the $d\times d$ identity matrix. 
	For symmetric matrices $A,B$, we write $A \leq B$  if $B-A$ is positive semi-definite. 
	Let $\cB_r(p)$ be the $\ell_p$ ball of radius $r\geq 0$, i.e., $\cB_r(p) = \{\x \mid \|\x\|_p \leq r\}$, where $\|\cdot\|_p$ denotes the standard $\ell_p$-norm.
	
	\section{Linear Stochastic Transitivity} \label{sec_rum_basics}

	The linear stochastic transitivity (LST) model is a class of parameterized probability models describing the outcome of a comparison between two choice alternatives from a set of $n$ choice alternatives.
	An LST model has two parameters: An $n$-dimensional parameter $\bu=(u_1,\ldots,u_n)^{\top}\in \mathbb R^n,$ where each component represents the (latent) \emph{utility} of a choice alternative, and another functional parameter in the form of a symmetric cumulative distribution function\footnote{A cumulative distribution function is symmetric if the corresponding probability distribution is symmetric around a specific value, usually the mean.} $F:\R \to [0,1]$ called \emph{comparison function.}
	According to an LST model, the probability that alternative $i$ is preferred over alternative $j,$ denoted by $i\succ j,$  is 
	\begin{align} \label{defi_LST_prob}
		\mathbb{P}(1_{\lbrack i \succ j \rbrack } = 1) =	F( u_i - u_j  ).
	\end{align}
	\paragraph{Sorting Perturbed Utilities.}
	One equivalent way to obtain the probability in \eqref{defi_LST_prob} is as follows.
	Suppose $\epsilon_i,\epsilon_j$ are two iid random variables with distribution $G$ and such that $\epsilon_j-\epsilon_i \sim F.$
	%
	%Now let  $v_i$$\backslash$$v_j$ be the \emph{perturbed utility} of $i$$\backslash$$j$ by adding the noise term $\epsilon_i$$\backslash$$\epsilon_j$  to $u_i$$\backslash$$u_j$.
	Now let  $v_i$ be the \emph{perturbed utility} of $i$ by adding the noise term $\epsilon_i$ to $u_i$.
	If $F$ is continuous, then the probability that $i$ is the choice alternative with the higher perturbed utility is exactly as in \eqref{defi_LST_prob}:
	$$  \mathbb{P}\Big( i = \argmax_{k=i,j} v_k \Big) = \mathbb{P}( \epsilon_j - \epsilon_i \leq u_i - u_j )=	F( u_i - u_j  ). $$
	This process of sorting the perturbed utilities is a fairly intuitive way to model how a decision about preferences between two options has come about by a decision maker such as a human user or the environment. 
	It will also be exactly this process based on the \emph{perturbation distribution} $G$ that will lead to the key algorithmic idea of our approach. 
	\paragraph{Examples.} Specific choices of the perturbation distribution $G$ have gained much popularity, as the corresponding comparison function $F$ has a known form:
	\begin{itemize}
		[noitemsep,topsep=0pt,leftmargin=4mm]
		\item \emph{Bradley-Terry-Luce (BTL) model} --- Setting $G$ to be the standard Gumbel distribution, it is well known that the difference of two iid standard Gumbel distributed random variables is standard logistic distributed.
		%	 with location $0$ and unit scale parameter.
		%	
		In this case, we have $	F( u_i - u_j  ) = \exp(u_i)/(\exp(u_i)+\exp(u_j)).$ 
		\item \emph{Thurstone-Mosteller model} --- The distribution of the difference of two iid standard Gaussian distributed random variables is  Gaussian  with zero mean and variance $2$. 
		Hence, if $G$ is the standard Gaussian distribution, it holds that  $	F( u_i - u_j  ) = \Phi(\nicefrac{(u_i-u_j)}{\sqrt{2}}),$ where $\Phi$ is the cumulative distribution function of a standard Gaussian.
		% distribution
		\item \emph{Exponential Noise} --- If $G$ is the exponential distribution with rate $\lambda>0,$ then $F$ is the cumulative distribution function  of a  Laplace distribution with location $0$ and scale $\lambda^{-1}.$
		Thus, if $\mathrm{sgn}$ denotes the sign function, then \\ $	F( u_i - u_j  ) = \frac12 + \frac12 \mathrm{sgn}(u_i-u_j)(1-\exp(-\nicefrac{|u_i-u_j|}{\lambda})).$ 
	\end{itemize}

	\paragraph{Context Information.}
	In order to take context information $\vec{x}_i \in \mathbb{R}^d$ about the $i^{th}$ choice alternative into account, we follow the approach by \citet{cheng2010label,ScHu18} for the BTL model\footnote{Strictly speaking, these works consider a Plackett-Luce model, which is a generalization of the BTL model, see \cite{hunter2004mm}.} and replace the constant latent utility $u_i$ by a linear function of the features, leading to a contextualized LST (CoLST) model. 
	Formally, given $n$ choice alternatives, which define a contextual decision problem, we summarize all context vectors $\vec{x}_1, \ldots , \vec{x}_n$ in a matrix $\bX \in \mathbb{R}^{d \times n}$ and define the context-dependent utilities of the choice alternatives via
	\begin{align} \label{def:utility_param}
		u_i(\bX) =  \btheta^{\top} \vec{x}_{i} = \langle \btheta, \vec{x}_{i}  \rangle, \quad \forall i = 1,\ldots,n,
	\end{align} 
	where $\btheta \in \R^d$ is some weight vector.
	With this, the probability that alternative $i$ is preferred over $j$ given the context $\bX$, according to a CoLST model, is 
	\begin{align} \label{defi_LST_context_prob}
		\mathbb{P}(1_{\lbrack i \succ j \rbrack } = 1 \, | \,  \bX) =	F( u_i(\bX) - u_j(\bX)  ) .
	\end{align}
	The log-likelihood function $\l \left( \cdot \given 1_{\lbrack i \succ j \rbrack },\{i,j\} ,\bX \right)$ (in short just $\l(\cdot)$) for an observation $(1_{\lbrack i \succ j \rbrack },\{i,j\} ,\bX)$ is 
	\begin{align} \label{def:log_likelihood_form1}
		\begin{split}
			\l \left( \btheta \right) 
			%		
			%		&= \l \left( \btheta \given 1_{\lbrack i \succ j \rbrack },\{i,j\} ,\bX \right) \\
			%
			&= 1_{\lbrack i \succ j \rbrack } \log\big( F (\btheta^{\top} (\bx_{i} -  \bx_{j}) ) \big) 
			+(1-1_{\lbrack i \succ j \rbrack } ) \log\big(F (\btheta^{\top} (\bx_{j} -  \bx_{i}) )\big).
		\end{split}
	\end{align}  
	The log-likelihood function can be expressed in a more convenient way, if $F$ is of an exponential family type, i.e., if for $Y=1_{\lbrack i \succ j \rbrack }$ we can write $\mathbb{P}(Y \, | \,  \bX)$ as
	\begin{align} \label{defi_exp_fam_likelihood}
		%	
		%	\mathbb{P}( Y \, | \,  \bX)	= 
		\exp\left(\frac{Y \langle \btheta, \bz_{i,j} \rangle - \tilde{F}(\langle \btheta, \bz_{i,j} \rangle) }{v(\eta)} + c(Y ,\eta)			\right),
	\end{align}
	where $\bz_{i,j}= \bx_{i} -  \bx_{j},$ $\tilde F$ is the anti-derivative of $F,$ $\eta$ is some scale parameter and $v$ as well as $c$ are normalization functions.
	The comparison functions $F$ in the examples above all admit such a representation.

	\section{Contextual Dueling Bandits} \label{sec_contextual_pre_bandits}
	In this section, we first introduce the problem setting prior to the suggested learning algorithm for this setting.

	\subsection{Problem Setting}  
	We consider a set of $n \in \mathbb N_{\geq 2}$ available choice alternatives that we refer to as \emph{arms}, and simply denote them by their index: $\cA = \left\{ 1, \dots , n \right\}$.
	The learning problem proceeds in a time horizon $T,$ where in each time step $t\in \{1,\ldots,T\}$, the learner observes a  context  $\bX_t =(\bx_{t,1} \dots \, \bx_{t,n})$ with $\bx_{t,i}\in \X$ for any arm $i,$  where $\X\subset\R^d$ is the context space, e.g., $\X = \cB_1(2) = \{\bx \mid \| \bx \| \leq 1\}$ could be the $\ell_2$-unit ball of radius $1$.
	%$with  $ \| \bx_{t,i} \| \leq 1$ for  any arm $i$. 
	%
	Each vector $\bx_{t,i}$ encodes features of the context in which an arm must be chosen, but possibly also of the arm $i$ itself. 
	In other words, $\bx_{t,i}$ contains properties of both the context (determined by the environment) and the arm, for instance obtained by a joint feature map.
	In what follows it will turn out to be convenient to consider the contrast vectors $\bz_{t,i,j} = \bx_{t,i} - \bx_{t,j}$ and assume that we equivalently obtain in each time step the $d \times \binom{n}{2}$ dimensional contrast matrix 
	$$\bZ_t = ( \bz_{t,1,2} \,  \bz_{t,1,3} \ldots  \bz_{t,1,n}  \, \bz_{t,2,3} \ldots \bz_{t,n-1,n} ).$$ 
	After observing the context (or contrast) information $\bX_t$ (or $\bZ_t$), the learner selects a pair of  arms $S_t=(i_t,j_t) \in [n]^2,$ whereupon the learner obtains preference feedback, which is either $i_t \succ j_t$ or $j_t \prec i_t,$ i.e., $i_t$ is preferred over $j_t$ or the opposite. 
	We assume that the feedback is generated by means of a CoLST model with a \emph{known} perturbation distribution $G^*$ (and corresponding comparison function $F^*$) and an \emph{unknown} weight parameter $\btheta^*.$ 
	Formally, let $Y_t = 1_{\lbrack i_t \succ j_t \rbrack}$ be the binary feedback, then for any timestep $t \in [T]$,  
	\begin{align} \label{defi_feedback_context}
		\begin{split}
			%		
			%	
			%		1- \mathbb{P}(Y_t = 0 \, | \, \bZ_t) = 
			\mathbb{P}(Y_t = 1 \, | \, \bZ_t) 
			&= F^*\big( \langle \bz_{t,i_t,j_t}, \btheta^* \rangle  \big) 
			= F^*( u^*_{i_t}(\bX_{t})  - u^*_{j_t}(\bX_{t})),
		\end{split}
	\end{align}
	where  $u^*_{i}(\bX_{t}) = \bx_{t,i}^\top \btheta^*.$

	The goal of the learner is to select, in each time step $t$, a pair of arms $S_t=(i_t,j_t)$ involving the arm which is best for the current context $\bX_t.$
	In the realm of preference-based multi-armed bandits, the notion of a best arm can be defined in various ways \cite{bengs2021preference}.
	In our setting, where we assume choices to be guided by a linear stochastic transitivity model (with fixed but unknown weight parameter $\btheta^*$), it is natural to define the best arm for a time step $t$ by the arm with the highest (latent) utility  (see \eqref{def:utility_param}).
	More specifically, the best arm for the current time step $t$ is 
	%\\
	\begin{align}
		i^*(t) = \arg \max\nolimits_{i \in \cA } u_{i}^*(\bX_t). 
		%= \argmax_{i \in \cA } \, \exp \left(  \vec{x}_{i}^{\top} \btheta^* \right).
	\end{align}
	With this, one can leverage  the two most prevalent notions of instantaneous regret a learner suffers for selecting a pair of arms $S_t= (i_t,j_t)$ at time $t$ from the non-contextual dueling bandits, namely the average regret and the weak regret:
	\begin{align*}
		r_t^a(S_t,\bX_t) &= \frac{2 u_{i^*(t)}^*(\bX_t) - u_{i_t}^*(\bX_t)  - u_{j_t}^*(\bX_t)}{2},  \\
		r_t^w(S_t,\bX_t)  &=  u_{i^*(t)}^*(\bX_t) - \max\nolimits_{k \in S_t} u_{k}^*(\bX_t) \, .
		%r_t = \frac{  v_{i^*(t)}^*(\bX_t) - \max_{k \in S_t} v_{k}^*(\bX_t) }{	v_{i^*(t)}^*(\bX_t)} \, ,
		%
	\end{align*}
	Thus, the average and weak cumulative regret for selecting pairs $(S_t)_{t\in[T]}$ for context information $(\bX_t)_{t\in[T]}$ during the time horizon $T$ are, respectively,
	\begin{align}\label{regret_def} 
		\begin{split}
			R_T^a((S_t)_{t\in[T]}) \, &= \, \sum\nolimits_{t=1}^{T} r_t^a(S_t,\bX_t), \\
			R_T^w((S_t)_{t\in[T]}) \, &= \, \sum\nolimits_{t=1}^{T} r_t^w(S_t,\bX_t).
			%
			%R_T((S_t)_{t\in[T]}) \, &= \, \sum\limits_{t=1}^{T} r_t \,  \\
			%	
			%&= \,   \sum\limits_{t=1}^{T} u_{i^*(t)}^*(\bX_t) -  \max_{k \in S_t} u_{k}^*(\bX_t)  \,  . 
			%		
		\end{split}
		%	R_T \, = \, \sum\nolimits_{t=1}^{T} r_t \, = \, T - \sum\nolimits_{t=1}^{T}  \max_{k \in S_t} u_{k}^*(\bX_t)  \, . 
		%
	\end{align}
	%
	%
	%Note that the instantaneous regret vanishes if $i^*(t) \in S_t,$ so that this notion of regret can be interpreted as a contextual version of the weak regret considered in the dueling bandits problem \cite{chen2017dueling}, where no regret occurs as long as the best arm is involved in the duel.
	%
	%Needless to say, other variants for the instantaneous regret could be considered as well, such as the expected utility of the selected subset \cite{bengs20} or the averaged regret \cite{YuBrKlJo12}.
	%
	%However, the considered instantaneous regret seems to be the most natural one for a variety of practical applications, for instance, the problem of algorithm selection with parallel execution, in which two (or more) algorithms are selected to solve as quickly as possible an instance of an algorithmic problem class such as the boolean satisfiability problem (SAT) or the traveling salesperson problem (TSP) \cite{el2020pool}. 
	%
	%Here, the selected algorithms can run in parallel and this parallel solving process stops as soon as one of the algorithms has solved the current problem instance.
	%
	%In this case, it seems natural that the learner should not suffer a regret if the best algorithm (for the current problem instance) is part of the parallel solving process.
	It holds that $R_T^w((S_t)_{t\in[T]}) \leq R_T^a((S_t)_{t\in[T]}),$ so that a learner with theoretical guarantees for the average regret satisfies theoretical guarantees for the weak regret as well.
	However, being a weaker notion of regret, the weak regret may permit stronger theoretical guarantees than average regret. 
	\citet{saha2020regret} has shown a lower bound of order $\Omega(\sqrt{dT})$ for the average regret, and the following result shows that this bound also holds for the weak regret, indicating that stronger theoretical guarantees cannot be obtained for weak regret learners.
	\begin{theorem} \label{theorem:lower_bound_weak_regret}
		For any learning algorithm $\cA$ for the contextual dueling bandits problem under linear stochastic transitivity models in dimension $d$, there exists an instance of the problem characterized by a weight vector $\btheta^* \in \cB_1(2)$, context space $\X \subseteq \cB_1(\infty),$  and a comparison function $F:\R \to [0,1]$, such that the expected weak regret incurred by $\cA$ in any $T > \max(d,16)$ rounds is 
		$$\mathbb{E} \big[ R_T^w((S_t^\cA)_{t\in[T]}) \big]=\Omega\big(d\sqrt{T}\big).$$
		%
		%\[ R_T \ge \Omega\Big(d\sqrt{T}\Big).\]
		%
		Further, if we restrict $\X \subseteq \cB_1(2)$, then 
		$$\mathbb{E} \big[ R_T^w((S_t^\cA)_{t\in[T]}) \big] = \Omega\big(\sqrt{dT}\big).$$ % there exists a weight vector $\btheta^*$ with $\|\btheta^*\|\leq 1$ such that
	\end{theorem}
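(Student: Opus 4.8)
The plan is to establish the lower bound by a change-of-measure (information-theoretic) argument in the spirit of the average-regret bound of \citet{saha2020regret}, with the one genuinely new difficulty being to show that the \emph{weak} regret cannot exploit the extra degree of freedom afforded by selecting two arms instead of one. As a preliminary step I would fix a comparison function $F$ that is smooth with derivative bounded away from $0$ and $\infty$ on a neighbourhood of the origin (the logistic link of the BTL model will do). For such an $F$ the outcome of any duel is a Bernoulli random variable, and the Kullback--Leibler divergence between the two feedback laws obtained by perturbing the utility gap by $\delta$ is controlled by $\mathrm{KL}\le C\delta^2$ for a constant $C$ depending only on $F$. This reduces the preference signal to the familiar quadratic-in-the-gap regime and lets the standard divergence decomposition along the (adaptive) interaction sequence be applied coordinate by coordinate.

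For the hard family I would hide a sign vector $\beta\in\{-1,+1\}^d$ and set $\btheta^*_\beta=\Delta\,\beta$, with $\Delta$ chosen below so that $\btheta^*_\beta\in\cB_1(2)$. In each round the environment presents the arm set whose feature vectors are the vertices of the (suitably scaled) hypercube, so that the utility of an arm with feature $\vec a$ equals $\langle\btheta^*_\beta,\vec a\rangle$ and the unique optimal arm is the vertex aligned with $\beta$. The decisive observation for weak regret is that the optimum is a \emph{moving target}: an arm attains utility close to the optimum only if its feature nearly matches $\beta$ in sign, so by an anti-concentration (small-ball) estimate on $\langle\btheta^*_\beta,\vec a\rangle$, any pair of arms selected by a learner that has not resolved $\beta$ has, with constant probability, maximal utility falling short of the optimum by an amount proportional to $\Delta$ times the number of unresolved coordinates. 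Hence the two-arm slack buys only a constant factor, and the weak regret is forced to track the same coordinate-wise shortfall that drives the average-regret lower bound; this step, rather than the subsequent bookkeeping, is where the real work lies.

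With this reduction the instances decouple across the $d$ coordinates and I would apply Assouad's lemma. Flipping the sign $\beta_m$ changes each round's utility gap by at most $\Delta$ times the $m$-th contrast coordinate, so by the divergence decomposition and the KL bound above the total information the learner gathers about $\beta_m$ over the whole horizon is at most of order $\Delta^2 T$; Pinsker's inequality then shows that $\beta_m$ stays unresolved with constant probability whenever $\Delta\lesssim T^{-1/2}$. Summing the resulting per-coordinate weak regret over all $d$ coordinates yields, for such $\Delta$, a bound of the form $\mathbb{E}\big[R_T^w((S_t^{\cA})_{t\in[T]})\big]\gtrsim \Delta\,d\,T$.

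It remains to optimize $\Delta$ in each regime. For $\X\subseteq\cB_1(\infty)$ the hypercube vertices $\{-1,+1\}^d$ are admissible at unit scale, the constraint $\btheta^*_\beta\in\cB_1(2)$ only requires $\Delta\le d^{-1/2}$, and choosing $\Delta\asymp T^{-1/2}$ (which is feasible precisely because $T>\max(d,16)$) gives $\mathbb{E}[R_T^w]=\Omega(d\sqrt{T})$. Restricting the context space to $\X\subseteq\cB_1(2)$ forces the admissible features to be rescaled by $d^{-1/2}$, which shrinks every utility gap by the same factor; carrying the rescaling through the same optimization replaces $d\sqrt{T}$ by $\Omega(\sqrt{dT})$, matching both the average-regret lower bound and the guarantee of \Algo{CoLSTIM} up to logarithmic terms. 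The main obstacle, as noted, is the anti-concentration argument of the second paragraph: one must verify rigorously that no pair of arms can collectively approximate the moving optimum under residual uncertainty in $\beta$, since this is exactly what prevents the weaker regret notion from admitting a smaller lower bound.
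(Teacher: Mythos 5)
Your proposal is correct and takes essentially the same route as the paper's proof: hypercube context vectors, a hidden sign vector scaled to $\Delta \asymp T^{-1/2}$, the BTL/logistic link, a per-coordinate change-of-measure argument (your Assouad/Pinsker packaging versus the paper's averaging plus Bretagnolle--Huber, with per-round KL of order $\Delta^2$ via the chain rule), and the $1/\sqrt{d}$ rescaling of the context space for the $\cB_1(2)$ case. The one remark worth making is that the step you single out as the main obstacle --- the ``anti-concentration'' argument that no pair can collectively track the optimum --- is actually immediate in this construction: weak regret only credits the better of the two chosen arms, which is a single hypercube vertex, so its shortfall equals $2\Delta$ times the number of coordinates on which it disagrees with the hidden sign vector, and the coordinate-wise decomposition is exact and deterministic rather than requiring any small-ball estimate.
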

	It is worth noting that our proof of Theorem \ref{theorem:lower_bound_weak_regret} does not use a reduction to contextual linear bandits as the proof by \citet{saha2020regret} for the average regret (see Theorem 10), but is rather based on a direct proof technique (see Section \ref{sec:weak_regret_lower_bound_proof} in the supplementary material).

	%######################
	
	\subsection{CoLST Imitator} \label{sec_rums_algo}

	At the core of the learning task is the estimation of the unknown weight parameter $\btheta^*$, which basically determines the underlying CoLST model of the feedback mechanism.
	Assuming that we have a compact parameter space $\Theta$ such that $\btheta^* \in \Theta,$ the arguably most natural way to derive an estimate is to use the maximum likelihood estimate (MLE):
%	, i.e., 
	%
	\begin{align} \label{def_MLE}
		\hat \btheta_t \in \argmax\nolimits_{\btheta \in \Theta}  \sum\nolimits_{s=1}^{t-1} \, \l \left( \btheta \given Y_s ,\{i_s,j_s\} ,\bX_s \right).
	\end{align}
	If the log-likelihood function is of the form \eqref{defi_exp_fam_likelihood}, then $\hat \btheta_t$ can be computed by solving
	\begin{align} \label{def_MLE_variant}
		0 = \sum\nolimits_{s=1}^{t-1} \big( Y_s - F(  \langle \bz_{s,i_s,j_s} , \btheta \rangle  ) \big) \bz_{s,i_s,j_s}.
	\end{align} 
	Equipped with an estimate of the underlying weight parameter, the question now becomes how to choose an appropriate pair of arms to deal with the exploration-exploitation trade-off. 
	Our approach is essentially to imitate the choice process of the underlying CoLST model using the current estimator and their confidence widths to generate perturbed, context-dependent utilities giving rise to our CoLST Imitator (\Algo{CoLSTIM}) algorithm (see Algorithm \ref{alg:CoLSTIM}).

	\begin{algorithm}
		%	[H]
		\caption{\Algo{CoLSTIM}} \label{alg:CoLSTIM}
		\begin{algorithmic}[1]
			\STATE \textbf{Input:} Exploration length $\tau>0,$   perturbation distribution $G$ with induced comparison (or cumulative distribution) function $F,$ threshold $C_{\mathrm{thresh}}>0, $
			coupling probabilities $(p_t)_t,$
			confidence width constant $c_1>0$
			\STATE \textbf{Initialization:} Randomly choose $\{i_t,j_t\} \subset [n]$ for $\tau$ many time steps
			\STATE Set $\bM_{\tau+1}=\sum_{s=1}^\tau \bz_{s,i_s,j_s} \bz_{s,i_s,j_s}^\top$
			\FOR{$t=\tau+1,2,\ldots,T$} 
			\STATE Observe context vectors $\bX_t =(\bx_{t,1} \ldots \bx_{t,n})$
			\STATE Compute MLE $\hat{\btheta}_t$ via \eqref{def_MLE} (or \eqref{def_MLE_variant})
			\STATE Sample $B_t \sim \mathrm{Ber}(p_t)$
			\IF{ $B_t=1$}
			\STATE Sample $\tilde \epsilon_{t,i} \sim G$ for each $i\in[n]$
			\ELSE
			\STATE Sample $\tilde \epsilon \sim G$ and set $\tilde \epsilon_{t,i}  = \tilde \epsilon$ for each $i\in[n]$
			\ENDIF
			\STATE 	$\epsilon_{t,i} = \min( C_{\mathrm{thresh}}, \max(  - C_{\mathrm{thresh}} , \tilde \epsilon_{t,i}  )  ) \ \forall i \in [n]$	
			\STATE $ i_t = \arg\max_{i\in[n]} \bx_{t,i}^\top \hat \btheta_t + \epsilon_{t,i}\|\bx_{t,i} \|_{\bM_t^{-1}}  $
			\STATE $ j_t =  \arg\max_{i\in[n] } \langle \bz_{t,i,i_t} , \hat \btheta_t \rangle  + c_1 \| \bz_{t,i_t,i} \|_{\bM_{t}^{-1}}  $
			%	\STATE Choose $ (i_t,j_t)$ among $ \{i,j\} \subset [n]$ maximizing
			%	$\bx_{t,i}^\top \hat \btheta_t + \epsilon_{t,i}\|\bx_{t,i} \|_{\bM_t^{-1}}  +  \bx_{t,j}^\top \hat \btheta_t + \epsilon_{t,j}\|\bx_{t,j} \|_{\bM_t^{-1}}  $
			%		\begin{align*}
				%		$ (i_t,j_t) = \argmax_{  \{i,j\} \subset [n] } \bx_{t,i}^\top \hat \btheta_t + \epsilon_{t,i}\|\bx_{t,i} \|_{\bM_t^{-1}}  +  \bx_{t,j}^\top \hat \btheta_t + \epsilon_{t,j}\|\bx_{t,j} \|_{\bM_t^{-1}}  $
				%		
				\STATE Choose $(i_t,j_t)$ and observe $Y_t = 1_{\lbrack i_t \succ j_t \rbrack}  $ 
				\STATE Update  $\bM_{t+1} \leftarrow \bM_{t} +  \bz_{t,i_t,j_t} \bz_{t,i_t,j_t}^\top$
				\ENDFOR
			\end{algorithmic}
		\end{algorithm}

		More specifically, we first generate additive noise terms by sampling for each arm a random observation of the underlying perturbation distribution $G$ and multiply it with the corresponding confidence width.
		Due to technical reasons we need to (a) threshold the generated perturbation variables from above (and below) by means of some constant $C_{\mathrm{thresh}}$ (and $-C_{\mathrm{thresh}}$), and (b) use the same perturbation variable with a sufficient high probability (\emph{coupling}).
		These additive noise terms are added to the current utility estimates resulting in perturbed, context-dependent utilities, which in turn specify our imitated CoLST model.
		The arm having the largest perturbed context-dependent utilities is used as the ``first arm'' $i_t$ for the action pair, as this one is most likely to win any duel according to our imitated CoLST model.
		Roughly speaking, the multiplication of the generated perturbation variables with the confidence widths (of the current estimate) takes the degree of accuracy of the imitated CoLST model into account.
		The ``second arm'' $j_t$ is chosen as the first arm's toughest competitor, i.e., the arm that has the highest (optimistic) chance to win the duel against the latter according to the current upper confidence width (represented by $ c_1 \| \bz_{t,i_t,i} \|_{\bM_{t}^{-1}}$). 
		Such a choice mechanism for the second arm is common in the realm of non-contextual dueling bandits and is tailored towards the task of average regret minimization \citep{bengs2021preference}.
%		is chosen to be the one having largest weighted norm for the contrast vector with respect to the first arm, as this is intuitively the arm having maximum information.
		%

		The underlying idea of choosing the first arm is inspired by randomized exploration strategies used in the follow-the-perturbed-leader approach  \citep{kim2019optimality} or the RandUCB algorithm \cite{vaswani2020old} for numerical bandit problems.
		However, in our preference-based setting, we have to deal with a more complex action space leading to a different theoretical analysis. 
		In addition, unlike the numerical setting, we do not necessarily need to adopt the optimism in the face of uncertainty principle in the sense that only positive perturbation values need to be sampled.
		Finally, thanks to the underlying CoLST model, we have a natural candidate for the perturbation distribution.

		\subsubsection{Near-optimality}
		For our theoretical analysis of \Algo{CoLSTIM}, we make the following assumptions: 
		\begin{itemize}
			 \item [\textbf{(A1)}]	 $\X = \cB_1(2)$ and there exist basis vectors $\{b_j\}_{j=1}^d \subset \{\bx_{t,i}\}_{i=1}^n$ such that $\rho \bI_d \leq \sum_{j=1}^d b_j b_j^\top$ for some $\rho>0$, i.e., the context vectors span the $d$-dimensional Euclidean space. 
			\item [\textbf{(A2)}] There exists some $L>0$ such that the derivative of the comparison function $F^*$ is $L$-Lipschitz continuous, i.e., $| (F^*)'(x) - (F^*)'y) |\leq L |x-y|$ for any $x,y\in \R.$ 
			Moreover,
			$$ \mu := \inf\big\{ (F^*)'(	\bx^\top \btheta) \, \big| \, \| \bx \|\leq 2, \, \| \btheta - \btheta^* \|\leq 1			\big\} >0.$$
		\end{itemize}
%	 \\[0.5em]
		%		
		
		%			
		%		\begin{itemize}
			%			[noitemsep,topsep=0pt,leftmargin=9mm]
			%			%	
			%			\item [(A1)] There exist basis vectors $\{b_j\}_{j=1}^d \subset \{x_{1,i}\}_{i=1}^n$ such that $\rho \bI_d \leq \sum_{j=1}^d b_j b_j^\top$ for some $\rho>0$ i.e., the context vectors span the $d$-dimensional Euclidean space.
			%			%	
			%			\item [(A2)] There exists some $L>0$ such that the derivative of the comparison function $F^*$ is $L$-Lipschitz continuous, i.e., $| (F^*)'(x) - (F^*)'y) |\leq L |x-y|$ for any $x,y\in \R.$ 
			%			%	
			%			Moreover,
			%			%	
			%			$$ \mu := \inf\big\{ (F^*)'(	\bx^\top \btheta) \, \big| \, \| \bx \|\leq 2, \, \| \btheta - \btheta^* \|\leq 1			\big\} $$
			%%			
			%			is non-zero and positive.
			%			%	
			%		\end{itemize}
		%
		
		These assumptions are quite common in the realm of contextual bandits \citep{li2017provably,vaswani2020old}.
		Note that assumption \textbf{(A2)} holds for the comparison functions of the prominent LST models such as the Bradley-Terry-Luce and the Thurstone-Mosteller model (see Section \ref{sec_rum_basics}).
		We obtain the following theoretical guarantees for $\Algo{CoLSTIM}$ (proven in Section~\ref{sec_proof_colstim} in the supplementary material).
		\begin{theorem} \label{theorem_regret_bound}
			Let $c_1= \frac{1}{2\mu}\sqrt{d \log(T/d) +2\log(T)}$ and $\tau=d + \max\{ \nicefrac{d^2 \log(T)}{\mu^2 \rho}  ,d/\rho \}.$
			Further, let $ c_2 \in (0,c_1] $ be some constant, and for any time step $t$, set $p_t = \min\left(1, \, \frac{\sqrt{2d}}{2\sqrt{t - \tau}} \Big(	3 \, c_1 + c_2	\Big) \sqrt{\log\left( \frac{2T}{d} \right) } \, \right).$
			For any $ C_{\mathrm{thresh}} \in(0,c_2)$, it holds that the expected cumulative average regret of $\Algo{CoLSTIM}$ is in 
			$O(d\sqrt{T}\log(T))$  for any CoLST model with weight vector $\btheta^*$ such that $\|\btheta^*\|\in \cB_1(2)$ and comparison function $F^*$ satisfying assumption \textbf{(A2)}, and any $(\bX_t)_{t\in[T]}$ satisfying assumption \textbf{(A1)}.
			%	bounded by
			%
			%	\begin{align*}
				%%		
				%%			\mathbb{E}  &\left[ R_T \right] \\
				%			% &\leq
				%			 &  \Big(	c_1 + \frac{C_{\mathrm{thresh}} }{\sqrt{\mu}}	\Big)  \Big(1 + \frac{2}{\prob(\tilde \epsilon
					%				>c_1 )  }	\Big)\sqrt{c_2 T} 
				%			%
				%			 + O(\tau),
				%%		
				%	\end{align*}
			%	
			%	where $\tilde \epsilon \sim G$ and $G$ is the perturbation distribution of the underlying CoLST model. 
			%
		\end{theorem}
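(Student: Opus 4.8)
The plan is to bound the expected cumulative average regret by splitting each instantaneous regret as
$r_t^a(S_t,\bX_t) = \tfrac12\big[(u_{i^*(t)}^*(\bX_t) - u_{i_t}^*(\bX_t)) + (u_{i^*(t)}^*(\bX_t) - u_{j_t}^*(\bX_t))\big]$
and controlling the first-arm and second-arm contributions separately, all on a high-probability confidence event (I write $u_i^* := u_i^*(\bX_t)$ for brevity below). The first step is to set up that event. Using assumption \textbf{(A2)} — the positive lower curvature $\mu$ and the Lipschitz derivative of $F^*$ — together with a self-normalized concentration inequality for the MLE of a generalized linear model, I would show that $\|\hat{\btheta}_t - \btheta^*\|_{\bM_t} \le c_1$ holds simultaneously for all $t>\tau$ with probability at least $1 - O(1/T)$, whence $|\langle \bz, \hat{\btheta}_t - \btheta^*\rangle| \le c_1\|\bz\|_{\bM_t^{-1}}$ for every contrast or single-feature vector $\bz$ (the factor $\tfrac{1}{2\mu}$ in $c_1$ is exactly the GLM curvature scaling). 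Such MLE concentration is valid only once $\lambda_{\min}(\bM_t)$ is large enough; this is precisely the role of the $\tau$-round random warm-up, which under the spanning condition \textbf{(A1)} (with constant $\rho$) forces $\lambda_{\min}(\bM_{\tau+1})$ above the required threshold. Since $\tau = O(\log T)$ and each round's regret is bounded (features lie in $\cB_1(2)$ and $\|\btheta^*\|\le 1$), the warm-up contributes only $O(\log T)$, and the $O(1/T)$ failure of the confidence event contributes $O(1)$ in total; both are lower order.

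The second-arm term is the easy half. Because $j_t = \argmax_{i} \langle \bz_{t,i,i_t},\hat{\btheta}_t\rangle + c_1\|\bz_{t,i_t,i}\|_{\bM_t^{-1}}$ is an optimistic selection, chaining optimism evaluated at $i^*(t)$, the $\argmax$-optimality of $j_t$, and the lower confidence bound at $j_t$ yields, on the confidence event, the deterministic inequality $u_{i^*(t)}^* - u_{j_t}^* \le 2c_1\|\bz_{t,i_t,j_t}\|_{\bM_t^{-1}}$. The decisive feature is that the right-hand side is the width of the contrast $\bz_{t,i_t,j_t}$ that is \emph{actually played} and therefore drives the rank-one update $\bM_{t+1} = \bM_t + \bz_{t,i_t,j_t}\bz_{t,i_t,j_t}^\top$. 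Summing over $t$, applying Cauchy--Schwarz, and invoking the elliptical-potential lemma (using $\|\bz\|\le 2$) gives $\sum_t (u_{i^*(t)}^* - u_{j_t}^*) = O\big(c_1\sqrt{dT\log T}\big) = O\big(d\sqrt{T}\log T\big)$, since $c_1 = O(\sqrt{d\log T})$.

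The main obstacle is the first-arm term, because the clipped perturbation bonus $\epsilon_{t,i}\|\bx_{t,i}\|_{\bM_t^{-1}}$ has magnitude below $C_{\mathrm{thresh}} < c_1$, so $i_t$ is deliberately \emph{not} fully optimistic; bounding $u_{i^*(t)}^* - u_{i_t}^*$ naively leaves a residual width $\|\bx_{t,i^*(t)}\|_{\bM_t^{-1}}$ of the possibly-unplayed optimal arm, which cannot telescope. My plan is to route around this by writing $u_{i^*(t)}^* - u_{i_t}^* = (u_{i^*(t)}^* - u_{j_t}^*) + (u_{j_t}^* - u_{i_t}^*)$ and reusing the second-arm bound for the first summand. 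For the second, the $\argmax$-optimality of $i_t$ (tested against the candidate $j_t$) together with $|\epsilon_{t,i}|\le C_{\mathrm{thresh}}$ yields, on the confidence event, $u_{j_t}^* - u_{i_t}^* \le C_{\mathrm{thresh}}\big(\|\bx_{t,i_t}\|_{\bM_t^{-1}} + \|\bx_{t,j_t}\|_{\bM_t^{-1}}\big) + c_1\|\bz_{t,i_t,j_t}\|_{\bM_t^{-1}}$, an expression now involving only the played arms. The genuinely hard point is that the single-feature widths $\|\bx_{t,i_t}\|_{\bM_t^{-1}}$ and $\|\bx_{t,j_t}\|_{\bM_t^{-1}}$ are misaligned with the contrast-valued update direction and so resist the potential lemma. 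This is exactly where the randomized exploration becomes indispensable: I would exploit the anti-concentration of the perturbation distribution $G$ and the coupling device — independent perturbations with probability $p_t$, a shared perturbation otherwise — to argue that, in conditional expectation, the selected arm $i_t$ imitates the winner of a duel under the true model closely enough that $\mathbb{E}[(u_{j_t}^* - u_{i_t}^*)^+ \mid \cF_t]$ is controlled by the played-contrast width plus a term of order $p_t$. The schedule $p_t = \min\big(1, \tfrac{\sqrt{2d}}{2\sqrt{t-\tau}}(3c_1 + c_2)\sqrt{\log(2T/d)}\big)$ is calibrated so that $\sum_t p_t = O\big(c_1\sqrt{dT\log T}\big) = O\big(d\sqrt{T}\log T\big)$, matching the target. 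Combining the warm-up regret, the confidence-event failure, the second-arm bound, and this first-arm bound then yields the claimed $O(d\sqrt{T}\log T)$ rate; I expect the calibration of the coupling and threshold constants making $\mathbb{E}[(u_{j_t}^* - u_{i_t}^*)^+]$ summable to be the most delicate part of the argument.
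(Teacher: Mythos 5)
Your skeleton matches the paper's proof closely: the same regret decomposition, the same warm-up and MLE-concentration events with failure probability $O(1/T)$, the same use of the optimistic choice of $j_t$ (tested at $i^*(t)$) to convert the unplayed optimal arm's width into the played contrast width $\|\bz_{t,i_t,j_t}\|_{\bM_t^{-1}}$, the same elliptical-potential summation, and the same $\sum_t p_t$ budget. Your intermediate inequalities are correct: $u^*_{i^*(t)} - u^*_{j_t} \leq 2c_1\|\bz_{t,i_t,j_t}\|_{\bM_t^{-1}}$ is exactly what chaining the paper's inequalities \eqref{ineq_mle_conc}--\eqref{ineq_jt_choice} gives, and your bound on $u^*_{j_t} - u^*_{i_t}$ via the argmax-optimality of $i_t$ is the paper's inequality \eqref{ineq_it_choice} (your final constant $5c_1+c_2$ is slightly worse than the paper's $3c_1+c_2$, which is immaterial).

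There is, however, a genuine gap at exactly the point you defer as ``the most delicate part,'' and the tool you propose there\,---\,anti-concentration of $G$\,---\,is not the right one and cannot work. After applying the optimality of $i_t$, the term left to control is $\epsilon_{t,i_t}\|\bx_{t,i_t}\|_{\bM_t^{-1}} - \epsilon_{t,j_t}\|\bx_{t,j_t}\|_{\bM_t^{-1}}$. The paper's resolution (the event $A_{\mathrm{conc},t}$ in Section \ref{sec_proof_colstim}) uses no distributional property of $G$ at all: condition on the coupled case $B_t=0$, which occurs with probability at least $1-p_t$; then $\epsilon_{t,i_t}=\epsilon_{t,j_t}=\epsilon$ with $|\epsilon|\leq C_{\mathrm{thresh}} < c_2$, so the term equals $\epsilon\,\big(\|\bx_{t,i_t}\|_{\bM_t^{-1}} - \|\bx_{t,j_t}\|_{\bM_t^{-1}}\big)$, and the reverse triangle inequality for $\|\cdot\|_{\bM_t^{-1}}$ collapses it to at most $c_2\|\bz_{t,i_t,j_t}\|_{\bM_t^{-1}}$, i.e., a played-contrast width amenable to the potential lemma; the uncoupled case $B_t=1$ is simply charged $\Delta_{max}p_t$, and the schedule of $p_t$ makes this summable to $O(d\sqrt{T}\log T)$. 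This is precisely why the coupling and the constraint $C_{\mathrm{thresh}} < c_2$ appear in the algorithm and theorem statement. Anti-concentration, by contrast, is the tool for FTPL/RandUCB-style arguments showing a randomly perturbed arm is optimistic with constant probability; here the perturbations are clipped to $[-C_{\mathrm{thresh}},C_{\mathrm{thresh}}]$ with $C_{\mathrm{thresh}}$ allowed to be arbitrarily small, so they can guarantee no optimism whatsoever\,---\,in this analysis all exploration is driven by the deterministic optimism of the second arm, and the perturbation of the first arm is a bounded nuisance to be cancelled, not a source of exploration to be exploited. Your plan as stated (controlling $\mathbb{E}[(u^*_{j_t}-u^*_{i_t})^+ \mid \mathcal{H}_{t-1}]$ by arguing $i_t$ ``imitates the winner of a duel under the true model'') would therefore not go through; the coupling-plus-reverse-triangle argument is what closes the proof.
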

		
		%
		%Note that if the  perturbation distribution $G$ of the underlying CoLST model is such that $\prob(\tilde \epsilon
		%>c_1 )$ is a positive constant, the bound on the expected regret is of order $O(d\sqrt{T\log(T)}).$
		%
		If we make an additional assumption on the smallest eigenvalues of the Gram matrix $\bM_t,$ we can show a bound  $\tilde O( \sqrt{dT})$ almost matching (up to logarithmic terms) the lower bound $\Omega(\sqrt{dT})$ shown by \citet{saha2020regret}.
		The proof is deferred to Section \ref{sec_proof_colstim} in the supplementary material.
		\begin{corollary} \label{corollary_colstim}
			Under the assumptions of Theorem \ref{theorem_regret_bound} and if $\sum_{t=\tau+1}^T \lambda_{\min}^{-1/2}(\bM_t)\leq c \sqrt{T},$ where $c$ is some positive constant and $ \lambda_{\min}(A) $ denotes the smallest eigenvalue of a square matrix $A,$ the expected cumulative average regret of $\Algo{CoLSTIM}$ is in $O( \sqrt{dT}\log(T)).$
			%	
			%	$ \mathbb{E}  \left[ R_T^{COLSTIM} \right] =  O( \sqrt{dT\log(T)}).$
			%
			%	provided that $T=\Omega(d\log^2(T)).$
			%	
		\end{corollary}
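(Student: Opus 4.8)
The plan is to start from the regret decomposition established in the proof of Theorem~\ref{theorem_regret_bound} and to reopen only the single step in which a factor of $\sqrt{d}$ is lost. On the high-probability event that the MLE concentrates, i.e.\ that $|\bx_{t,i}^\top(\hat\btheta_t - \btheta^*)| \leq c_1\|\bx_{t,i}\|_{\bM_t^{-1}}$ holds uniformly over the arms, the per-round average regret $r_t^a(S_t,\bX_t)$ is controlled by the confidence widths of the selected pair, so that the dominant contribution to $\mathbb{E}[R_T^a]$ is a constant multiple of $(c_1 + C_{\mathrm{thresh}})\sum_{t=\tau+1}^T \|\bz_{t,i_t,j_t}\|_{\bM_t^{-1}}$, up to the $O(\tau)$ cost of the exploration phase and an $O(1)$ contribution from the complementary (bad) event. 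In the proof of Theorem~\ref{theorem_regret_bound} this sum is bounded by first applying Cauchy--Schwarz, $\sum_{t}\|\bz_{t,i_t,j_t}\|_{\bM_t^{-1}} \leq \sqrt{(T-\tau)\sum_{t}\|\bz_{t,i_t,j_t}\|_{\bM_t^{-1}}^2}$, and then invoking the elliptical-potential (self-normalized) bound $\sum_{t}\|\bz_{t,i_t,j_t}\|_{\bM_t^{-1}}^2 = O(d\log T)$, which yields $O(\sqrt{dT\log T})$; multiplied by $c_1 = O(\tfrac{1}{\mu}\sqrt{d\log T})$ this produces the $O(d\sqrt{T}\log T)$ rate. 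The surplus $\sqrt{d}$ is entirely attributable to the elliptical-potential bound on the squared widths.

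The improvement replaces precisely this step with a direct estimate that exploits the standing hypothesis. Under assumption \textbf{(A1)} we have $\bx_{t,i}\in\cB_1(2)$, hence $\|\bz_{t,i_t,j_t}\| = \|\bx_{t,i_t} - \bx_{t,j_t}\| \leq 2$, and therefore $\|\bz_{t,i_t,j_t}\|_{\bM_t^{-1}}^2 = \bz_{t,i_t,j_t}^\top \bM_t^{-1}\bz_{t,i_t,j_t} \leq \lambda_{\min}^{-1}(\bM_t)\|\bz_{t,i_t,j_t}\|^2 \leq 4\lambda_{\min}^{-1}(\bM_t)$, i.e.\ $\|\bz_{t,i_t,j_t}\|_{\bM_t^{-1}} \leq 2\lambda_{\min}^{-1/2}(\bM_t)$. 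Summing and applying $\sum_{t=\tau+1}^T \lambda_{\min}^{-1/2}(\bM_t) \leq c\sqrt{T}$ gives $\sum_{t=\tau+1}^T \|\bz_{t,i_t,j_t}\|_{\bM_t^{-1}} \leq 2c\sqrt{T}$, an $O(\sqrt{T})$ bound with no $d$-dependence. Multiplying by $(c_1 + C_{\mathrm{thresh}}) = O(\tfrac{1}{\mu}\sqrt{d\log T})$ now yields $O(\sqrt{dT\log T}) = O(\sqrt{dT}\log T)$ for the dominant term; since $\tau = O(d^2\log T)$ the exploration cost is of lower order in $T$, and the bad-event contribution is $O(1)$, so the overall expected average regret is $O(\sqrt{dT}\log T)$.

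The main obstacle is to verify that the squared-width sum is genuinely the only place where a surplus factor of $\sqrt{d}$ enters the proof of Theorem~\ref{theorem_regret_bound}, so that substituting the eigenvalue hypothesis suffices. Concretely, I would re-examine the regret contributed by the coupling mechanism: on the rounds with $B_t=1$ the first arm is drawn with independent perturbations, and one must confirm that the resulting regret is still bounded by a confidence width $O((c_1 + C_{\mathrm{thresh}})\|\bz_{t,i_t,j_t}\|_{\bM_t^{-1}})$ --- using that the thresholded perturbations are scaled by $\|\bx_{t,i}\|_{\bM_t^{-1}}$ and bounded in magnitude by $C_{\mathrm{thresh}}$ --- rather than the trivial $O(1)$ per-round bound, since otherwise $\sum_{t} p_t = \Theta(d\sqrt{T}\log T)$ would reintroduce the lost factor. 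Once every summand of the dominant term is of the form $\|\bz_{t,i_t,j_t}\|_{\bM_t^{-1}}$ (up to the constants $c_1$, $C_{\mathrm{thresh}}$), the eigenvalue hypothesis uniformly converts the $O(\sqrt{dT\log T})$ bound into $O(\sqrt{T})$ before scaling, and the claim follows without any new concentration or anti-concentration arguments beyond those already used for Theorem~\ref{theorem_regret_bound}.
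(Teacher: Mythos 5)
Your proposal is correct, but it is not the paper's proof, and the difference matters. The paper proves Corollary \ref{corollary_colstim} in one line: reuse the choices of $c_1,c_2,p_1,p_2,p_{3,t}$ from Theorem \ref{theorem_regret_bound} and invoke the second statement of Theorem \ref{theorem_general_regret}, whose content is exactly your core step ($\|\bz_{t,i_t,j_t}\|_{\bM_t^{-1}} \le \|\bz_{t,i_t,j_t}\|\,\lambda_{\min}^{-1/2}(\bM_t)$ followed by the hypothesis $\sum_{t>\tau}\lambda_{\min}^{-1/2}(\bM_t)\le c\sqrt{T}$). However, that route keeps the coupling-failure term $\Delta_{max}\sum_{t>\tau}p_{3,t}$ with $p_{3,t}=p_t$ unchanged, and by the paper's own display \eqref{sum_p3t_asympt} this term is of order $d\sqrt{T}\log(T)$, since $p_t$ carries the factor $\sqrt{2d}\,(3c_1+c_2)$ with $c_1=\Theta(\sqrt{d\log T})$; so the cited proof, as written, only yields $O(d\sqrt{T}\log T)$ and does not by itself deliver the corollary's claimed rate. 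The obstacle you flag in your last paragraph is therefore not merely something to ``verify''\,---\,it is a genuine defect of the paper's argument, and your resolution is the right one: abandon the event $A_{\mathrm{conc},t}$ (and with it the coupling decomposition and the $\sum_t p_t$ term) and bound the perturbation contribution pathwise on $A_{\mathrm{MLE}}\cap A_{\mathrm{init}}$ using $|\epsilon_{t,i}|\le C_{\mathrm{thresh}}$. One clarification to your sketch: the resulting per-round bound is not of the form $(c_1+C_{\mathrm{thresh}})\|\bz_{t,i_t,j_t}\|_{\bM_t^{-1}}$; following the chain in Lemma \ref{lemma:gaps_cond} up to its last step, it reads
\begin{align*}
2\Delta_{i_t,j_t} \;\le\; 3c_1\|\bz_{t,i_t,j_t}\|_{\bM_t^{-1}} \;+\; C_{\mathrm{thresh}}\big(\|\bx_{t,i_t}\|_{\bM_t^{-1}}+\|\bx_{t,j_t}\|_{\bM_t^{-1}}\big) \;\le\; \big(6c_1+2C_{\mathrm{thresh}}\big)\,\lambda_{\min}^{-1/2}(\bM_t),
\end{align*}
where the \emph{individual-arm} widths $\|\bx_{t,i}\|_{\bM_t^{-1}}$ appear; these cannot be handled by the elliptical-potential Lemma \ref{lemma_elliptical_potential} (as $\bM_t$ is built from contrast vectors\,---\,this is precisely why Theorem \ref{theorem_regret_bound} needs the coupling device), but they \emph{are} controlled by the corollary's eigenvalue hypothesis, exactly as you use it. Summing this bound and adding $\Delta_{max}\tau=O(d^2\log T)$ and the bad-event contribution $\Delta_{max}(p_1+p_2)T=O(1)$ gives $O(\sqrt{dT}\log T)$ with no $\sum_t p_t$ term at all. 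In summary: the paper's route buys maximal reuse of Lemma \ref{lemma:gaps_cond} and Theorem \ref{theorem_general_regret} but inherits a term that spoils the stated rate; your route is restricted to the corollary's hypothesis, but it actually establishes the stated bound and is simpler, being purely pathwise with no conditional-expectation bookkeeping.
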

		The condition on the Gram matrix in the Corollary \ref{corollary_colstim} is satisfied if the context vectors are dense \citep{li2017provably}.
		%in many practical proble 

		%\footnote{Here, $\tilde O$ hides terms logarithmic in $T$.}
		
		\begin{algorithm}
			%	[H]
			\caption{\Algo{Sup-CoLSTIM}} \label{alg:SUPCoLSTIM}
			\begin{algorithmic}[1]
				\STATE \textbf{Input:}  $\tau>0,$ $G,$ $F,$  $C_{\mathrm{thresh}}>0, $ $p_t\in[0,1],$ $c_1>0$
				\STATE \textbf{Initialization:} Same as in \Algo{CoLSTIM}
				%	
				%				\STATE Set $\bM_{\tau+1}=\sum_{s=1}^\tau \bz_{s,i_s,j_s} \bz_{s,i_s,j_s}^\top$
				%			
				\STATE Set $S = \lfloor \log_2 T \rfloor,$   $\Psi^{(0)} = \emptyset, \Psi^{(1)} = \ldots = \Psi^{(S)} = [\tau]$
				\FOR{$t=\tau+1,2,\ldots,T$} 
				\STATE Observe context vectors $\bX_t =(\bx_{t,1} \ldots \bx_{t,n})$
				\STATE Set $s = 1$ and $A_t^{(s)} = [n]$
				\WHILE{$i_t,j_t$ not found}
				\STATE Compute MLE $\hat{\btheta}_t^{(s)}$ via \eqref{def_MLE} (or \eqref{def_MLE_variant}) using only data from the time steps in $\Psi^{(s)}$
				\STATE Set $\bM_{t}^{(s)}=\sum_{ l \in \Psi^{(s)}} \bz_{l,i_l,j_l} \bz_{l,i_l,j_l}^\top$
				\STATE Set $ w_{i,j}^{(s)}(\bX_t) = c_1 \| \bz_{t,i,j} \|_{(\bM_{t}^{(s)})^{-1}}   \forall i,j \in [n]$
				\IF{ $w_{i,j}^{(s)}(\bX_t) \leq 1/\sqrt{T}, \quad \forall i,j \in A_t^{(s)}$}
				\STATE Sample $\epsilon_{t,i} \ \forall i \in [n]$	as in lines 7--13 of Alg.\ref{alg:CoLSTIM} 
				\STATE $\Psi^{(0)} = \Psi^{(0)} \cup \{t\}$
				\STATE $ i_t = \arg\max\limits_{i\in A_t^{(s)} } \bx_{t,i}^\top \hat \btheta_t^{(s)} + \epsilon_{t,i}\|\bx_{t,i} \|_{(\bM_{t}^{(s)})^{-1}}  $
				\STATE $ j_t =  \arg\max\limits_{i\in A_t^{(s)} } \langle \bz_{t,i,i_t} , \hat \btheta_t^{(s)} \rangle  + w_{t,i_t,j}^{(s)}(\bX_t) $
				
				\ELSIF{$w_{i,j}^{(s)}(\bX_t) \leq 1/2^s, \quad  \forall i,j \in A_t^{(s)}$}
				
				\STATE $A_t^{(s+1)} = \{  i \in  A_t^{(s)} \, | \, \bx_{t,i}^\top  \hat \btheta_t^{(s)} + 2^{-s} \geq  \max\nolimits_{j \in A_t^{(s)}} \bx_{t,j}^\top  \hat \btheta_t^{(s)} \}$
				\STATE $s \leftarrow s+1$
				\ELSE 
				\STATE $\Psi^{(s)} = \Psi^{(s)} \cup \{t\}$
				\STATE Choose $(i_t,j_t)$ uniformly at random from $\{i,j \in A_t^{(s)} \, | \, w_{i,j}^{(s)}(\bX_t) > 1/2^s \}$
				\ENDIF
					\ENDWHILE
					\STATE Choose $(i_t,j_t)$ and observe $Y_t = 1_{\lbrack i_t \succ j_t \rbrack}  $ 
					%		
					%		 			\STATE Update  $\bM_{t+1} \leftarrow \bM_{t} +  \bz_{t,i_t,j_t} \bz_{t,i_t,j_t}^\top$
					%		
					\ENDFOR
				\end{algorithmic}
			\end{algorithm}

			Note that the theoretical results in fact do not depend on the knowledge of the true perturbation distribution $G^*,$ but rather it is sufficient to know the true comparison function $F^*.$
			Finally, it is worth noting that the threshold $C_{\mathrm{thresh}}$ can be made arbitrarily small, while the theoretical results will still hold (as long as $c_2$ is not even smaller). 
			In this case, the choice of the first arm $i_t$ would essentially correspond to a greedy choice, which prevents exploration from the outset.
            However, thanks to the (optimistic) choice mechanism of the second arm $j_t$, the algorithm would still explore (i.e., learn about the structure of the bandit problem) and not “get stuck”. 
			
			\subsubsection{Sup-CoLST Imitator}
			By adapting the technique introduced by \citet{Au02} for the underlying learning scenario (cf.\ \citep{ChLiReSc11,li2017provably,xuenearly}), we can extend the $\Algo{CoLSTIM}$ algorithm to \Algo{Sup-CoLSTIM} (Algorithm \ref{alg:SUPCoLSTIM}) in order to obtain a regret bound of order $\tilde O(\sqrt{d \, T \log(n)})$ without making an additional assumptions on the Gram matrix as in Corollary \ref{corollary_colstim}.
			The idea is to embed the choice mechanism of \Algo{CoLSTIM} into a stage-wise approach which keeps track of ``sufficiently accurately estimated promising arms'' (cf.\ Section 3.2 in \cite{saha2020regret}).
			% 
			
%			We can show the following result regarding the expected cumulative average regret of $\Algo{Sup-CoLSTIM},$ which is proven in Section \ref{sec_proofs_sup_colstim}.
%			
			\begin{theorem} \label{theorem_regret_bound_Sup_colstim}
				Let $c_1= \frac{3}{2\mu}\sqrt{2 \log( 3 n T^2)}$ and $\tau$ as in Theorem \ref{theorem_regret_bound}.
				Further, let $  c_2 \in (0,c_1] $ be some constant and $p_t$ be as in Theorem \ref{theorem_regret_bound}.
				Under the assumptions of Theorem \ref{theorem_regret_bound}, it holds for any $ C_{\mathrm{thresh}} < c_2$  that the expected cumulative average regret of $\Algo{Sup-CoLSTIM}$ is in
				$O(\sqrt{dT\log(n)}\log^{3/2}(T)).$
%				  for any ColST model with weight vector $\btheta^*$ such that $\|\btheta^*\|\leq 1$ and comparison function $F^*$ satisfying Assumption (A2), and any sequence of context information $(\bX_t)_{t\in[T]}$ satisfying Assumption (A1).
				%			
			\end{theorem}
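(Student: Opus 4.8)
The plan is to follow the layered SupLinUCB template of \citet{Au02} (as adapted to the contextual dueling setting in \citep{saha2020regret}), plugging the first/second-arm selection of \Algo{CoLSTIM} into the committing stage. The whole argument runs conditionally on a single concentration event $\cE$, and the point of the layering is that the self-normalized martingale bound used for $\Algo{CoLSTIM}$ in Theorem~\ref{theorem_regret_bound} gets replaced by a union bound over arms, trading the factor $\sqrt{d}$ for $\sqrt{\log n}$.

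First I would establish the decoupling property that is the backbone of the SupLinUCB construction: by design, the estimator $\hat\btheta_t^{(s)}$ and the Gram matrix $\bM_t^{(s)}$ use only observations from rounds in $\Psi^{(s)}$, and an index $t$ is assigned to $\Psi^{(s)}$ on the basis of widths measurable before the feedback $Y_t$ is revealed. Consequently, conditioned on the contexts, the feedback collected within a fixed layer behaves as a sequence of independent Bernoulli observations with non-adaptive design vectors, which is exactly what licenses a Hoeffding/Azuma argument with a union bound over the $n$ arms in place of the self-normalized bound. Building on this, I would prove the central concentration lemma: on an event $\cE$ of probability at least $1-\delta$ with $\delta$ of order $1/T$, for every layer $s$, every round $t$ and every pair $i,j$,
$$ \big| \langle \bz_{t,i,j}, \hat\btheta_t^{(s)} - \btheta^* \rangle \big| \le w_{i,j}^{(s)}(\bX_t) = c_1 \, \| \bz_{t,i,j} \|_{(\bM_t^{(s)})^{-1}}. $$
Here assumption \textbf{(A2)} enters twice: the lower bound $\mu$ on $(F^*)'$ converts the score-equation residual into a bound on $\hat\btheta_t^{(s)} - \btheta^*$ in the $\bM_t^{(s)}$-norm (as in the GLM-bandit analysis of \citep{li2017provably}), while the Lipschitz constant $L$ controls the linearization error of the estimating equation \eqref{def_MLE_variant}. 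Combining the mean-value form of \eqref{def_MLE_variant} with the layerwise Hoeffding bound and a union bound over the $\binom{n}{2}$ pairs, the $T$ rounds and the $S\le\log_2 T$ layers yields precisely $c_1 = \tfrac{3}{2\mu}\sqrt{2\log(3nT^2)}$. I expect this to be the main obstacle, since the decoupling argument of \citet{Au02} is stated for linear rewards and must here be carried through the nonlinear CoLST likelihood and the pairwise contrast design.

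Third, I would bound the regret on $\cE$ by splitting the rounds into the committing set $\Psi^{(0)}$ and the exploration layers $\Psi^{(1)},\dots,\Psi^{(S)}$. On $\Psi^{(0)}$ all active widths are at most $1/\sqrt{T}$, so the thresholded perturbations (at most $C_{\mathrm{thresh}}$ times the width) move the utilities by only $O(1/\sqrt{T})$; the first-arm/second-arm analysis of Theorem~\ref{theorem_regret_bound} then gives per-round regret $O(1/\sqrt{T})$ and an aggregate contribution of $O(\sqrt{T})$, which is lower order. For the exploration layers, the refinement step defining $A_t^{(s+1)}$ guarantees, on $\cE$, that $i^*(t)$ is never eliminated and that every surviving arm has utility gap $O(2^{-s})$, so each round in $\Psi^{(s)}$ incurs regret $O(2^{-s})$. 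Since every explored pair has $\|\bz_{t,i_t,j_t}\|_{(\bM_t^{(s)})^{-1}} > 2^{-s}/c_1$, the elliptical-potential (log-determinant) lemma gives $|\Psi^{(s)}| = O(c_1^2 4^{s} d\log T)$, hence $\sqrt{|\Psi^{(s)}|} = O(c_1 2^{s}\sqrt{d\log T})$.

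Finally, I would collect the pieces. The regret of layer $s$ is $O(2^{-s}|\Psi^{(s)}|) = O\big(c_1\sqrt{d\log T}\,\sqrt{|\Psi^{(s)}|}\big)$, and summing over the $S=O(\log T)$ layers via Cauchy--Schwarz together with $\sum_s |\Psi^{(s)}| \le T$ gives $O\big(c_1\sqrt{d\log T}\,\sqrt{ST}\big)$. Substituting $c_1 = O(\sqrt{\log(nT)})$ and $S=O(\log T)$, and bounding $\log(nT) \le \log(n)\log(T)$ for $n,T$ large, collapses this to $O(\sqrt{dT\log n}\,\log^{3/2} T)$. Accounting for the failure event $\cE^c$, which contributes at most $\delta\cdot O(T) = O(1)$ to the expected regret for $\delta = O(1/T)$ because the per-round regret is bounded, yields the claimed expected-regret bound.
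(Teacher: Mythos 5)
Your overall skeleton matches the paper's proof: a layerwise independence lemma (the paper's Lemma on independent observations, proved as in \citet{li2017provably} and \citet{saha2020regret}), an MLE concentration event with a union bound over arms, rounds and layers yielding $c_1 = \tfrac{3}{2\mu}\sqrt{2\log(3nT^2)}$, the never-eliminate-the-best-arm argument, per-round regret $O(2^{-s})$ on layer $s$ and $O(1/\sqrt{T})$ on the committing set $\Psi^{(0)}$, the bound $\sqrt{|\Psi^{(s)}|} = O\bigl(c_1 2^s \sqrt{d\log T}\bigr)$, and a Cauchy--Schwarz collection. However, there is a genuine gap in how you handle the random perturbations, and it is precisely the point where this algorithm differs from a plain SupLinUCB adaptation.

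You claim that on $\Psi^{(0)}$ ``the thresholded perturbations (at most $C_{\mathrm{thresh}}$ times the width) move the utilities by only $O(1/\sqrt{T})$.'' This is false as stated: the perturbation added to arm $i$ is $\epsilon_{t,i}\,\|\bx_{t,i}\|_{(\bM_t^{(s)})^{-1}}$, which scales with the \emph{individual} context norm, not with the pairwise width $w_{i,j}^{(s)} = c_1\|\bz_{t,i,j}\|_{(\bM_t^{(s)})^{-1}}$. The Gram matrix is built from contrast vectors, so all pairwise widths can be below $1/\sqrt{T}$ while $\|\bx_{t,i}\|_{(\bM_t^{(s)})^{-1}}$ remains of constant order (on the initialization event it is only bounded by $1$, giving a perturbation of size up to $C_{\mathrm{thresh}}$). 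Consequently, when the $\epsilon_{t,i}$ are drawn independently across arms, the term $\epsilon_{t,i_t}\|\bx_{t,i_t}\|_{(\bM_t^{(s)})^{-1}} - \epsilon_{t,j_t}\|\bx_{t,j_t}\|_{(\bM_t^{(s)})^{-1}}$ appearing in the first-arm/second-arm regret chain is only $O(1)$, and summing over $\Psi^{(0)}$ (which can contain $\Theta(T)$ rounds) would give linear regret. The paper closes exactly this hole with the coupling device: with conditional probability $1-p_t$ all arms share a single perturbation value $\epsilon$, and then the reverse triangle inequality gives $|\epsilon|\cdot\bigl|\,\|\bx_{t,i}\|_{(\bM_t^{(s)})^{-1}} - \|\bx_{t,j}\|_{(\bM_t^{(s)})^{-1}}\bigr| \leq C_{\mathrm{thresh}}\|\bz_{t,i,j}\|_{(\bM_t^{(s)})^{-1}} \leq c_2\|\bz_{t,i,j}\|_{(\bM_t^{(s)})^{-1}}$, which is the event $A^{(S)}_{\mathrm{conc},t}$; the coupling failures then add $\Delta_{\max}\sum_{t>\tau} p_t = O(\sqrt{dT\log n}\,\log T)$ to the regret, a \emph{leading-order} term in the final bound, not an $O(1)$ failure-event correction as in your accounting. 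The tell is that your proof never uses the hyperparameters $p_t$ and $C_{\mathrm{thresh}} < c_2$ that appear in the theorem statement; any correct proof must, since with $p_t \equiv 1$ (no coupling) the argument above breaks down.
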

			To the best of our knowledge, this is the first time that the technique introduced by \citet{Au02} is combined with a randomized learning strategy and analyzed theoretically (see Section \ref{sec_proofs_sup_colstim} in the supplementary material for the proof).
			%

			%The idea is to design a \emph{basic} algorithm, which assumes independent observations and determines upper confidence bounds for the context-dependent quantity of interest, and then use an appropriate \emph{master} algorithm, which feeds the basic algorithm with independent observations. 
			%
			%In this way, we would not need additional assumption on the Gram matrix.
			%
			%However, as \Algo{CoLSTIM} is using the entire confidence bound in a probabilistic way instead of its upper part in a deterministic way, the analysis is quite involved.

			\subsubsection{Computational Aspects}
			It is evident that the computation of the MLE involves a computationally expensive operation, as the entire history is used for this estimation step\,---\,an issue shared by most of the algorithms for logistic bandits \citep{filippi2010parametric, li2017provably, vaswani2020old} or stochastic contextual dueling bandits \cite{saha2020regret}.
			However, instead of optimizing the (log-)likelihood function based on the entire history in each time step, we could also  optimize the (log-)likelihood function more efficiently in an online manner using stochastic gradient descent.
			More precisely, we could replace line 6 in Algorithm \ref{alg:CoLSTIM} (and line 9 in Algorithm \ref{alg:SUPCoLSTIM} accordingly) by
			\begin{center}
				$\hat \btheta_t 
				\leftarrow \hat \btheta_{t-1}  + \eta_t \nabla \l \big( \hat \btheta_{t-1} \given  Y_{t-1},\{i_{t-1},j_{t-1}\} ,\bX_{t-1} \big)$
				%	 \Big( Y_{t-1} - F(  \langle \bz_{t-1,i_{t-1},j_{t-1}} , \hat \btheta_{t-1} \rangle  ) \Big) \bz_{t-1,i_{t-1},j_{t-1}},
				%	
			\end{center}
			for some suitable parameter $\eta_t>0$ (learning rate).
			Although the theoretical guarantees shown do not hold for this SGD variant, we do not see much of a difference regarding the regret in our experiments by using the SGD variant instead of maximizing the likelihood on the entire history of data in each time step (see Section \ref{sec:sgd_vs_mle}).
			However, we see a clear advantage of the former over the latter with respect to the cumulative elapsed time to make a choice.
			It is worth mentioning that the computational costs of \Algo{CoLSTIM}'s arm selection (lines 7--15) is quite low due to the simple form of the two maximization problems to be solved.

		\section{Experiments} \label{sec_exp}
		In this section, we present experimental results for our learning algorithm for the contextual dueling bandits setting for the two most prominent LST models, namely the Bradley-Terry-Luce (BTL) model and the Thurstone-Mosteller (TM) model.
		We compare our approach with Double-Thompson Sampling (DTS) \citep{WuLi16} and Self-Sparring
%		\footnote{We consider the variant using independent beta priors for each arm.} 
		(SS) with independent beta priors for each arm \citep{SuZhBuYu17}, which are state-of-the art algorithms for the non-contextual dueling bandits setting, as well as Maximum-Informative-Pair (MaxInP) \citep{saha2020regret}, which is suitable for the contextual dueling bandits setting under the contextualized BTL model.
		Moreover, we include the random choice strategy (Random), which is choosing the pair of arms in each time step uniformly at random.
		For DTS and SS, we used the same hyperparameters as in the corresponding experiments, while for MaxInP, we simply use $t_0= d n$ and $\eta= \sqrt{d \log(T)	},$ as the hyperparameters are not reported in the experiments by \citet{saha2020regret}. 
		Note that our choices are simplified versions of the parameters for which the theoretical guarantees hold.
		For \Algo{CoLSTIM}, we use $\tau = t_0$ and $c_1 = C_{\mathrm{thresh}}=\eta,$ as $\tau$ has a similar role as $t_0$ and $c_1$ or $C_{\mathrm{thresh}}$  have a similar role as $\eta,$ respectively.
		For the coupling probability $p_t$ of  \Algo{CoLSTIM}, we use a simplified version of the one derived in Theorem \ref{theorem_regret_bound}, namely $p_t = \min\big(1, \, \frac{d}{\sqrt{t - \tau}} \log\left( d T \right)   \big).$
		In every experiment, the performances of the algorithms are measured in terms of cumulative average regret (cf.\ \eqref{regret_def}), averaged across 100 runs and reported with their standard deviation.
		Moreover, for both MaxInP and \Algo{CoLSTIM}, we use the SGD-based variant described in Section \ref{sec_rums_algo} with a fixed learning rate of $\eta_t = 1/2.$ 
		For a comparison of the SGD-based variant and the ``full MLE'' variants see Section \ref{sec:sgd_vs_mle}.
		We omit \Algo{Sup-CoLSTIM} and Sta'D \citep{saha2020regret} since it is known that algorithms adopting the stage-wise approach of \citet{Au02} tend to perform poorly in numerical simulations.
%		
%		The experiments were run on a machine with Intel Xeon E5-2670, 2.6GHz, 16 cores, 64 GByte main memory.
		All these experiments were conducted on a machine featuring an Intel(R) Xeon E5-2670 @2.6GHz with 16 cores and 64 GB of RAM.
		\subsection{Contextual Setting}\label{synthDataExp}
		Recall that a problem instance $P$ in our setting is specified by the number of available arms $n,$ the dimension of the context vectors $d,$ the perturbation distribution $G^*$ and the weight parameter $\btheta^*,$ such that we accordingly write $P = P(n,d,G,\btheta^*).$
		Following \citet{saha2020regret}, for fixed $n,d,G^*$, we distinguish three problem scenarios with respect to the $\ell_2$-norm of the weight parameter $\btheta^*$:
		\begin{itemize}	
%			[noitemsep,topsep=0pt,leftmargin=4mm]
			%	
			\item $E(d,n,G^*) = \bigcup_{\|\btheta^*\| \leq 1/\sqrt{d}} P(n,d,G^*,\btheta^*);$
			\item  $M(d,n,G^*) = \bigcup_{1/\sqrt{d} \leq \|\btheta^*\| \leq 1} P(n,d,G^*,\btheta^*);$
			\item  $H(d,n,G^*)= \bigcup_{1 \leq \|\btheta^*\| \leq \sqrt{d}} P(n,d,G^*,\btheta^*).$ 
		\end{itemize}
		which we refer to as the easy, medium and hard problem scenario, respectively.

		In each run on one of the scenarios, the weight parameters are sampled uniformly at random from the corresponding subset of the $\ell_2$-Ball, while the context vectors are sampled uniformly at random from $\cB_1(2),$ i.e., the unit $\ell_2$-Ball, for each time step $t$ within one run.

		\begin{figure}[ht!]		\vskip 0.2in
			%\begin{center}
			\begin{subfigure}
				%		[t]{0.33\textwidth}
				\centering
				\centering
				\includegraphics[width=.33\linewidth]{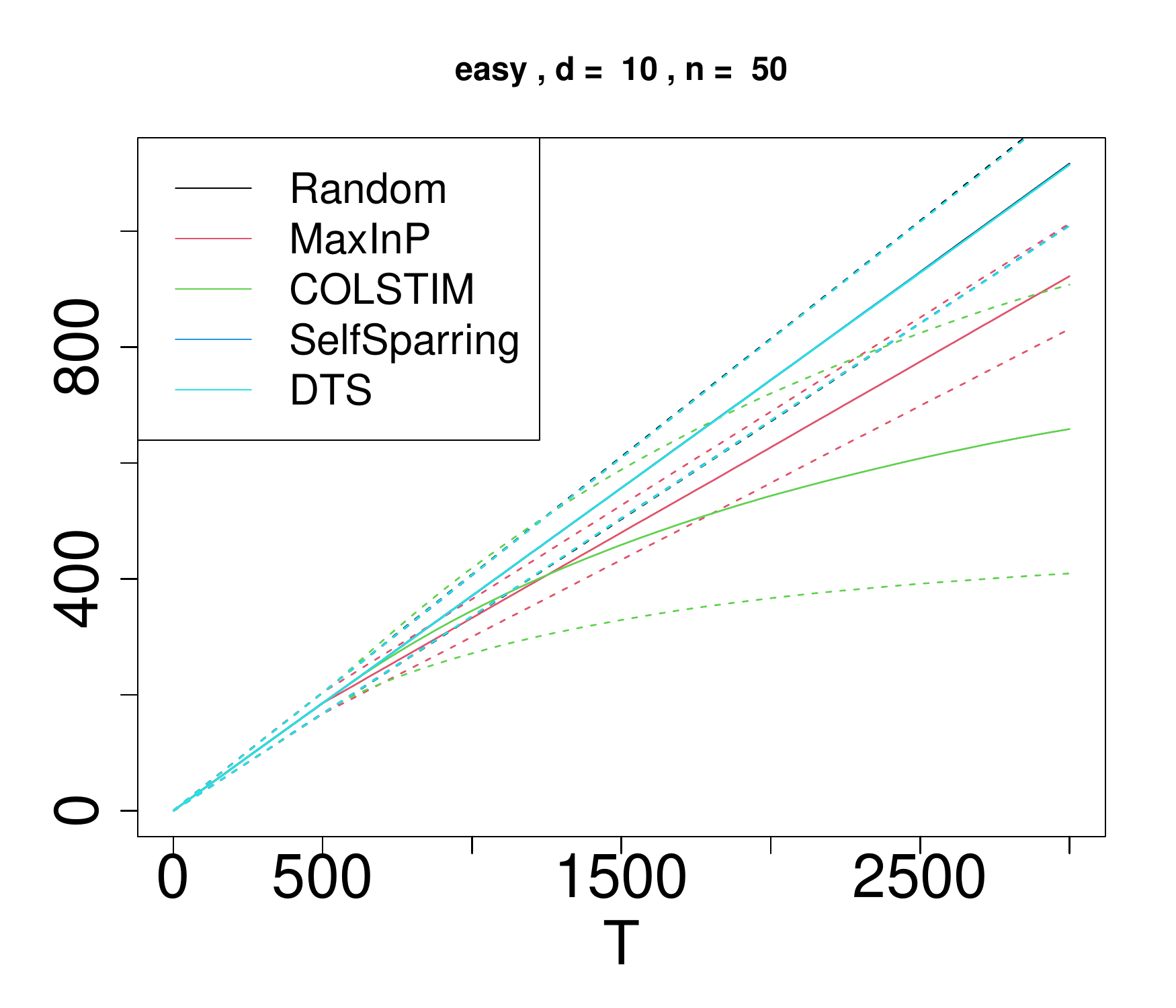}
				%		\caption{}
				%		\label{}
			\end{subfigure}%
			\begin{subfigure}
				%	[t]{0.33\textwidth}
				\centering
				\includegraphics[width=.33\linewidth]{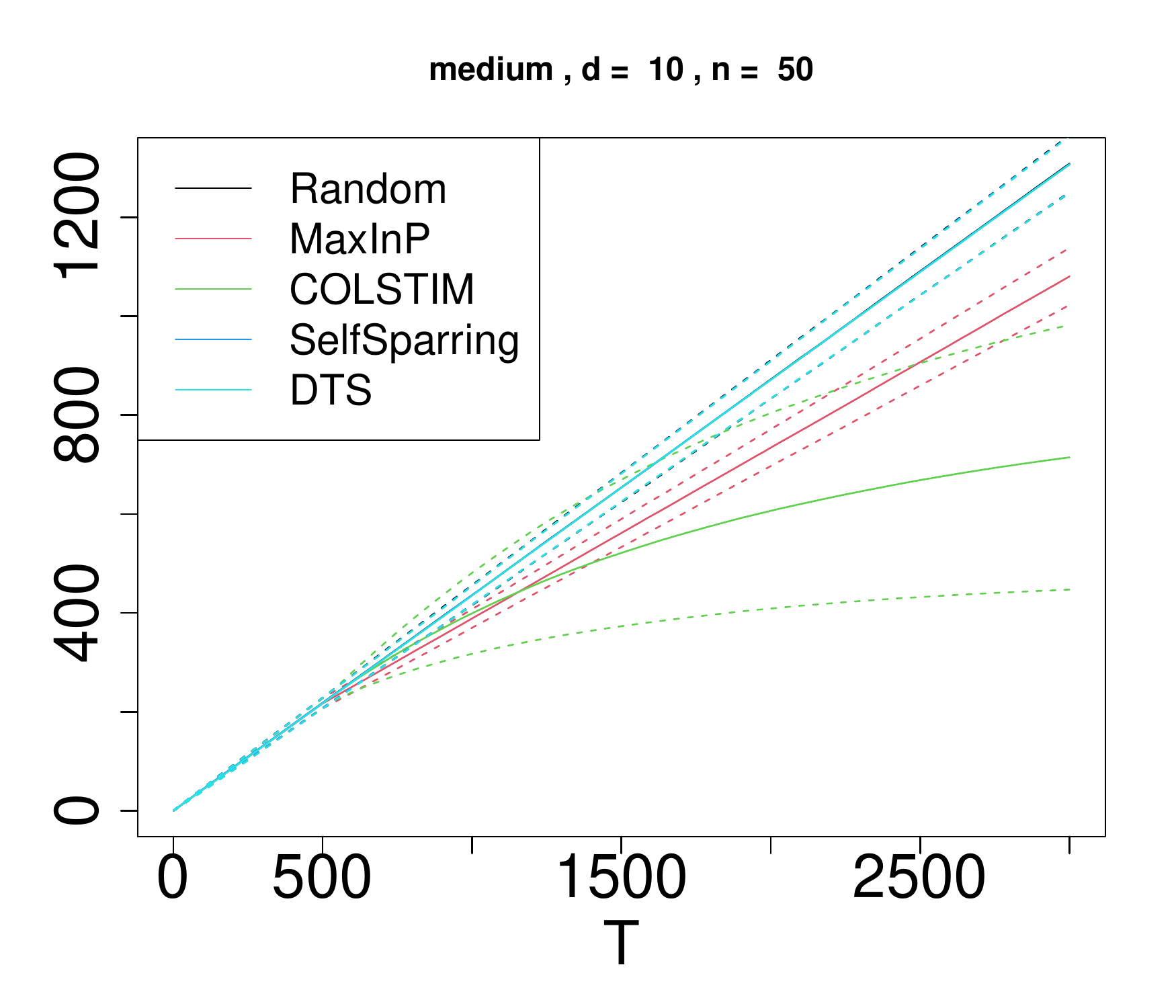}
				%		\caption{}
				%		\label{}
			\end{subfigure}%
			\begin{subfigure}
				%	[t]{0.33\textwidth}
				\centering
				\includegraphics[width=.33\linewidth]{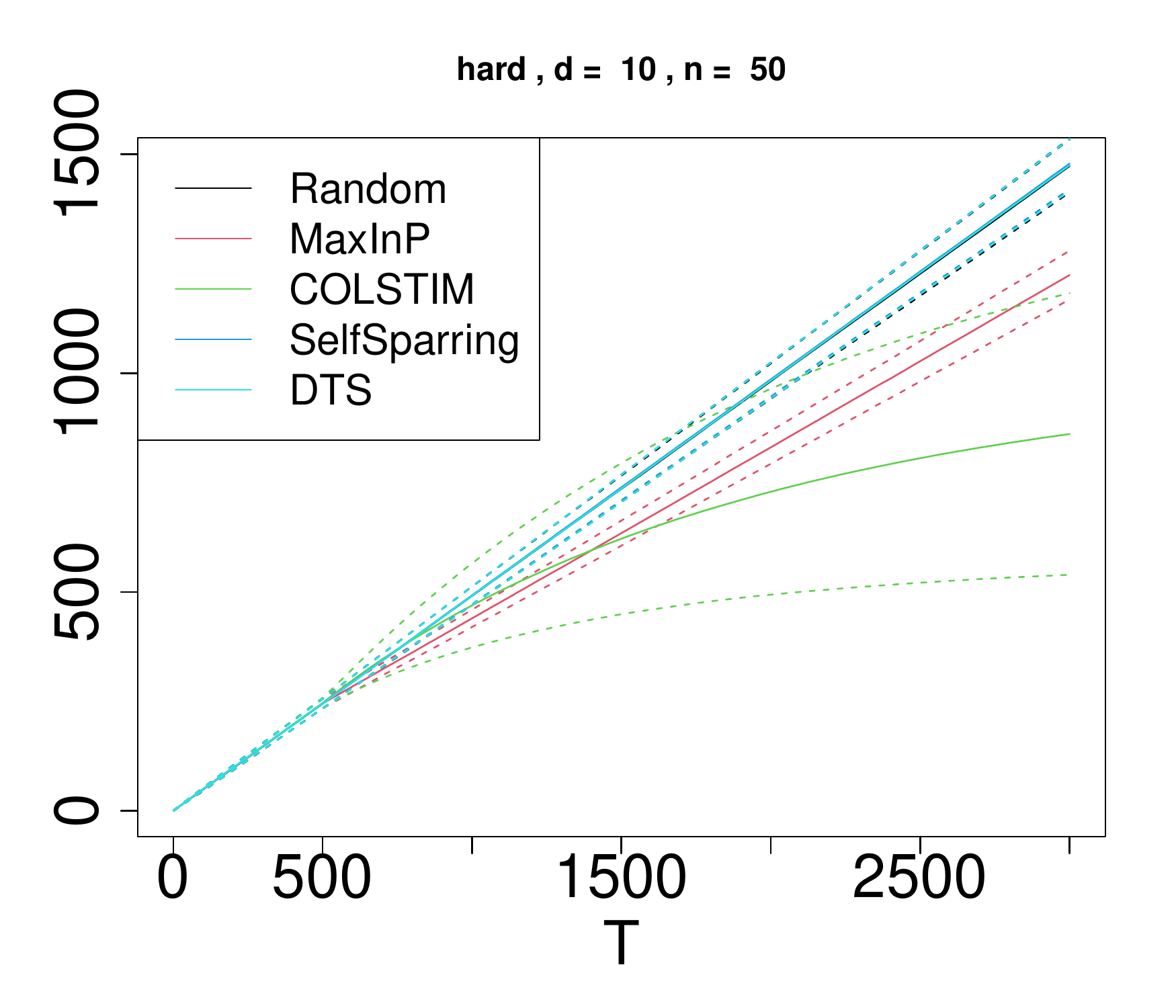}
				%		\caption{}
				%		\label{}
			\end{subfigure}
			\begin{subfigure}
				%	[t]{0.33\textwidth}
				\centering
				\includegraphics[width=.33\linewidth]{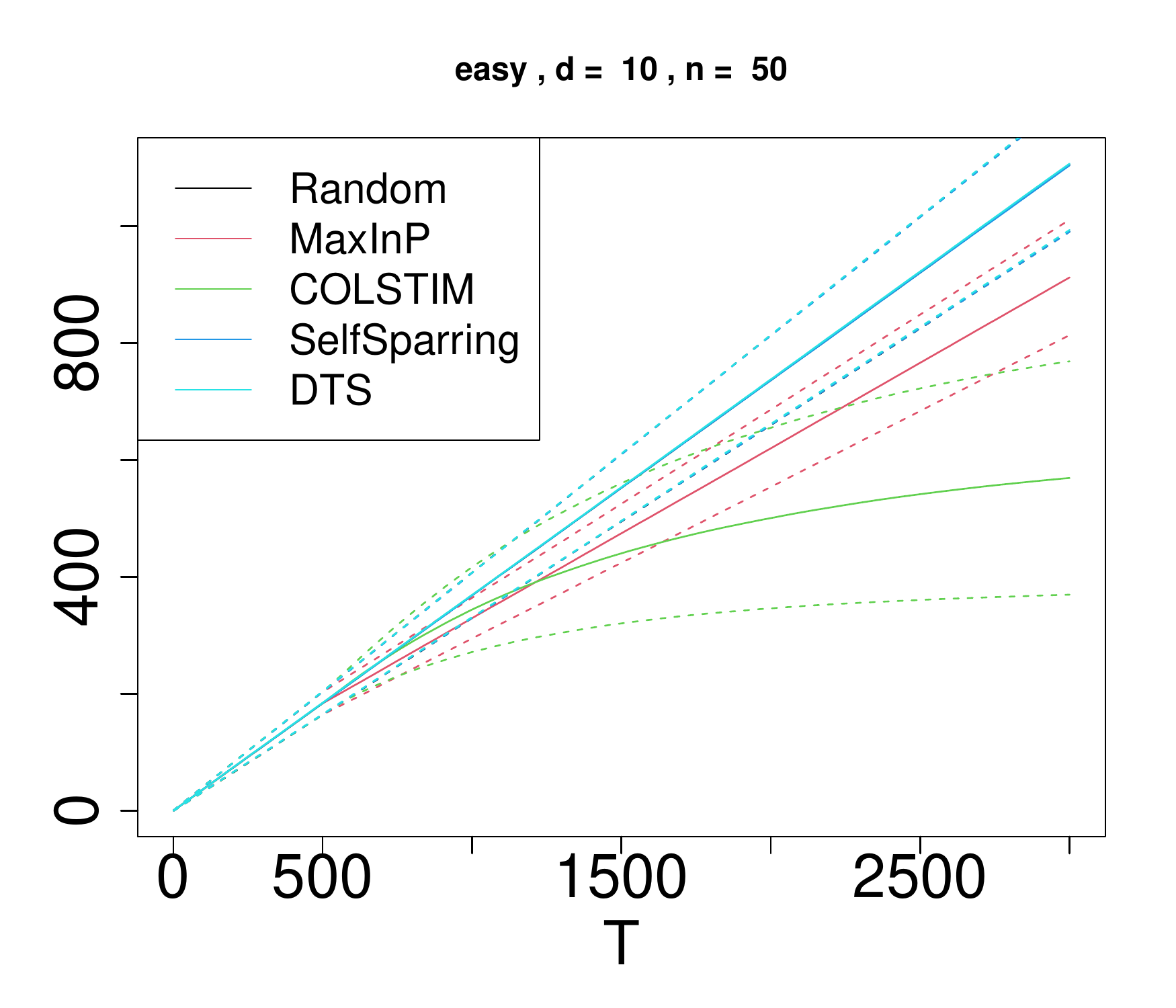}
				%		\caption{}
				%		\label{}
			\end{subfigure}
			\begin{subfigure}
				%	[t]{0.33\textwidth}
				\centering
				\includegraphics[width=.33\linewidth]{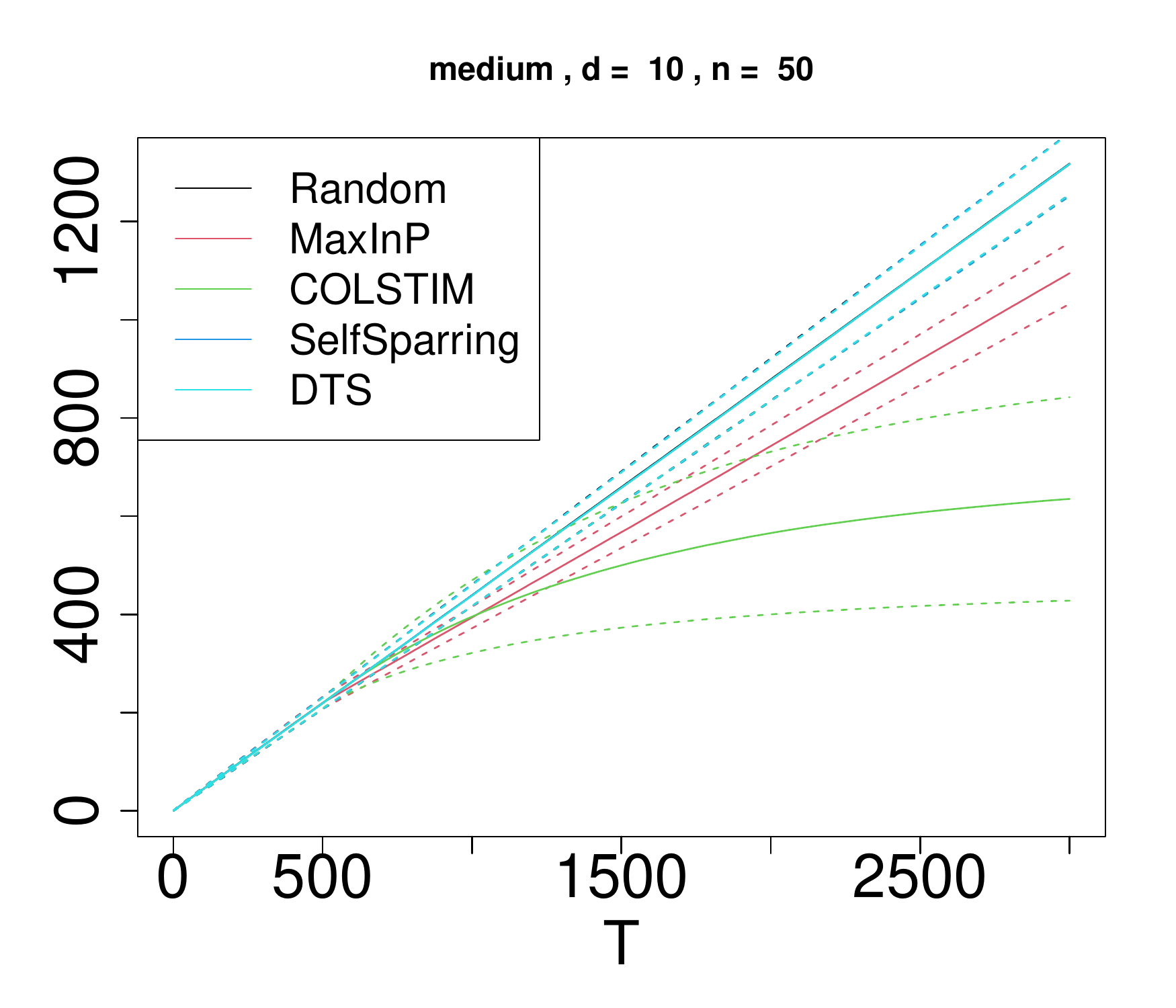}
				%		\caption{}
				%		\label{}
			\end{subfigure}%
			\begin{subfigure}
				%	[t]{0.33\textwidth}
				\centering
				\includegraphics[width=.33\linewidth]{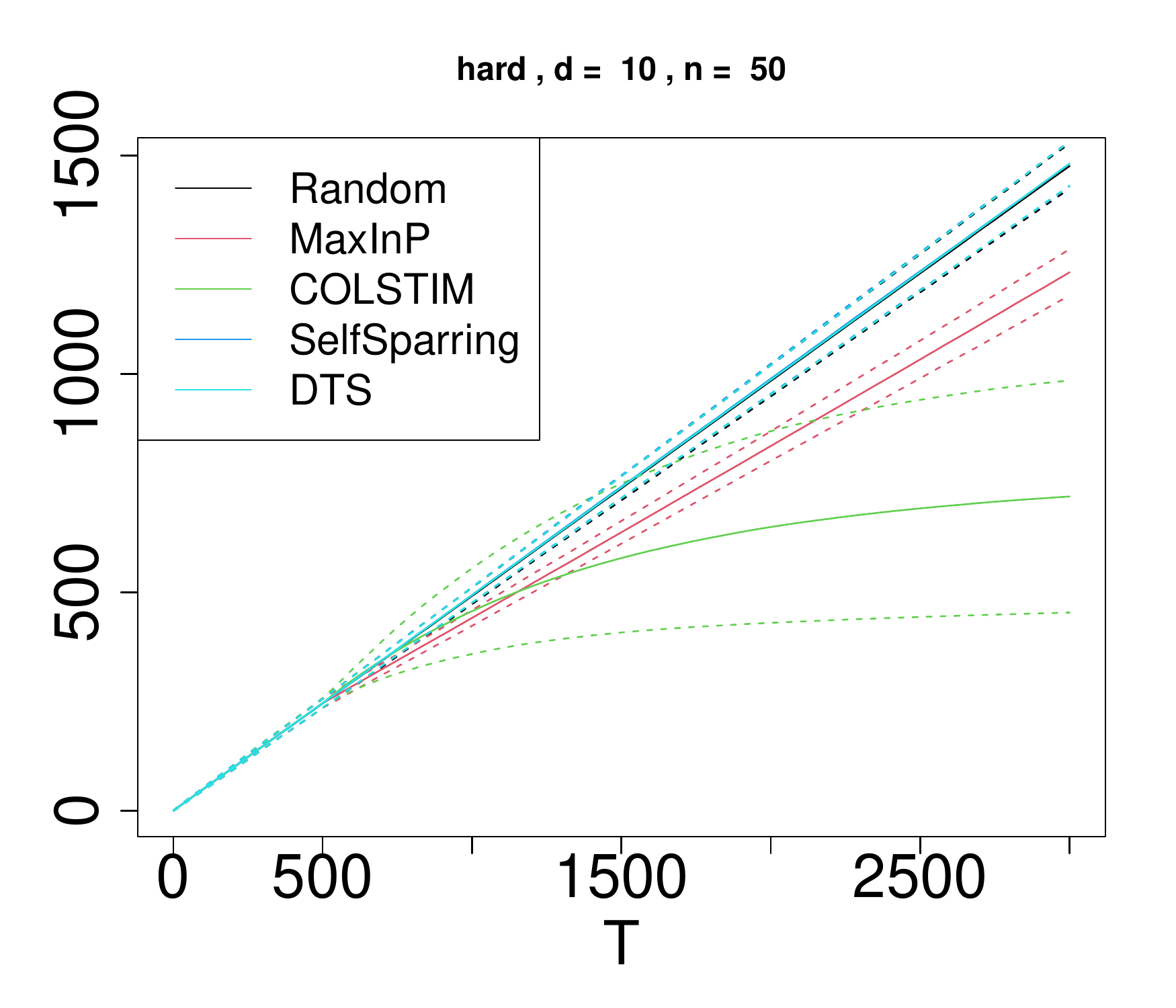}
				%		\caption{}
				%		\label{}
			\end{subfigure}%
		\vskip -0.2in
			\caption{Top panel: Averaged cumulative regret of the different methods on $E(d,n,G^*)$ (left), $M(d,n,G^*)$ (middle) and $H(d,n,G^*)$ (right) for $G^*$ being standard Gumbel. 
				Bottom panel: Averaged cumulative regret of the different methods on $E(d,n,G^*)$ (left), $M(d,n,G^*)$ (middle) and $H(d,n,G^*)$ (right) for $G^*$ being the standard normal distribution.}
			%		The results are averaged over $20$ random repetitions.}
		\label{fig:contextual_exp}
		%\end{center}
		\vskip 0.2in
	\end{figure}

		The results for $G^*$ being the standard Gumbel distribution are illustrated in the top panel of Figure \ref{fig:contextual_exp}, while the bottom panel of Figure \ref{fig:contextual_exp} shows the results for $G^*$ being the standard normal distribution, both for $n=50$ and $d=10$.
		Our \Algo{CoLSTIM} method using the corresponding perturbation distribution $G=G^*$  outperforms the other methods in all scenarios, while MaxInP performs only slightly better than the na\"ive random selection strategy.
		Unsurprisingly, the non-contextual methods DTS and SS are hardly distinguishable from the latter.
		%
%		Further experiments for varying choices of $n$ and $d$ are provided in the supplement, where the overall picture is  similar.
		%
		%
				%

			\begin{table}[ht]
			\caption{Averaged cumulative runtimes (in seconds) and the corresponding standard deviations (in brackets) of the different methods for the different problem scenarios.}
			\label{tab:runtimes}
%			\vskip 0.15in
			\centering
%			\resizebox{0.5\textwidth}{!}{
				\begin{tabular}{r||r|r|r||r|r|r}
					%			\hline
					& \multicolumn{3}{|c||}{	$G^*$ = Gumbel} 
					& \multicolumn{3}{|c}{	$G^*$ = Gaussian}  \\
					\hline
					& $E(d,n,G^*)$ & $M(d,n,G^*)$ & $H(d,n,G^*)$ & $E(d,n,G^*)$ & $M(d,n,G^*)$ & $H(d,n,G^*)$ \\ 	
					\hline
					Random & 0.19 (0.02)  & 0.18 (0.01)   &   0.18 (0.01) & 0.19 (0.02)   & 0.18 (0.02)  & 0.18 (0.02) \\ 
					MaxInP & 164.58 (6.58)  & 155.54 (4.92)  & 155.51  (3.49)  &  159.33 (6.00) & 157.70  (8.54)   & 156.56  (7.48) \\  
					\Algo{ColSTIM} & 7.44  (0.51)  & 7.08 (0.41)  & 6.98  (0.17) & 7.09 (0.38)  &  7.05 (0.51) & 7.03  (0.48) \\ 
					SS &  9.92 (0.66) &  9.43 (0.56) & 9.39 (0.22) & 9.49 (0.50)  &  9.47  (0.67) & 9.47 (0.60) \\  
					DTS & 52.01 (2.63) &  49.73(2.04) & 50.15 (1.08) & 50.94 (2.45)  &  50.19  (2.99) & 49.93 (2.69) \\ 

				\end{tabular}
%			}
		\end{table}

		Table \ref{tab:runtimes} reports the averaged cumulative runtimes and the corresponding standard deviations of each of the considered methods for making the choice of the pairs (i.e., the MLE step for \Algo{CoLSTIM} and MaxInP are neglected).
		The results confirm the discussion in Section \ref{alg:CoLSTIM} regarding the computational efficiency of \Algo{CoLSTIM}'s choice mechanism, which, quite interestingly, is competitive to the non-contextual methods.

						\begin{figure}[ht!]
			\vskip 0.2in
			\begin{center}
				\begin{subfigure}
					%		[t]{0.33\textwidth}
					\centering
					\centering
					\includegraphics[width=.32\linewidth]{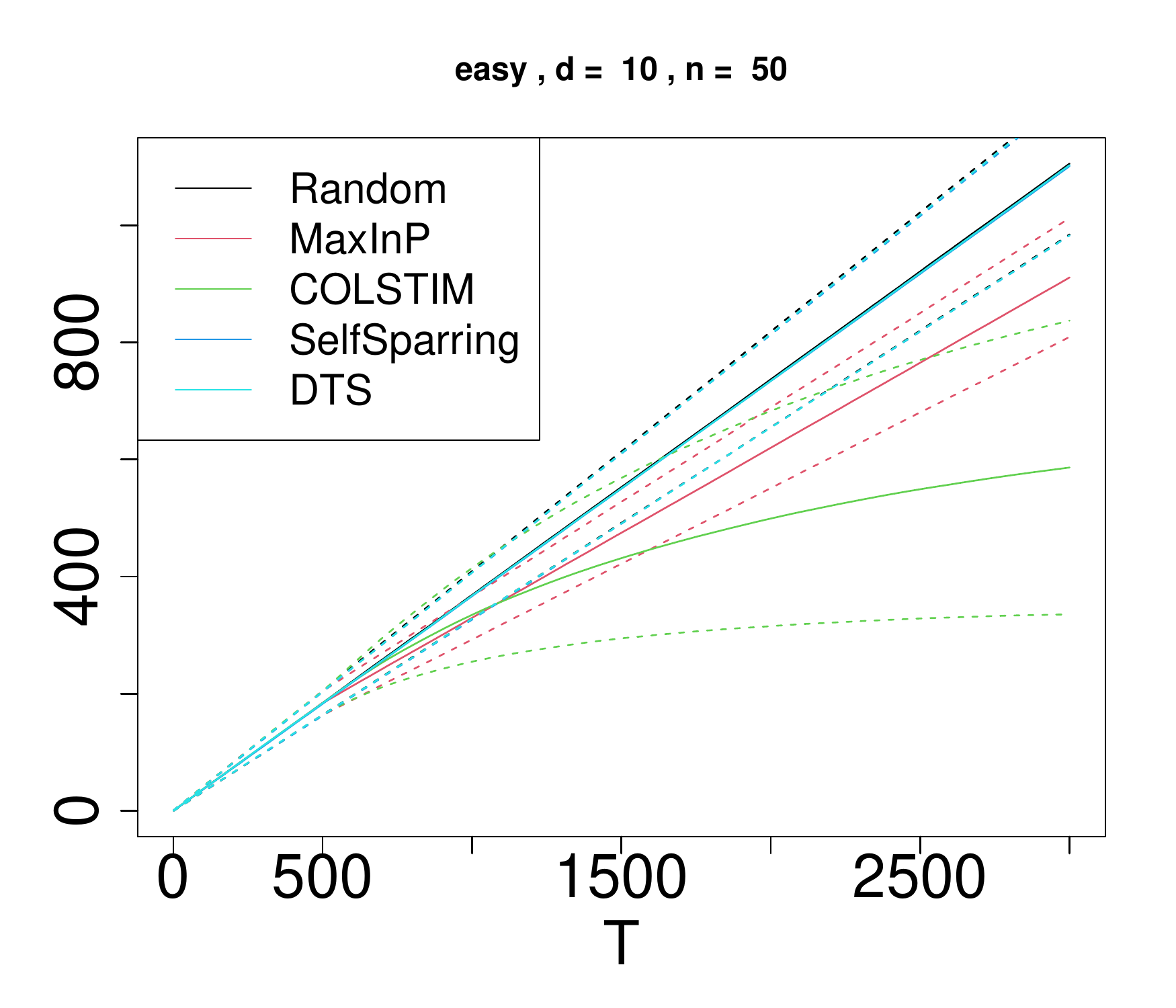}
					%		\caption{}
					%		\label{}
				\end{subfigure}%
				\begin{subfigure}
					%	[t]{0.33\textwidth}
					\centering
					\includegraphics[width=.32\linewidth]{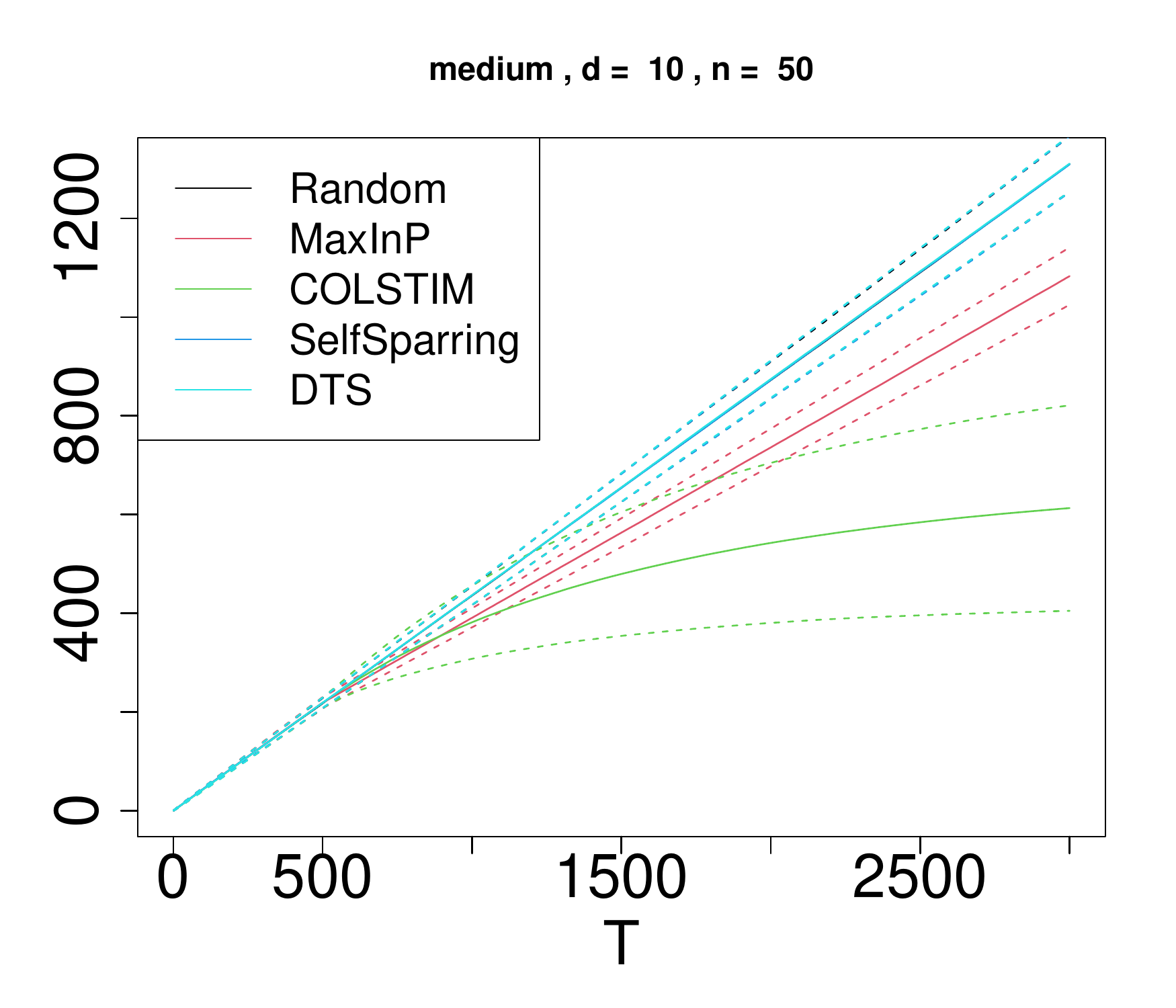}
					%		\caption{}
					%		\label{}
				\end{subfigure}%
				\begin{subfigure}
					%	[t]{0.33\textwidth}
					\centering
					\includegraphics[width=.32\linewidth]{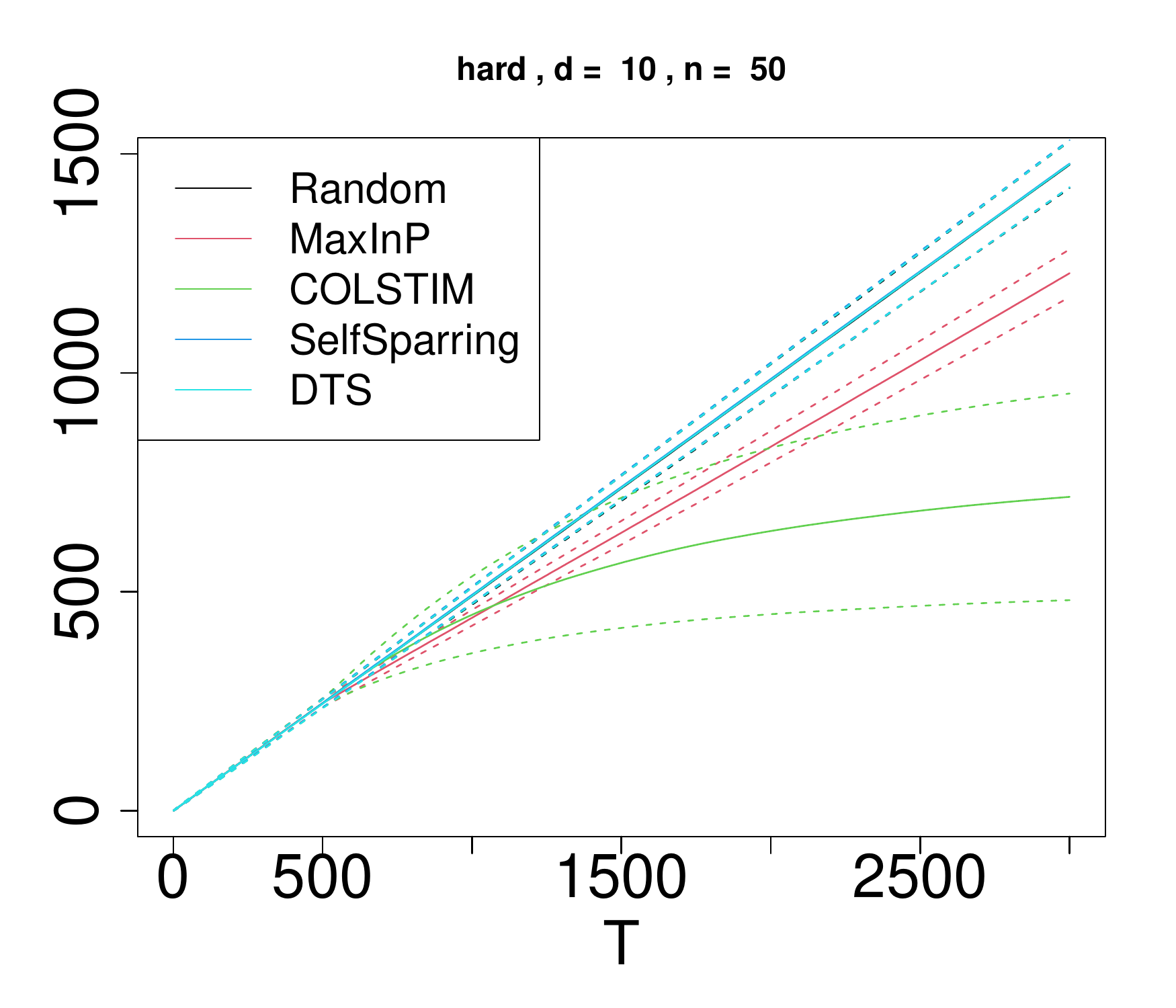}
					%		\caption{}
					%		\label{}
				\end{subfigure}
				
				\includegraphics[width=.32\linewidth]{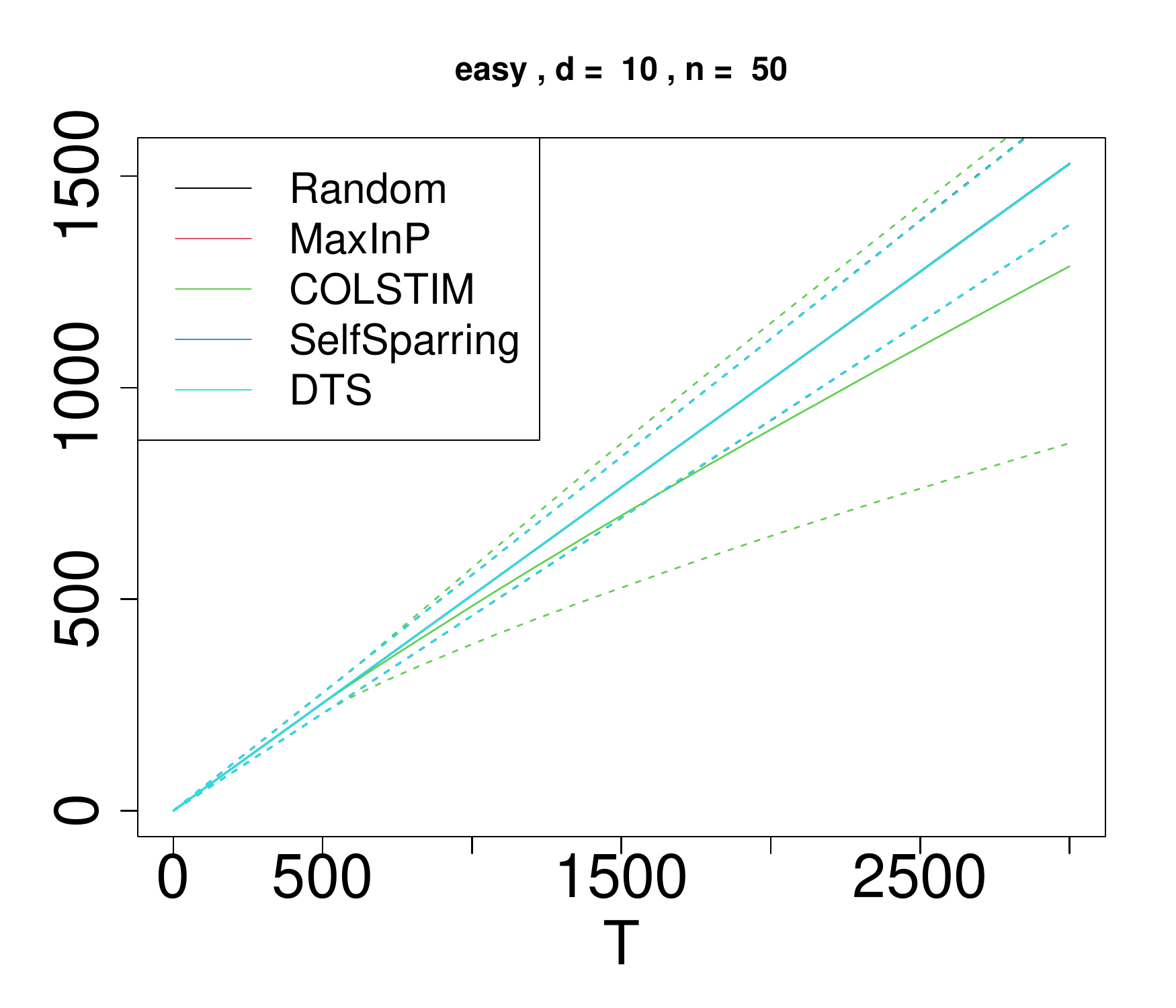}
				\includegraphics[width=.32\linewidth]{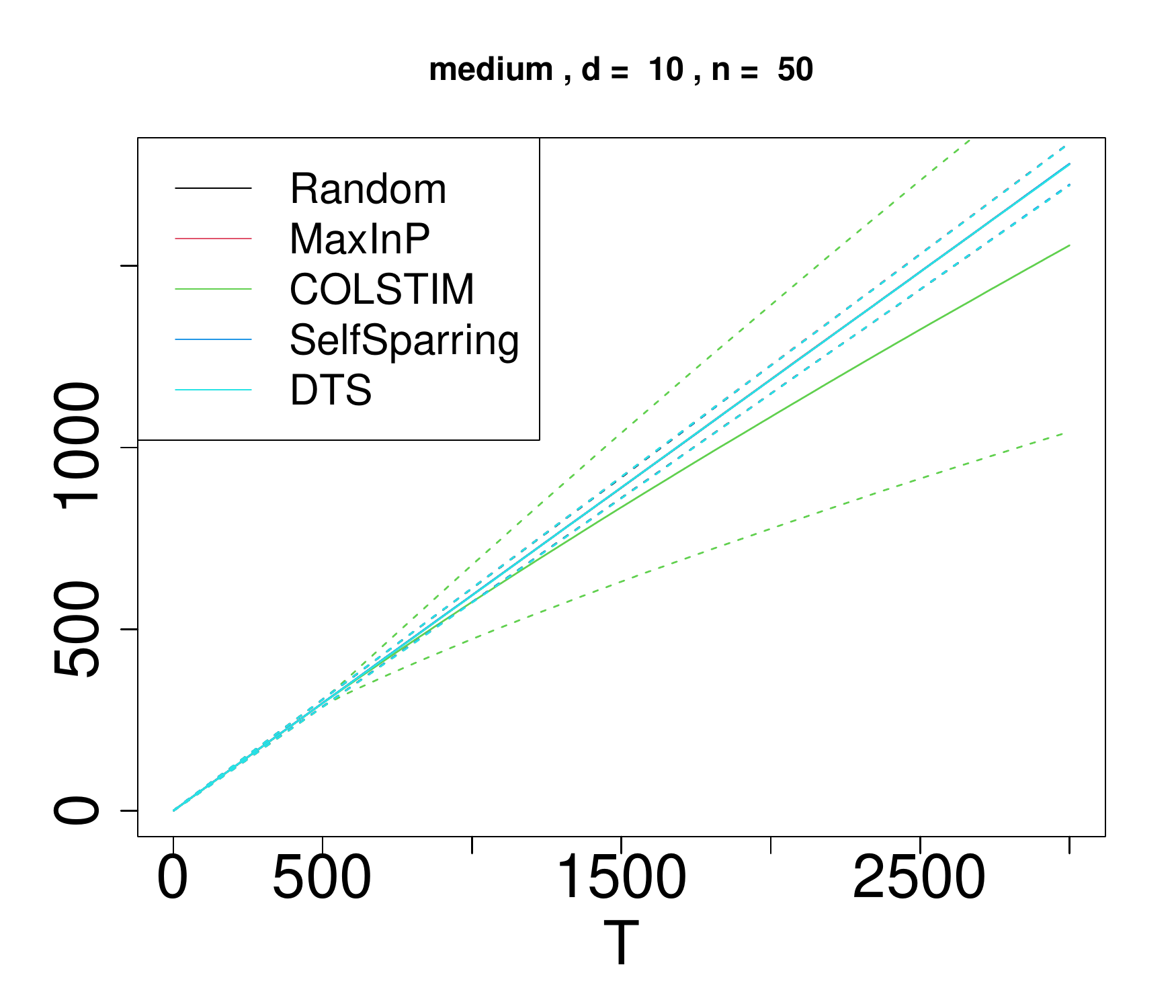}
				\includegraphics[width=.32\linewidth]{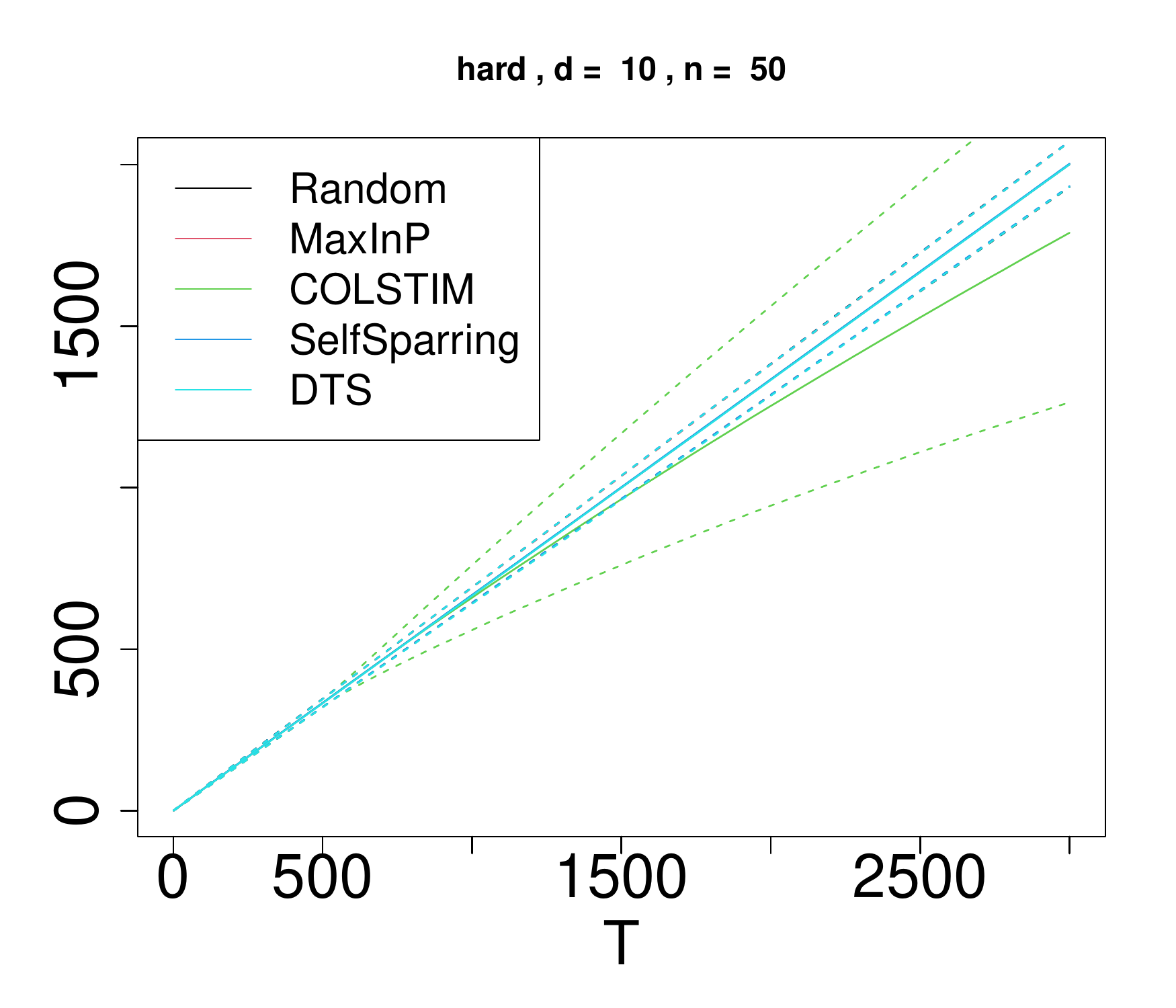}
				\vskip -0.2in
				\caption{Top panel: Averaged cumulative regret of the different methods on $E(d,n,G^*)$ (left), $M(d,n,G^*)$ (middle) and $H(d,n,G^*)$ (right) for $G^*$ being standard Gumbel. Bottom panel: Averaged cumulative regret of the different methods on $E(d,n,G^*)$ (left), $M(d,n,G^*)$ (middle) and $H(d,n,G^*)$ (right) for $G^*$ being the normal distribution with zero and variance $5.$ Here, \Algo{CoLSTIM} is used with $G$ being the standard normal distribution (misspecification).}
				\label{fig:contextual_exp_misspec}
			\end{center}
			\vskip -0.2in
		\end{figure}

		\subsection{Misspecification}
		Given the results of the previous section, one may wonder how sensitive our \Algo{CoLSTIM} method is to misspecification of the perturbation distribution $G,$ i.e., how it performs when $G\neq G^*?$
		To this end, we repeat the experiments for the easy, medium and hard problem instances with $G^*$ being the standard Gumbel distribution, while the \Algo{CoLSTIM} method is used with $G$ being the standard normal distribution.
		%To this end, we repeat the experiments for the problem instances $E(10,50,G^*)$, $M(10,50,G^*)$ and $H(10,50,G^*)$ for $G^*$ being the standard Gumbel distribution, while the \Algo{CoLSTIM} method is used with $G$ being the standard normal distribution.
		%
		Note that MaxInP is designed for exactly this case, in which the perturbation distribution of the underlying CoLST feedback model is the standard Gumbel distribution, so that MaxInP is in a favorable position.
		The top panel in Figure \ref{fig:contextual_exp_misspec} illustrates the result for this setting, which shows a similar picture as Figure \ref{fig:contextual_exp}, where no misspecification is present.

		Finally, we also consider the case where there is an incorrect specification of the parameters in the correct parametric distribution family. 
		More specifically, the bottom panel shows the result of the experiments for the easy, medium and hard problem instances with $G^*$ being the normal distribution with zero mean and variance $5$ and \Algo{CoLSTIM} still uses the standard normal distribution for $G.$ 
        Although our algorithm now performs slightly worse and also has a higher variance, it still performs better than the other algorithms as they still do not show a convergence behavior at all, and our algorithm will probably outperform them eventually for longer time horizons $T.$

		\section{Related Work} \label{sec_related_work}
		So far, a contextualized extension of preference-based bandits with pairwise comparisons (i.e., dueling bandits) has only been studied explicitly by \citet{DuHoShSlZo15} and recently by \citet{saha2020regret}. 
		In the former work, it is assumed that the learner has access to a space of learning policies from which the contextualized von Neumann winner should be found. 
		This learning setting is different from ours, due to the absence of a learning policy space.
		\citet{saha2020regret} assumes a latent linear utility score for each arm as well as a logit link function for modeling the feedback governed by the pairwise preference probabilities. 
		This modeling assumption is a special case of the linear stochastic transitivity models considered in this paper and corresponds to the (contextualized) Bradley-Terry-Luce model. 
		Nevertheless, a lower bound for the regret in this special case of order $\Omega(\sqrt{dT})$ is derived, which is also a lower bound for our more general setting.
		Further, two learning algorithms are suggested and theoretically analyzed, which both essentially use the pair of arms having the highest uncertainty in each round.
		Our proposed learning algorithm, however, uses the idea of randomized learning strategies, which have gained popularity in reward-based bandit problems in recent years (see \citet{vaswani2020old} for a detailed overview).
		This can be attributed to the fact that, unlike the well-known Thompson sampling algorithm, no closed posterior distribution is needed, but only a sampling distribution. 
		
		Finally, it should be mentioned that the utility-based dueling bandits learning scenario with infinitely many arms, where each arm is a $d$-dimensional point, is closely related to the learning scenario considered here.
		In fact, in the latter scenario, if only $K$ many arms, which can vary, are allowed to be selected in each learning round, this corresponds to the contextual learning scenario considered here.
		%
		%However, not all learning algorithms are directly amenable to this adaptation, e.g. \cite{YuJo09}, or if they are, they no longer have theoretical guarantees, e.g., \cite{AiKaJo14} and \cite{ku17},  or even did not have them before, e.g., \cite{gonzalez2017preferential}.
		However, not all learning algorithms are directly amenable to this adaptation \cite{YuJo09}, or if they are, they no longer have theoretical guarantees \cite{AiKaJo14,ku17},  or even did not have them before \cite{gonzalez2017preferential}.

		\section{Conclusion and Future Work}         \label{sec_conclusion}
		We studied the contextualized version of the dueling bandits problem under linear stochastic transitivity models and proposed an algorithm for effective learning in this setting. 
		The algorithm is inspired by randomized learning strategies for numerical bandit problems and has both satisfactory regret bounds as well as excellent numerical performance.
		In addition, we have also shown a lower bound on the weak regret, which is of the same order as for the average regret. 
		This indicates that stronger theoretical guarantees in the worst case sense cannot be obtained for weak regret learners.
		
		For future work, an extension to more general action sets than pairs of arms would be interesting from both a theoretical and a practical point of view as considered by recent works \citep{BrSeCoLi16,SaGo18a,SaGo19b,bengs20,agarwal2020choice}.
		Context-dependent random utility models \cite{train2009discrete} could be a practically meaningful counterpart for linear stochastic transitivity models.
		Another interesting question would be whether one could replace the linearity assumption on the latent utility values with more general assumptions such as a kernel-based correlation as considered by \citet{SuZhBuYu17} for the non-contextual dueling bandits setting. 
		Although our experiments show that our proposed method is robust to misspecification of the perturbation distribution and the corresponding comparison function, one could imagine learning this perturbation distribution online as well. 
		For this purpose, the work by \citet{oliveira2018new} could be relevant, which, however, deals with the batch learning scenario.
		Yet another interesting question would be to investigate whether Assumption \textbf{(A1)} can be relaxed to an assumption like in \cite{li2017provably}, which is also used by MaxInP \citep{saha2020regret}.
		Last but not least, it would certainly be worthwhile to investigate the performance of our suggested algorithm in real-world applications, e.g., in realtime algorithm configuration or online learning-to-rank problems for which preference-based bandit algorithms have been used before \citep{BrSeCoLi16,ScOoWhDe16,OoScDe16,ZhKi16,ScHu18,el2020pool}.   
		
		\subsubsection*{Acknowledgments}
        The authors would like to thank the Paderborn Center for Parallel Computation (PC2) for the use of the OCuLUS cluster.

		\bibliography{contextual_db,dueling.bib}
        \bibliographystyle{icml2022}
		
		%\end{document}
		\clearpage
		\onecolumn
		\appendix

		\section{List of Symbols}
		
		The following table contains a list of symbols that are frequently used in the main paper as well as in the following supplementary material. \\ \medskip
		\small
		%\begin{small}
			\resizebox{0.99\textwidth}{!}{
		\begin{tabular}{ll}
			\hline
			\multicolumn{2}{c}{\textbf{Basics}} \\
			\hline
			$\prec,\succ$ &  preference relation for objects, i.e., $o \succ o'$ (or $o \prec o'$ ) iff object $o$ is (not) preferred over object $o'$\\
			$1_{\lbrack \cdot \rbrack}$ & indicator function \\
			$\N$  & set of natural numbers (without 0), i.e., $\N = \{1,2,3,\dots\}$ \\
			$\R$ & set of real numbers\\
			$[n]$ & the set $\{1,2,\ldots,n\}$ for some $n\in \N$\\
			$\bx,\bz$ & $d$ -- dimensional (column) vectors \\
			$A^\top$ & transpose of a matrix $A\in \R^{d\times d'}$ \\
			$\langle \bx , \by \rangle$ & inner product of two vectors $\bx,\by \in \R^d$ \\
			$\bI_d$ & $d\times d$ identity matrix \\
			$\bzero_d$ &  $d\times d$ zero matrix \\
			$\| \bx \| $ &  the Euclidean norm of a vector $\bx\in \R^d,$ i.e., $\sqrt{\langle \bx ,\bx  \rangle}$ \\
			$\| \bx \|_{A}$ &  weighted norm of a vector $\bx\in \R^d$ for some positive semi-definite matrix $A\in \R^{d \times d},$ i.e., $ \sqrt{\bx^\top A \bx}$ \\
			$\cB_r(p)$ &  $\ell_p$ ball of radius $r\geq 0$,  i.e., $\cB_r(p) = \{\x \mid \|\x\|_p \leq r\}$, where $\|\cdot\|_p$ denotes the standard $\ell_p$-norm \\
			\hline
			\multicolumn{2}{c}{\textbf{Modelling related}} \\
			\hline
			$n$ & number of arms\\
			$\mathcal{A} = [n]$ & set of arms \\
			$T$ & the time horizon \\
			$\cX$ & the context space (subset of $\R^d$) \\
			%	\\
			$\bx_{t,i}$ & ($d$ -- dimensional) context vector related to arm $i$ at time step $t \in [T]$ (element in $\cX$)\\
			$\bX_t$ & ($d \times n$ -- dimensional) context matrix at time step $t$, i.e.,  $\bX_{t}=(\bx_{t,1}  \ldots \bx_{t,n})$ \\
			$\bz_{t,i,j}$ & the contrast vector of arm $i$ and $j$ at time step $t,$ i.e., $\bz_{t,i,j} = \bx_{t,i} - \bx_{t,j}$ \\
			$\bZ_t$ & ($d \times$ $ n \choose 2 $-- dimensional) contrast matrix at time step $t$, i.e., $\bZ_t = ( \bz_{t,1,2} \,  \bz_{t,1,3} \ldots  \bz_{t,1,n}  \, \bz_{t,2,3} \ldots \bz_{t,n-1,n} )$  \\
			$\btheta^*$ &  ground truth weight parameter of the underlying CoLST model (see \eqref{defi_feedback_context}) \\
			$G^*,F^*$ &  ground truth perturbation distribution and corresponding comparison function of the underlying CoLST model (see \eqref{defi_feedback_context}) \\
			%	\\
			$S_t=\{i_t,j_t\}$ & selected pair of arms at time step $t$ \\
			$Y_t$ & (binary) feedback for the selected pair of arms $S_t$, i.e., $Y_t = 1_{\lbrack i_t \succ j_t \rbrack}$ \\ 
			$u^*_{i}$ & mapping from $\R^{d \times n}$ to $\R$ according to $u^*_{i} (\bX) = \bx_{i}^\top \btheta^*,$ where $\bx_{i}$ is the $i$th column of $\bX$ \\
			$i^*(t)=i^*_t$ & optimal arm at time step $t,$ i.e., the arm with the highest context-dependent utility at time $t$ \\
			&	$	i^*(t) = \arg \max\nolimits_{i \in \cA } u_{i}^*(\bX_t) $\\
			%			%
			$r_t^w(S_t,\bX_{t})$ &  instantaneous weak regret at time step $t$ for selecting $S_t$ i.e., $r_t^w(S_t,\bX_{t}) = u_{i^*(t)}^*(\bX_t) - \max\nolimits_{k \in S_t} u_{k}^*(\bX_t)$   \\
			$r_t^a(S_t,\bX_{t})$ &  instantaneous average regret at time step $t$ for selecting $S_t$ i.e., $r_t^a(S_t,\bX_t)  =  \frac{2 u_{i^*(t)}^*(\bX_t) - u_{i_t}^*(\bX_t)  - u_{j_t}^*(\bX_t)}{2}$   \\
			$R_T^w, R_T^a $ & cumulative weak or average regret till time $T$ for selecting $(S_t)_{t \in [T]}$ (see \eqref{regret_def})\\
%			
%			$R_T^a=R_T^a((S_t)_{t \in [T]})$ & cumulative average regret till time $T$ for selecting $(S_t)_{t \in [T]},$ i.e., 
%			 $R_T = \sum_{t \in [T]} r_t^a(S_t,\bX_{t}) $ \\
			%
			\hline
			\multicolumn{2}{c}{\textbf{CoLSTIM related}} \\
			\hline
			$\hat \btheta_t$ & maximum-likelihood estimate of the weight parameter of the underlying \Algo{CoLSTIM} (see \eqref{def_MLE}) \\
			$c_1>0$ & confidence width parameter of \Algo{CoLSTIM} (hyperparameter of \Algo{CoLSTIM}) \\
			$C_{\mathrm{thresh}}>0$ & threshold parameter of \Algo{CoLSTIM} (hyperparameter of \Algo{CoLSTIM}) \\
			$\tau>0$ & pure exploration length of \Algo{CoLSTIM} (hyperparameter of \Algo{CoLSTIM}) \\
			$p_t \in[0,1]$ & coupling probability (hyperparameter of \Algo{CoLSTIM}) \\
			%	\\
			$G,F$ &  perturbation distribution and comparison function used by \Algo{CoLSTIM} (hyperparameter of \Algo{CoLSTIM}) \\
			$\bM_{t}$ & Gram matrix at time step $t,$ i.e., 	$\bM_t =  \sum_{s=1}^{t-1} \bz_{s,i_s,j_s} \bz_{s,i_s,j_s}^\top   $ \\
			%		
%			$\bH_t  $ &   Hessian matrix at time step $t,$ i.e., 	$\bH_t  =   \sum_{s=1}^{t-1} F'(\bz_{s,i_s,j_s}^\top \hat \btheta_t) \bz_{s,i_s,j_s} \bz_{s,i_s,j_s}^\top    $ \\
			%	
			$\tilde \epsilon_{t,i}$ & sampled perturbation variable for arm $i$ at time step $t,$ i.e., $\tilde \epsilon_{t,i} \sim G$ (see line 7 of Alg.\ \ref{alg:CoLSTIM}) \\
			$\epsilon_{t,i}$ & trimmed sampled perturbation variable for arm $i$ at time step $t,$ i.e., $\epsilon_{t,i} = \min( C_{\mathrm{thresh}}, \max(  - C_{\mathrm{thresh}} , \tilde \epsilon_{t,i}  )  )$	 \\
			$\hat u_{t,i}(\bX_{t})$ & estimated utility of arm $i$ in time step $t,$ i.e.,
			$\hat u_{t,i}(\bX_{t}) = \bx_{t,i}^\top \hat \btheta_t $ \\
			$\tilde u_{t,i}(\bX_{t}) $ &  perturbed estimated utility of arm $i$ in time step $t,$ i.e., 
			$\tilde u_{t,i}(\bX_{t}) = \hat u_{t,i}(\bX_{t})  + \epsilon_{t,i} \|\bx_{t,i} \|_{\bM_t^{-1}}  $ \\
						\hline
			\multicolumn{2}{c}{\textbf{Sup-CoLSTIM related}} \\
			\hline
			$c_1, C_{\mathrm{thresh}}, \tau,p_t, G,F$ & hyperparameters of  \Algo{Sup-CoLSTIM} (same meaning as for  \Algo{CoLSTIM}) \\
			$s,S$ & stages in \Algo{Sup-CoLSTIM}, maximal number of stages (set to $\lfloor \log_2(T)\rfloor$) \\
			$A_t^{(s)}$ & arms which are ``sufficiently accurately estimated promising arms'' at stage $s$ and time step $t$ \\
			$\Psi^{(0)}$ & all time steps, where the choice has been made at some stage $s \in [S]$ according to \Algo{CoLSTIM}'s choice mechansim \\
			$\Psi^{(s)}$ & all time steps, where the choice has been made at stage $s \in  [S],$ but not according to \Algo{CoLSTIM}'s choice mechansim \\
			$\hat \btheta_t^{(s)}$ & maximum-likelihood estimate of the weight parameter using only observations from the time steps in $\Psi^{(s)}$  \\
			$\bM_{t}^{(s)}$ & Gram matrix at time step $t$ and stage $s$ i.e., 	$\bM_{t}^{(s)} =  \sum_{ l \in \Psi^{(s)}} \bz_{l,i_l,j_l} \bz_{l,i_l,j_l}^\top   $ \\
			$\tilde \epsilon_{t,i},$ $\epsilon_{t,i}$ &  (trimmed) perturbation variables (same meaning as for  \Algo{CoLSTIM})  \\ 
			$\hat u_{t,i}^{(s)}(\bX_{t})$ & estimated utility of arm $i$ in stage $s$ at time $t,$ i.e.,
			$\hat u_{t,i}^{(s)}(\bX_{t}) = \bx_{t,i}^\top \hat \btheta_t^{(s)} $ \\
			$\tilde u_{t,i}^{(s)}(\bX_{t}) $ &  perturbed estimated utility of arm $i$ in stage $s$ at time $t,$ i.e., 
			$\tilde u_{t,i}^{(s)}(\bX_{t}) = \hat u_{t,i}^{(s)}(\bX_{t})  + \epsilon_{t,i} \|\bx_{t,i} \|_{(\bM_{t}^{(s)})^{-1}} 	  $ \\
			\hline
			
		\end{tabular}}
	
%	
%			\multicolumn{2}{c}{\textbf{Abbreviations}} \\
%			\hline
%			LST, CoLST &  linear stochastic transitivity, contextualized linear stochastic transitivity   \\
%			%
%			MAB & multi-armed bandit \\
%			%
%			BTL & Bradley-Terry-Luce \\
%			%
%			MLE & maximum-likelihood estimate \\
%			%
%			\Algo{CoLSTIM} & CoLST imitator \\			
%			%
%			\Algo{Sup-CoLSTIM} & CoLST imitator embedded in the stage-wise approach by \citet{Au02} \\
%%			
%			\Algo{MaxInP} & Maximum-Informative-Pair algorithm \citep{saha2020regret} \\
%			%			
%			\Algo{SS} & Self-Sparring algorithm \citep{SuZhBuYu17} \\
%						%			
%			\Algo{DTS} & Double-Thompson sampling algorithm \citep{WuLi16} \\
%			%
%		\end{tabular}
		\normalsize
		\clearpage
		
		\section{Lower Bound: Proof of Theorem \ref{theorem:lower_bound_weak_regret}} \label{sec:weak_regret_lower_bound_proof}
		
		%\allowdisplaybreaks
		\renewcommand{\P}{\mathbb{P}}
		
		%\red{Update statement: }
		%\begin*{theorem}[Restatement of Theorem \ref{theorem:lower_bound_weak_regret}]
		%\label{thm:lower_bound}
		\textbf{Theorem \ref{theorem:lower_bound_weak_regret}} \textit{
			For any learning algorithm $\cA$ for the contextual dueling bandit problem under linear stochastic transitivity models in dimension $d$, there exists an instance of the problem characterized by a weight vector $\btheta^* \in \cB_1(2)$, context space $\X \subseteq \cB_1(\infty),$  and a comparison function $F:\R \to [0,1]$, such that the the expected weak regret incurred by $\cA$ in any $T > \max(d,16)$ rounds is 
			$$\mathbb{E} \big[ R_T^w((S_t^\cA)_{t\in[T]}) \big]=\Omega\Big(d\sqrt{T}\Big).$$
			%
			%\[ R_T \ge \Omega\Big(d\sqrt{T}\Big).\]
			%
			Further, if we restrict $\X \subseteq \cB_1(2)$, then 
			$$\mathbb{E} \big[ R_T^w((S_t^\cA)_{t\in[T]}) \big] = \Omega\Big(\sqrt{dT}\Big).$$ % there exists a weight vector $\btheta^*$ with $\|\btheta^*\|\leq 1$ such that
			%
			%\end*{theorem}
		}
		%In this section we provide a detailed proof of \ref{thm:lower_bound}.
		
		\begin{proof}
			\textbf{Case 1. $\X \subseteq \cB_1(\infty)$: }
			
			Our proof is based on a similar line of reasoning as that proposed by \citet{LaSz18} (Chapter 24) for the analysis of linear bandit algorithms.
			We assume the context space $\cX$ to be $\{-1,1\}^d \subset \cB_1(\infty)$, and let $\Theta = \big\{-\frac{1}{\sqrt T},\frac{1}{\sqrt T}\big\}^d$ be the set of possible unknown weight vectors $\btheta^* \in \Theta$.
			%	We assume the context space $\cX$ to be a fixed $d$-dimension hypercube $[-1,1]^d$, and let $\Theta = \bigg\{-\frac{1}{\sqrt T},\frac{1}{\sqrt T}\bigg\}^d$ be the set of possible unknown weight vectors $\btheta^* \in \Theta$. 
			Note since $d < T$, we have $\|\btheta\|_2 \leq 1$ for any $\btheta \in \Theta$, and hence $\Theta \subset \cB_1(2)$.
			Assume the perturbation distribution $G$ of the underlying CoLST is the standard Gumbel distribution, i.e., $F$ corresponds to the BTL model (see Examples in Section \ref{sec:introdution}).
			%	Assume $n = 2^d$, i.e. the context set $\X_t = \cD$ is fixed across all round $t \in [T]$. Further for proving the lower bound, we simply assume $G$ to be Gumbel based utility scores, i.e. $F$ to be the BTL model (see Examples in Section \ref{sec:introdution}).
			Fix any algorithm, and suppose $(\bell_1,\r_1), \ldots (\bell_T,\r_T)$ be the context vectors of the pairs chosen by the algorithm (learner) for $T$ rounds, i.e., $\bell_t:= \x_{t,i_t}$ and $\r_t:= \x_{t,j_t}$ for all $t \in [T]$. Now, for any $t$, let us denote by $\k_t \in \{\bell_t,\r_t\}$ the context vector of the chosen arms with the higher utility, i.e., 
			$$\k_t = \begin{cases}
				\bell_t, & \mbox{if $\bell_t^\top \btheta^*>\r_t^\top \btheta^*$,} \\
				\r_t, & \mbox{else.}
			\end{cases}$$
			Denote by $\sign(x)$ the sign function for any $x\in \R.$
			Let the sequence of context information matrices $(\bX_t)_{t\in[T]}$ be such that there exists exactly one arm $i^*\in[n]$ such that $x_{t,i}^{i^*} = \sign(\theta_i^*),\, \forall i \in [d] \, \forall t\in[T],$ where  $x_{t,i}^{i^*}$ is the $i^{th}$ component of $i^*$'s context vector $\bx_{t,i^*}$ at time $t,$ and $\theta_i^*$ is the $i^{th}$ component of $\btheta^*.$
			Thus, the best arm is $i^*$ for each time step $t.$
			For sake of brevity, let us denote its corresponding context vector by $\bx^*.$
			%	Fix any algorithm, and suppose $(\bell_1,\r_1), \ldots (\bell_T,\r_T)$ be the pairs played by the algorithm (learner) for $T$ rounds, i.e., $\bell_t:= \x_{t,i_t}$ and $\r_t:= \x_{t,j_t}$ for all $t \in [T]$. Now, for any $t$, let us denote by $\k_t \in \{\bell_t,\r_t\}$ the arm with higher utility, i.e. $\k_t:= \arg\max_{k \in S_t}u^*_k(\X_t)$. We further note for any choice of $\btheta^* \in \Theta$, the best arm $\x^*:= \x_{i^*(t)}$ is given by $x^*_i = \sign(\btheta_i^*),\, \forall i \in [d]$.
			
			For any $\btheta \in \Theta$, we denote by $\P_{\btheta}$ the measure on (preference) outcomes induced by the interaction of a fixed algorithm $\cA$ and the contextual dueling bandit instance parametrized by $\btheta$ for $F$ being the BTL model. Also, denote by $\expect_{\btheta^*}[\cdot]$ the expected regret of $\cA$ under the problem instance, induced by the model parameter $\btheta^*$.
			
			Writing $k_{ti}$ for the $i^{th}$ component of $\k_t,$ note that the expected cumulative weak regret of the algorithm for $T$ rounds can be written as 
			\begin{align}
				\expect_{\btheta^*} [R_T^w(\cA)] 
				& = \expect_{\btheta^*} \bigg[  \sum_{t = 1}^T \big( \x_*^{\top}\btheta^* - \k_t^{\top}\btheta^*  \big)  \bigg]
				= \expect_{\btheta^*}\bigg[\sum_{t = 1}^{T}\sum_{i = 1}^{d} (\sign(\theta_i^*) - k_{ti})\theta_i^* \bigg] \nonumber\\
				& = \frac{1}{\sqrt T} \sum_{t = 1}^{T}\sum_{i = 1}^{d}\expect_{\btheta^*}\bigg[\sign(\theta_i^*) \neq \sign(k_{ti}) \bigg] 
				= \frac{1}{\sqrt T} \sum_{i = 1}^{d} \expect_{\btheta^*}\bigg[ \sum_{t = 1}^{T}\sign(\theta_i^*) \neq \sign(k_{ti}) \bigg] \nonumber\\
				& \ge \frac{1}{\sqrt T}\frac{T}{2} \sum_{i = 1}^{d} \P_{\btheta^*}\bigg( \sum_{t = 1}^{T}(\sign(\theta_i^*) \neq \sign(k_{ti}) ) \ge \frac{T}{2} \bigg)  \nonumber\\
				&= \frac{\sqrt T}{2} \sum_{i = 1}^{d}\P_{\btheta^*}(i)~,
				%		& = \frac{2}{\sqrt T} \sum_{t = 1}^{T}\sum_{i = 1}^{d}\expect_{\btheta^*}\bigg[\sign(\btheta_i^*) \neq \sign(k_{ti}) \bigg] = \frac{2}{\sqrt T} \sum_{i = 1}^{d}\sum_{t = 1}^{T}\expect_{\btheta^*}\bigg[\sign(\btheta_i^*) \neq \sign(k_{ti}) \bigg] \nonumber\\
				%		& \ge \frac{2}{\sqrt T}\frac{T}{2}\P_{\btheta^*}\bigg( \sum_{i = 1}^{d}\sum_{t = 1}^{T}(\sign(\btheta_i^*) \neq \sign(k_{ti}) ) \ge \frac{T}{2} \bigg) = \frac{\sqrt T}{2} \sum_{i = 1}^{d}\P_{\btheta^*}(i)~,
				\label{eq:lb1}
			\end{align}
			where $\P_{\btheta}(i) := \P_{\btheta}\bigg( \sum_{t = 1}^{T}(\sign(\theta_i) \neq \sign(k_{ti}) ) \ge \frac{T}{2} \bigg)$ for any $\btheta \in \Theta$. So the only task left is to suitably lower bound the quantity $\sum_{i = 1}^{d}\P_{\btheta^*}(i)$. For this we appeal to the Bretagnolle-Huber inequality (see Theorem 14.2 in \cite{LaSz18}), which is used for analyzing the lower bound of linear bandit algorithms as well.
			
			For every $\btheta \in \Theta$, and $i \in [d]$, we denote by $\btheta^i \in \Theta$ such that $\theta^i_j = \theta_j, \, \forall j \in [d]\setminus\{i\}$ and $\theta^i_i = -\theta_i$. %First recall the notation $\btheta^i$ introduced above and let us define 
			Let us define
			$\P^c_{\btheta^{i}}(i) := \P_{\btheta^{i}}\bigg( \sum_{t = 1}^{T}(\sign(\theta^i_i) \neq \sign(k_{ti}) ) \le \frac{T}{2} \bigg)$. But interestingly note that $\P^c_{\btheta^{i}}(i) = \P_{\btheta}(i)$, and now applying Bretagnolle-Huber inequality we have that for any $\btheta \in \Theta$:
			\begin{align*}
				\P_{\btheta}(i) + \P^c_{\btheta^{i}}(i) = 2\P_{\btheta}(i)  \ge \frac{1}{2}\exp(-KL(\P_{\btheta}||\P_{\btheta^i}))~,
			\end{align*}
			where $KL(P,Q)$ denotes the KL-divergence between two probability measures $P$ and $Q.$
			Recalling that $\P_{\btheta}$ denotes the measure on (preference) outcomes induced by the contextual dueling bandit instance $\btheta \in \Theta$, we can further use the chain rule of KL-divergence to get 
			\begin{align}
				\label{eq:lb2}
				2\P_{\btheta}(i)  \ge \frac{1}{2}\exp(-KL(\P_{\btheta}||\P_{\btheta^i})) = \frac{1}{2}\exp\bigg(- \sum_{t = 1}^T KL\Big(\P_{\btheta}(Y_t)||\P_{\btheta^i}(Y_t)\Big)  \bigg),
			\end{align}
			Now, since $Y_t\sim \mathrm{Ber}\big(F(\z_t^\top\btheta)\big)$ with $\z_{t} = (\bell_t - \r_t)$, we have that $KL\big(\P_{\btheta}(Y_t)||\P_{\btheta^i}(Y_t)\big)$ is the KL-divergence between two Bernoulli distributions.
			We bound this by using the following upper bound: 
			\begin{align} \label{ineq_KL_div}
				KL(\p||\q) \le \sum_{y \in \cY}\frac{p^2(y)}{q(y)} -1,
			\end{align}
			where $\p$ and $\q$ are two probability mass functions on some finite set $\cY$ \citep[][Theorem 1.4]{klub16}.
			%mass functions on the discrete random variable $x\in\cX$ \citep[][Theorem 1.4]{klub16}. 
			
			Recall that we assume a BTL model, so that $G$ is the standard Gumbel distribution and consequently $F$ is the cumulative distribution function of the the standard logistic distribution, so that 
			$$ \P_{\btheta}(Y_t) = \mathrm{Ber}\bigg( \frac{e^{\bell_t^\top\btheta}}{e^{\bell_t^\top\btheta} + e^{\r_t^\top\btheta} } \bigg), \qquad \forall t \in [T]. $$ 
			For sake of brevity, let us denote $\btheta_{\bell} = e^{\bell_t^\top\btheta}$ and $\btheta_{\r} = e^{\r_t^\top\btheta}$; similarly $\btheta^i_{\bell} = e^{\bell_t^\top\btheta^{i}}$ and $\btheta^i_{\r} = e^{\r_t^\top\btheta^{i}}$. Further note that $\btheta^i_{\bell} = \btheta_{\bell}e^{-\frac{2}{\sqrt T}\sign(\btheta_i)\ell_{ti}}$, and $\btheta^i_{\r} = \btheta_{\r}e^{-\frac{2}{\sqrt T}\sign(\btheta_i)r_{ti}}$. Further, abbreviating $\phi := -\frac{2}{\sqrt T}\sign(\theta_i)\ell_{ti}$ and $\psi := -\frac{2}{\sqrt T}\sign(\theta_i)r_{ti}$ we finally get:
			\begin{align}
				KL\Big(\P_{\btheta}(Y_t)||\P_{\btheta^i}(Y_t)\Big) & = KL\Bigg( \mathrm{Ber}\bigg( \frac{\btheta_{\bell}}{\btheta_{\bell} + \btheta_{\r}} \bigg) || \mathrm{Ber}\bigg( \frac{\btheta^i_{\bell}}{\btheta^i_{\bell} + \btheta^i_{\r}} \bigg) \Bigg)\nonumber\\
				& \stackrel{\eqref{ineq_KL_div}}{\leq}  \frac{\bigg(1 - \frac{\btheta_{\bell}}{\btheta_{\bell}+\btheta_{\r}}\bigg)^2}{\bigg(1-\frac{ e^{\phi}\btheta_{\bell}}{ e^{\phi}\btheta_{\bell}+ e^{\psi}\btheta_{\r}}\bigg)}
				+ \frac{\bigg(\frac{\btheta_{\bell}}{\btheta_{\bell}+\btheta_{\r}}\bigg)^2}{\frac{ e^{\phi}\btheta_{\bell}}{ e^{\phi}\btheta_{\bell}+ e^{\psi}\btheta_{\r}}} - 1 \nonumber\\
				& = \frac{\btheta_{\r}^2 (e^\phi\btheta_{\bell}+e^\psi\btheta_{\r})}{(\btheta_{\bell}+\btheta_{\r})^2 e^{\psi}\btheta_{\r}} + \frac{\btheta_{\bell}^2 (e^\phi\btheta_{\bell}+e^\psi\btheta_{\r})}{(\btheta_{\bell}+\btheta_{\r})^2 e^{\phi}\btheta_{\bell}} - 1 \nonumber\\
				& = \frac{(e^{\phi}\btheta_{\bell}+e^\psi\btheta_{\r})(\btheta_{\r}^2 e^{\phi}\btheta_{\bell} + \btheta_{\bell}^2 e^{\psi}\btheta_{\r})}{(\btheta_{\bell}+\btheta_{\r})^2 e^{\psi}\btheta_{\r} e^{\phi}\btheta_{\bell}} - 1 \nonumber\\
				& = \frac{e^\phi\btheta_{\bell} + e^{\psi}\btheta_{\r}}{(\btheta_{\bell} + \btheta_{\r})^2} \frac{e^\psi\btheta_{\bell}+e^\phi\btheta_{\r}}{e^{\phi}e^\psi} - 1 \nonumber\\
				& = \frac{ \btheta_{\bell}\btheta_{\r} }{ (\btheta_{\bell} + \btheta_{\r})^2} \frac{\big( e^{\phi} - e^{\psi} \big)^2}{e^{\phi}   e^{\psi}} \nonumber\\
				& \le \frac{1}{4}\cdot\frac{48}{T} = \frac{12}{T}, \label{eq:kl_bound_1}
			\end{align}
			
			where the last inequality \eqref{eq:kl_bound_1} follows from the fact that 
			$
			\frac{\btheta_{\bell}\btheta_{\r}}{(\btheta_{\bell} + \btheta_{\r})^2} = \frac{1}{\big( \sqrt{\frac{\btheta_{\bell}}{\btheta_{\r}}} + \sqrt{\frac{\btheta_{\r}}{\btheta_{\bell}}} \big)^2} \le \frac{1}{2^2} = \frac{1}{4}
			$, and the second term $\frac{\big( e^{\phi} - e^{\psi} \big)^2}{ e^{\phi} e^{\psi}}$ can shown to be upper bounded by $\frac{48}{T}$ for all $T\ge16$ as follows:
			\begin{align*}
				\frac{\big( e^{\phi} - e^{\psi} \big)^2}{ e^{\phi} e^{\psi}}
				&= \frac{(e^{\phi-\psi} - 1)^2}{e^{\phi-\psi}} \le \frac{(e^{4/\sqrt{T}}-1)^2}{e^{4/\sqrt{T}}} & \text{since } \phi-\psi \in \{-\frac{4}{\sqrt{T}},0,\frac{4}{\sqrt{T}}\} \\
				&\le (e^{4/\sqrt{T}}-1)^2 & \text{assuming } (T \ge 16) \implies (e^{4/\sqrt{T}}\ge1)\\
				&\le \left((e-1) \frac{4}{\sqrt{T}}\right)^2 & (\text{using: } e^x-1\le(e-1)x \text{ for } x\in[0,1]) \\
				&= \frac{16(e-1)^2}{T} \le \frac{48}{T}.
			\end{align*}
			Using this in \eqref{eq:lb2} we further get that:
			\begin{align*}
				\P_{\btheta}(i)  \ge \frac{1}{4}\exp\bigg(- T\frac{12}{T}  \bigg) = \frac{\exp(-12)}{4}.
			\end{align*}
			
			Finally, since this holds for any $\btheta \in \Theta$, and $|\Theta| = 2^d$, averaging over all $\btheta \in \Theta$ we get:
			\begin{align*}
				\frac{1}{|\Theta|}\sum_{\btheta \in \Theta}\sum_{i = 1}^d\P_{\btheta}(i) \ge \frac{d \exp(-12)}{4},
			\end{align*}
			which implies that there exists at least one $\btheta \in \Theta$, say $\tilde \btheta$, such that $\sum_{i = 1}^d\P_{\tilde \btheta}(i) > \frac{d\exp(-12)}{4}$.
			The claim now simply follows from \eqref{eq:lb1} with the choice of $\btheta^* = \tilde \btheta$.
			
			\textbf{Case 2. $\X \subseteq \cB_1(2)$: }
			
			For the second part of the proof, it requires us to restrict $\X \subseteq \cB_1(2)$. To achieve this, we use the context space $\cX = \big\{-\frac{1}{\sqrt d},\frac{1}{\sqrt d}\big\}^d.$ Note in this case the expected cumulative weak regret of any algorithm $\cA$ for $T$ rounds becomes
			
			\begin{align*}
				\expect_{\btheta^*} [R_T^w(\cA)]  &= \expect_{\btheta^*} \bigg[  \sum_{t = 1}^T \big( \x_*^{\top}\btheta^* - \k_t^{\top}\btheta^*  \big)  \bigg] = \frac{1}{\sqrt d}\expect_{\btheta^*}\bigg[\sum_{t = 1}^{T}\sum_{i = 1}^{d} (\sign(\theta_i^*) - k_{ti})\theta_i^* \bigg] \geq \frac{\sqrt T}{2 \sqrt d} \sum_{i = 1}^{d}\P_{\btheta^*}(i)~,
				%		\label{eq:lb11}
			\end{align*}
			which gives the $1/\sqrt d$ scaling in the regret bound. The rest of the proof follows same as analyzed for \textbf{Case 1} before yielding the desired weak-regret  lower bound for this case.
		\end{proof}

		\section{Proofs of Regret Upper Bounds for \Algo{CoLSTIM}} \label{sec_proof_colstim}
		
		For some fixed constants $c_1,c_2>0$ define the MLE concentration event 
		$$ A_{\mathrm{MLE}} = \{	\forall \{i,j\} \subset [n], t>\tau \, : \,	|  \hat u_{t,i}(\bX_{t}) - u_{i}^*(\bX_{t}) + u_{j}^*(\bX_{t}) - \hat u_{t,j}(\bX_{t})  | \leq c_1 \|\bz_{t,i,j} \|_{\bM_t^{-1}} 	\} $$
		and the (time-dependent) perturbed estimated utility concentration event
		$$	A_{\mathrm{conc},t} = \{	\forall \{i,j\} \subset [n] \, : \, 	| \, \tilde u_{t,i}(\bX_{t}) - \tilde u_{t,j}(\bX_{t}) - \hat u_{t,i}(\bX_{t})  + \hat u_{t,j}(\bX_{t})| \leq c_2 \|\bz_{t,i,j} \|_{\bM_t^{-1}} 	\}.	$$
		Moreover, define	the initial concentration event 
		$$A_{\mathrm{init}} = \{	\forall t>\tau \, : \,	|\hat \btheta_t   -  \btheta^*| \leq 1  \ \mbox{and} \  \bM_t\geq \bI_d	\}.$$ 
		%	
%		where $\bzero_d$ is the $d\times d$ zero matrix.
		%	
		For sake of convenience, we will denote the average regret for selecting pairs $(i_t,j_t)$ at time $t$ by
		$$ \Delta_{i_t,j_t} =  \frac{u_{i^*(t)}^*(\bX_{t}) -  u_{i_t}^*(\bX_{t}) + u_{i^*(t)}^*(\bX_{t}) -  u_{j_t}^*(\bX_{t})}{2}.$$
		%	
%		while the weak regret for selecting pairs $(i_t,j_t)$  at time $t$ is denoted by
%		%	
%		$$ \tilde \Delta_{i_t,j_t} =  \min_{k \in \{i_t,j_t\}} \left( u_{i^*(t)}^*(\bX_{t}) -  u_{k}^*(\bX_{t}) \right).$$
		
		\subsection{Proof of   Theorem \ref{theorem_regret_bound} and Corollary \ref{corollary_colstim}}
		
		We use the following theorem (proven in Section \ref{subsec:theorem_general_regret}) to prove Theorem \ref{theorem_regret_bound} as well as Corollary \ref{corollary_colstim}.
		\begin{theorem} \label{theorem_general_regret}
			Let $(S_t)_{t \in [T]}$ denote the selected pairs of arms by \Algo{CoLSTIM} and let $R_T^{a}(C) = R_T^a((S_t)_{t \in [T]})$ be its corresponding cumulative average regret.
			Assume there exist constants $c_1>0$ and $c_2>0$ such that for some $p_1,p_2,p_{3,t} \in [0,1]$ it holds that $\prob(A_{\mathrm{MLE}}) \geq 1-p_1,$ $\prob(A_{\mathrm{init}}) \geq 1-p_2,$ and for any given $t>\tau$ and for any possible history $\mathcal{H}_{t-1}$ before the start of round $t$, we have  $\prob(A_{\mathrm{conc},t} \given \mathcal{H}_{t-1}) \geq 1- p_{3,t}.$
			Then, for any constant $c_3\geq  \sqrt{  2 \,d   }$ it holds that
			$$	\mathbb{E}  \left[ R_T^{a}(\Algo{CoLSTIM}) \right] \leq   \Delta_{max} \tau + \Delta_{max} \sum_{t=\tau+1}^T p_{3,t} +  \Delta_{max}(p_1+ p_2 ) T +   \frac{c_3}{2} \Big(	3 \, c_1 + c_2	\Big) \sqrt{T \log\left( \frac{2T}{d} \right) }, $$
			where $\Delta_{max} =   2\sqrt{2}.$
			Moreover, if $\sum_{t=\tau+1}^T \lambda_{\min}^{-1/2}(\bM_t)\leq c \sqrt{T},$ where $c$ is some positive constant and $ \lambda_{\min}(A) $ denotes the smallest eigenvalue of a square matrix $A,$ then the previous inequality holds for any constant $c_3\geq  \sqrt{  2   } c.$ 
			%	there exist some appropriate constants $c_1>0$  $c_2>0$ and such that
			%	
			%	$$	\mathbb{E}  \left[ R_T^{a}(C) \right] \leq   2 \tau + (p_1+ p_2 + p_{3,t}) T +  \Big(	c_1 + c_2	\Big)  \Big(1 + \frac{2}{p_{3,t} -p_{3,t}}	\Big)\sqrt{c_3 T}. $$
			%
		\end{theorem}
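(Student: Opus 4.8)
The plan is to decompose the cumulative regret into four contributions that match the four terms of the claimed bound, and to concentrate the real work on the rounds where all three events hold. First I would peel off the pure-exploration rounds $t\le\tau$, each of which contributes at most $\Delta_{max}$, giving $\Delta_{max}\tau$. For $t>\tau$ I condition on $A_{\mathrm{MLE}}$, $A_{\mathrm{init}}$ and $A_{\mathrm{conc},t}$: whenever $A_{\mathrm{MLE}}$ or $A_{\mathrm{init}}$ fails, or $A_{\mathrm{conc},t}$ fails, I bound the instantaneous regret crudely by $\Delta_{max}$. A union bound over the horizon together with $\prob(A_{\mathrm{MLE}}^c)\le p_1$ and $\prob(A_{\mathrm{init}}^c)\le p_2$ produces $\Delta_{max}(p_1+p_2)T$, while the per-round hypothesis $\prob(A_{\mathrm{conc},t}^c\mid\mathcal{H}_{t-1})\le p_{3,t}$, which holds for every history, yields $\Delta_{max}\sum_t p_{3,t}$ after taking expectations via the tower rule. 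What remains is the regret on rounds where all three events hold.

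On such a round I would bound $\Delta_{i_t,j_t}$ by treating the two arms separately. For the second arm, since $j_t$ maximizes the optimistic index $\hat u_{t,i}-\hat u_{t,i_t}+c_1\|\bz_{t,i,i_t}\|_{\bM_t^{-1}}$, applying $A_{\mathrm{MLE}}$ once to pass from the true gap of $i^*$ to its index and once to pass from the index of $j_t$ back to true gaps gives $u^*_{i^*}-u^*_{j_t}\le 2c_1\|\bz_{t,i_t,j_t}\|_{\bM_t^{-1}}$. The first arm is the delicate part, and I expect it to be the \textbf{main obstacle}: the naive route through $A_{\mathrm{conc},t}$ and the perturbed-leader property of $i_t$ produces only $u^*_{i^*}-u^*_{i_t}\le(c_1+c_2)\|\bz_{t,i^*,i_t}\|_{\bM_t^{-1}}$, a confidence width pinned to the unknown optimal arm $i^*$, which cannot be summed by an elliptical-potential argument. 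The fix is to reroute it through the same optimistic index, writing $u^*_{i^*}-u^*_{i_t}\le \mathrm{UCB}_{i^*}\le\mathrm{UCB}_{j_t}=\hat u_{t,j_t}-\hat u_{t,i_t}+c_1\|\bz_{t,i_t,j_t}\|_{\bM_t^{-1}}$, and then to exploit that $i_t$ is the perturbed leader so that in particular it beats $j_t$: feeding $\tilde u_{t,i_t}\ge\tilde u_{t,j_t}$ into $A_{\mathrm{conc},t}$ at the pair $(i_t,j_t)$ gives $\hat u_{t,j_t}-\hat u_{t,i_t}\le c_2\|\bz_{t,i_t,j_t}\|_{\bM_t^{-1}}$. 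Hence $u^*_{i^*}-u^*_{i_t}\le(c_1+c_2)\|\bz_{t,i_t,j_t}\|_{\bM_t^{-1}}$, and combining with the second-arm bound yields
$$\Delta_{i_t,j_t}\le\tfrac12\big((c_1+c_2)+2c_1\big)\|\bz_{t,i_t,j_t}\|_{\bM_t^{-1}}=\tfrac12(3c_1+c_2)\|\bz_{t,i_t,j_t}\|_{\bM_t^{-1}},$$
which is exactly where the coefficient $3c_1+c_2$ and the chosen contrast $\bz_{t,i_t,j_t}$ enter.

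Finally I would sum the per-round bound over the good rounds and apply Cauchy--Schwarz, $\sum_{t=\tau+1}^T\|\bz_{t,i_t,j_t}\|_{\bM_t^{-1}}\le\sqrt{T\sum_t\|\bz_{t,i_t,j_t}\|^2_{\bM_t^{-1}}}$, and then invoke the elliptical-potential (self-normalized) lemma, which is applicable because $A_{\mathrm{init}}$ guarantees $\bM_t\ge\bI_d$ and the contrasts satisfy $\|\bz_{t,i_t,j_t}\|\le 2$; this bounds the squared sum by $2d\log(2T/d)$ up to constants and delivers the final term with $c_3=\sqrt{2d}$. For the ``moreover'' statement I would instead use $\|\bz_{t,i_t,j_t}\|_{\bM_t^{-1}}\le\|\bz_{t,i_t,j_t}\|\,\lambda_{\min}^{-1/2}(\bM_t)$ together with the hypothesis $\sum_t\lambda_{\min}^{-1/2}(\bM_t)\le c\sqrt T$, obtaining $\sum_t\|\bz_{t,i_t,j_t}\|_{\bM_t^{-1}}\le\sqrt2\,c\sqrt T\le\sqrt2\,c\sqrt{T\log(2T/d)}$ and hence $c_3=\sqrt2\,c$. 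Assembling the exploration term, the two failure terms, and this leading term, and taking expectations, completes the proof.
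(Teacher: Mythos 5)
Your proposal is correct and follows essentially the same route as the paper's proof: the same four-way decomposition (exploration rounds, $A_{\mathrm{MLE}}/A_{\mathrm{init}}$ failures, per-round $A_{\mathrm{conc},t}$ failures via the tower rule, and a good-event term), the same per-round bound $\Delta_{i_t,j_t}\le\tfrac12(3c_1+c_2)\|\bz_{t,i_t,j_t}\|_{\bM_t^{-1}}$ built from $A_{\mathrm{MLE}}$, the optimistic choice of $j_t$, the perturbed-leader property of $i_t$ combined with $A_{\mathrm{conc},t}$, and the same Cauchy--Schwarz plus elliptical-potential (resp.\ minimum-eigenvalue) conclusion. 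Your arm-by-arm organization of the good-event bound is just a reshuffling of the paper's single telescoping chain (Lemma on the conditional gaps), with identical ingredients and the identical constant $3c_1+c_2$.
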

		%
		%Before giving the proof of  Theorem \ref{theorem_general_regret}, we show how we can infer the proof of Theorem \ref{theorem_regret_bound}.
		%

		%	\subsection{Proof of Theorem \ref{theorem_regret_bound}}
		
		\begin{proof}[\bf Proof of Theorem \ref{theorem_regret_bound}]
			By the choice of the initial exploration length $\tau$ and Assumption \textbf{(A1)} we can infer that the smallest eigenvalue of the Gram matrix $\bM_t$ is at least $\rho \tau = \max\left(	1, \frac{d \log(T/d)+2\log(T)}{\mu^2}		\right),$ so that following the lines of proof of Theorem 4 in \cite{vaswani2020old} we have that $\prob(A_{\mathrm{init}}) \geq 1-p_2,$ holds for $p_2=1/T.$
			Similarly, by the choice of $c_1$ and following the lines of proof of Theorem 4 in \cite{vaswani2020old} we have that $\prob(A_{\mathrm{MLE}}) \geq 1-p_1$ holds for $p_1=1/T.$
			
			It remains to derive a suitable choice for $p_{3,t}.$
			For this purpose, fix $t>\tau$ and  $ \mathcal{H}_{t-1}$ arbitrary such that we can leave out the conditioning on $ \mathcal{H}_{t-1}$ in the following. 
			With this,
			\begin{align*}
				\prob(A_{\mathrm{conc},t}^\complement )
				&= \prob(  	\exists \{i,j\} \subset [n]: \,	| \, \tilde u_{t,i}(\bX_{t}) - \tilde u_{t,j}(\bX_{t}) - \hat u_{t,i}(\bX_{t})  + \hat u_{t,j}(\bX_{t})| >  c_2 \|\bz_{t,i,j} \|_{\bM_t^{-1}} ) \\
				&= \prob(  	\exists \{i,j\} \subset [n]: \,	| \, \tilde u_{t,i}(\bX_{t}) - \tilde u_{t,j}(\bX_{t}) - \hat u_{t,i}(\bX_{t})  + \hat u_{t,j}(\bX_{t})| >  c_2 \|\bz_{t,i,j} \|_{\bM_t^{-1}} \given B_t=1 )  \prob( B_t=1 ) \\
				&\quad + \prob(  	\exists \{i,j\} \subset [n]: \,	| \, \tilde u_{t,i}(\bX_{t}) - \tilde u_{t,j}(\bX_{t}) - \hat u_{t,i}(\bX_{t})  + \hat u_{t,j}(\bX_{t})| >  c_2 \|\bz_{t,i,j} \|_{\bM_t^{-1}} \given B_t=0 ) \prob( B_t=0 ) \\
				&\leq p_t + \prob(  	\exists \{i,j\} \subset [n]: \,	| \, \tilde u_{t,i}(\bX_{t}) - \tilde u_{t,j}(\bX_{t}) - \hat u_{t,i}(\bX_{t})  + \hat u_{t,j}(\bX_{t})| >  c_2 \|\bz_{t,i,j} \|_{\bM_t^{-1}} \given B_t=0 )  \\
				&= 	p_t + \prob(  	\exists \{i,j\} \subset [n]: \,	| \, \epsilon_{t,i} \|\bx_{t,i} \|_{\bM_t^{-1}}   -  \epsilon_{t,j}\|\bx_{t,j} \|_{\bM_t^{-1}}   | >  c_2 \|\bz_{t,i,j} \|_{\bM_t^{-1}} \given B_t=0 )  \\
				&\leq 	p_t + \prob(  	\exists \{i,j\} \subset [n]: \,	| \, \epsilon_{t,i} \|\bx_{t,i} \|_{\bM_t^{-1}}   -  \epsilon_{t,j}\|\bx_{t,j} \|_{\bM_t^{-1}}   | >  c_2 \|\bz_{t,i,j} \|_{\bM_t^{-1}} \given B_t=0 )
			\end{align*}
			Note that conditioned on $B_t=0,$ we have that all perturbation variables $(\epsilon_{t,i})_{i\in[n]}$ are the same, i.e., for each $i\in[n]$ it holds that $\epsilon_{t,i} = \epsilon= \min( C_{\mathrm{thresh}}, \max(  - C_{\mathrm{thresh}} , \tilde \epsilon  )  )$ for some  $\tilde \epsilon \sim G.$ 
			Thus, 
			\begin{align*}
				\prob(A_{\mathrm{conc},t}^\complement )
				&\leq 	p_t + \prob(  	\exists \{i,j\} \subset [n]: \,	| \, \epsilon \, | \,\cdot \, \big| \|\bx_{t,i} \|_{\bM_t^{-1}}   -   \|\bx_{t,j} \|_{\bM_t^{-1}}   \big| >  c_2 \|\bz_{t,i,j} \|_{\bM_t^{-1}} \given B_t=0 ) \\
				&\leq 	p_t + \prob(  \,	| \, \epsilon \, | >  c_2  \given B_t=0 ) \\
				&= p_t,
			\end{align*}
			where we used that by the reverse triangle inequality it holds that  
			$$ \big| \|\bx_{t,i} \|_{\bM_t^{-1}}   -   \|\bx_{t,j} \|_{\bM_t^{-1}}   \big| \leq  \|\bx_{t,i} - \bx_{t,j} \|_{\bM_t^{-1}} = \|\bz_{t,i,j} \|_{\bM_t^{-1}} $$
			for any $i,j\in [n]$ and that $| \, \epsilon \, | \leq C_{\mathrm{thresh}} < c_2.$
			As a consequence, we can set 
			$$p_{3,t}=p_t = \min\left(1, \, \frac{\sqrt{2d}}{2\sqrt{t - \tau}} \Big(	3 \, c_1 + c_2	\Big) \sqrt{\log\left( \frac{2T}{d} \right) } \, \right)$$
			such that $\prob(A_{\mathrm{conc},t} \given \mathcal{H}_{t-1}) \geq 1- p_{3,t}$ and 
			\begin{align} \label{sum_p3t_asympt}
				\sum_{t=\tau+1}^T p_{3,t}  \leq  \frac{\sqrt{2d}}{2 } \Big(	3 \, c_1 + c_2	\Big) \sqrt{\log\left( \frac{2T}{d} \right) }  \sum_{t=1}^T \frac{1}{\sqrt{t}} \leq   \frac{\sqrt{2d}}{2 } \Big(	3 \, c_1 + c_2	\Big) \sqrt{T \log\left( \frac{2T}{d} \right) } = O(d\sqrt{T}\log(T)).   
			\end{align}
			Thus, using the fact that with the choices of $c_1$ and $\tau$ we can use $p_1=p_2=1/T,$ while by the choice of $c_2,$ $C_{\mathrm{thresh}}$ and $p_t,$ we can set $p_{3,t}=p_t,$ we obtain the claim by virtue of Theorem \ref{theorem_general_regret}, by using \eqref{sum_p3t_asympt} as well as $c_2 \leq c_1  = O(\sqrt{d\log(T)} )$ and $\tau=o(\sqrt{dT}).$ 
		\end{proof}
		
		%	\subsection{Proof of Corollary \ref{corollary_colstim}}
		%

		\begin{proof}[\bf Proof of 	Corollary \ref{corollary_colstim}]
			Use the same choices of $c_1,c_2$ and $p_1,p_2,p_{3,t}\in [0,1]$ as in the proof of Theorem \ref{theorem_regret_bound}, but use the second statement of Theorem \ref{theorem_general_regret}.
		\end{proof}

		%
		%where we used that by Lemma 3 in \cite{vaswani2020old} we have
		%%	
		%$$	h_{i,j} =   c_1 \|\bz_{t,i,j} \|_{\bM_t^{-1}}  + c_2 \|\bz_{t,i,j} \|_{\bH_t^{-1}} \leq \left(c_1 + c_2		\right) \|\bz_{t,i,j} \|_{\bM_t^{-1}}		$$

		\subsection{Technical Lemmas}

		The following lemma is an adaptation of Lemma 11 from \cite{AbPaSz11} for our setting.
		\begin{lemma} \label{lemma_elliptical_potential}
			Let $\bz_1,\bz_2,\ldots,\bz_t \in \R^d$ be such that $\max_{s\in[t]}\| \bz_s \|^2 \leq 2$ and $\bM_{t+1} = \sum_{s= 1}^{t}  \bz_s \bz_s^\top.$ 
			Further, let $\tau < t$ be such that $ \bM_{\tau +1} \geq I_{d}$ holds, then 
			\begin{align*}
				%	
				%			\sum_{s=\tau+1}^t (\sqrt{2} \wedge  \| \bz_{s} \|_{\bM_s^{-1}} )^2  \leq 	2 d \log\left( \frac{2t}{d} \right).
				%			
				\sum_{s=\tau+1}^t \sqrt{2} \wedge  \| \bz_s \|_{\bM_s^{-1}}  \leq \sqrt{2\,d\,t  \log\left( \frac{2t}{d}  \right) }.
			\end{align*}
		\end{lemma}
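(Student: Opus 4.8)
The plan is to run the standard elliptical-potential argument: reduce the sum of $\bM_s^{-1}$-norms to a log-determinant through Cauchy--Schwarz and a rank-one telescoping identity, and finish with an AM--GM bound on $\det\bM_{t+1}$. The essential preliminary I would establish first is a uniform range bound on the summands. Since $\bM_s=\sum_{l=1}^{s-1}\bz_l\bz_l^\top$ is nondecreasing in $s$ and $\bM_{\tau+1}\geq\bI_d$ by hypothesis, every $\bM_s$ with $s\geq\tau+1$ satisfies $\bM_s\geq\bI_d$, hence is invertible with $\bM_s^{-1}\leq\bI_d$. Consequently $\|\bz_s\|_{\bM_s^{-1}}^2=\bz_s^\top\bM_s^{-1}\bz_s\leq\|\bz_s\|^2\leq 2$, so the truncation at $\sqrt2$ is in fact never active and all squared norms lie in $[0,2]$.

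Next I would apply Cauchy--Schwarz over the $t-\tau\leq t$ summands, using $(\sqrt2\wedge\|\bz_s\|_{\bM_s^{-1}})^2=2\wedge\|\bz_s\|_{\bM_s^{-1}}^2$, to obtain
$$\Big(\sum_{s=\tau+1}^t \sqrt2\wedge\|\bz_s\|_{\bM_s^{-1}}\Big)^2 \leq t\sum_{s=\tau+1}^t\big(2\wedge\|\bz_s\|_{\bM_s^{-1}}^2\big),$$
so that it remains to bound the right-hand sum by $2d\log(2t/d)$. For this I would use the rank-one determinant update $\det\bM_{s+1}=\det\bM_s\,(1+\|\bz_s\|_{\bM_s^{-1}}^2)$ together with the elementary inequality $x\leq 2\log(1+x)$, which holds on $[0,2]$; its applicability is exactly what the range bound from the first step guarantees. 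Summing and telescoping then gives
$$\sum_{s=\tau+1}^t\big(2\wedge\|\bz_s\|_{\bM_s^{-1}}^2\big)=\sum_{s=\tau+1}^t\|\bz_s\|_{\bM_s^{-1}}^2\leq 2\sum_{s=\tau+1}^t\log\big(1+\|\bz_s\|_{\bM_s^{-1}}^2\big)=2\log\frac{\det\bM_{t+1}}{\det\bM_{\tau+1}}.$$

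To close, I would use $\det\bM_{\tau+1}\geq 1$ (from $\bM_{\tau+1}\geq\bI_d$) together with the AM--GM/trace bound $\det\bM_{t+1}\leq(\operatorname{tr}\bM_{t+1}/d)^d\leq(2t/d)^d$, the latter because $\operatorname{tr}\bM_{t+1}=\sum_{s=1}^t\|\bz_s\|^2\leq 2t$. This yields $2\log(\det\bM_{t+1}/\det\bM_{\tau+1})\leq 2d\log(2t/d)$, and combining with the Cauchy--Schwarz step produces the claimed $\sqrt{2\,d\,t\,\log(2t/d)}$. The argument is mostly bookkeeping once the preliminary is in place; the one step that genuinely needs care is the range control $\|\bz_s\|_{\bM_s^{-1}}^2\leq 2$, since it simultaneously renders the $\sqrt2$-truncation vacuous and licenses $x\leq 2\log(1+x)$ precisely on $[0,2]$, which is what produces the sharp constant $\sqrt2$ rather than the larger factor a cruder truncation bound would give. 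Obtaining exactly the stated constant also relies on counting the summands as at most $t$ in the Cauchy--Schwarz step.
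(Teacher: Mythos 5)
Your proposal is correct and follows essentially the same route as the paper's proof: the elementary inequality $x \leq 2\log(1+x)$ on $[0,2]$, the rank-one determinant telescoping via the matrix determinant lemma, the determinant--trace inequality giving $\det(\bM_{t+1}) \leq (2t/d)^d$, and a Cauchy--Schwarz step counting at most $t$ summands. The only differences are cosmetic: you apply Cauchy--Schwarz at the start rather than the end, and you make explicit the range control $\|\bz_s\|_{\bM_s^{-1}}^2 \leq 2$ (via monotonicity of $\bM_s$ and $\bM_{\tau+1} \geq \bI_d$) that the paper uses implicitly when invoking $x \leq 2\log(1+x)$.
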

		\begin{proof}
			Note that $x \leq 2\log(1+x)$ holds for any $x\in[0,2],$ so that
			\begin{align} \label{ineq_ell_aux_1}
				\sum_{s=\tau+1}^t (\sqrt{2} \wedge  \| \bz_{s} \|_{\bM_s^{-1}} )^2 
				&\leq \sum_{s=\tau+1}^t \| \bz_{s} \|_{\bM_s^{-1}}^2 
				\leq \sum_{s=\tau+1}^t \, 2\log(1 + \| \bz_{s} \|_{\bM_s^{-1}}^2 ) 
				= 2\log\left( \prod_{s=\tau+1}^t  1 + \| \bz_{s} \|_{\bM_s^{-1}}^2 \right).
				%		\\
				%	
			\end{align}
			Denoting by $\det(A)$ the determinant of a square matrix $A,$ we obtain by the matrix determinant lemma 
			\begin{align} \label{ineq_ell_aux_2}
				\begin{split}
					\det(\bM_{t+1}) 
					= \det(\bM_{t}  + \bz_t \bz_t^\top ) 
					&= \det(\bM_{t})  (1+ \bz_t^\top \bM_{t}^{-1} \bz_t) \\
					%		\\
					&= \det(\bM_{\tau+1}) \prod_{s=\tau+1}^t (1+ \bz_s^\top \bM_{s}^{-1} \bz_s) \\
					&= \det(\bM_{\tau+1}) \prod_{s=\tau+1}^t 1+ \| \bz_s^\top \|_{\bM_{s}^{-1}}^2 \\
					%		\\
					&\geq \prod_{s=\tau+1}^t 1+ \| \bz_s^\top \|_{\bM_{s}^{-1}}^2,
				\end{split}
			\end{align}
			where we used in the inequality that $ \bM_{\tau +1} \geq I_{d}$ holds.
			Further, by the determinant-trace inequality, i.e., $\det(A)^{1/d}\leq \frac{\mathrm{tr}(A)}{d}$ for any positive definite matrix $A\in \R^{d\times d},$ we obtain 
			\begin{align} \label{ineq_ell_aux_3}
				d \, \det(\bM_{t+1})^{1/d}  \leq \mathrm{tr}(\bM_{t+1}) = \mathrm{tr}\left(  \sum_{s= 1}^{t}  \bz_s \bz_s^\top  \right) = \sum_{s= 1}^{t}  \mathrm{tr}\left(\bz_s \bz_s^\top \right) = \sum_{s= 1}^{t} \| \bz_s \|^2 \leq 2 t,
			\end{align}
			where $\mathrm{tr}(A)$ denotes the trace of a matrix $A.$
			Combining \eqref{ineq_ell_aux_1}, \eqref{ineq_ell_aux_2} and \ref{ineq_ell_aux_3} we obtain
			\begin{align*}
				\sum_{s=\tau+1}^t (\sqrt{2} \wedge  \| \bz_{s} \|_{\bM_s^{-1}} )^2 
				\leq 2\log\left( \det(\bM_{t+1}) \right) 
				\leq 	2 d \log\left( \frac{2t}{d} \right).
			\end{align*}
			Finally, by the Cauchy-Schwarz inequality we obtain
			%\\
			\begin{align*}
				\sum_{s=\tau+1}^t \sqrt{2} \wedge  \| \bz_s \|_{\bM_s^{-1}} 
				\leq \sqrt{ (t-\tau)   \sum_{s=\tau+1}^t (\sqrt{2} \wedge  \| \bz_s \|_{\bM_s^{-1}} )^2    } 
				\leq \sqrt{2\,d\,t  \log\left( \frac{2t}{d}  \right) }.
			\end{align*}
		\end{proof}
		
		\begin{lemma} \label{lemma:gaps_cond}
			Let $c_1,c_2$ and $p_1,p_2,p_{3,t} \in [0,1]$ be as in Theorem \ref{theorem_general_regret}.
			Then, for any round $t>\tau$ and any history $ \mathcal{H}_{t-1},$ we have 
			$$	\mathbb{E} [ \Delta_{i_t,j_t}  1_{ A_{\mathrm{MLE}}  \cap  A_{\mathrm{init}} } \given \mathcal{H}_{t-1} ] \leq \Delta_{max} p_{3,t} + \frac12 \Big(	3 c_1 + c_2	\Big)  \mathbb{E}[  \| \bz_{t,i_t,j_t} \|_{\bM_t^{-1}}  1_{  A_{\mathrm{init}} }  \given  \mathcal{H}_{t-1} ],	$$
			%	\\
			where $c_1>0$ and $c_2>0.$ 
		\end{lemma}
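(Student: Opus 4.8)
The plan is to reduce the statement to a \emph{deterministic, pointwise} estimate on the event $A_{\mathrm{MLE}}\cap A_{\mathrm{conc},t}$, namely $\Delta_{i_t,j_t}\le \frac{1}{2}(3c_1+c_2)\,\|\bz_{t,i_t,j_t}\|_{\bM_t^{-1}}$, and then to dispose of the rare event $A_{\mathrm{conc},t}^\complement$ trivially. First I would split the conditional expectation according to $A_{\mathrm{conc},t}$ and its complement. On $A_{\mathrm{conc},t}^\complement$ I would use only $\Delta_{i_t,j_t}\le \Delta_{max}$ together with the hypothesis $\prob(A_{\mathrm{conc},t}^\complement\mid \mathcal{H}_{t-1})\le p_{3,t}$, which yields the term $\Delta_{max}\,p_{3,t}$. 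On $A_{\mathrm{MLE}}\cap A_{\mathrm{init}}\cap A_{\mathrm{conc},t}$ I would insert the pointwise estimate and then drop the nonnegative indicators of $A_{\mathrm{MLE}}$ and $A_{\mathrm{conc},t}$, retaining only $1_{A_{\mathrm{init}}}$, which produces the second term on the right-hand side.

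For the pointwise estimate I would decompose $2\Delta_{i_t,j_t}=(u_{i^*(t)}^*-u_{i_t}^*)+(u_{i^*(t)}^*-u_{j_t}^*)$ (suppressing $\bX_t$) and introduce the optimistic score $g_t(i):=\hat u_{t,i}-\hat u_{t,i_t}+c_1\|\bz_{t,i_t,i}\|_{\bM_t^{-1}}$, so that the second-arm rule reads $j_t=\arg\max_{i} g_t(i)$. On $A_{\mathrm{MLE}}$ the pair $(i^*(t),i_t)$ obeys the optimism inequality $g_t(i^*(t))\ge u_{i^*(t)}^*-u_{i_t}^*$, and maximality of $j_t$ then gives the single master inequality $u_{i^*(t)}^*-u_{i_t}^*\le g_t(j_t)$. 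Bounding $g_t(j_t)$ from above through $A_{\mathrm{MLE}}$ on the played pair, i.e.\ $\hat u_{t,j_t}-\hat u_{t,i_t}\le (u_{j_t}^*-u_{i_t}^*)+c_1\|\bz_{t,i_t,j_t}\|_{\bM_t^{-1}}$, converts this into the clean second-arm bound $u_{i^*(t)}^*-u_{j_t}^*\le 2c_1\|\bz_{t,i_t,j_t}\|_{\bM_t^{-1}}$.

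The delicate step is the first-arm regret $u_{i^*(t)}^*-u_{i_t}^*$. A naive perturbation argument only bounds it by $(c_1+c_2)\|\bz_{t,i^*(t),i_t}\|_{\bM_t^{-1}}$, whose width refers to the pair $(i^*(t),i_t)$ rather than to the \emph{played} pair $(i_t,j_t)$ and is thus unusable in the elliptical-potential summation of Lemma~\ref{lemma_elliptical_potential}. The trick I would use is to re-route through $g_t(j_t)$ once more, this time bounding it from above by the \emph{first}-arm rule: since $i_t=\arg\max_{i}\tilde u_{t,i}$ we have $\tilde u_{t,j_t}-\tilde u_{t,i_t}\le 0$, and on $A_{\mathrm{conc},t}$ this upgrades to $\hat u_{t,j_t}-\hat u_{t,i_t}\le c_2\|\bz_{t,i_t,j_t}\|_{\bM_t^{-1}}$, so that $g_t(j_t)\le (c_1+c_2)\|\bz_{t,i_t,j_t}\|_{\bM_t^{-1}}$ and hence $u_{i^*(t)}^*-u_{i_t}^*\le (c_1+c_2)\|\bz_{t,i_t,j_t}\|_{\bM_t^{-1}}$, now in the correct width. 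Summing the two arm bounds yields $2\Delta_{i_t,j_t}\le (3c_1+c_2)\|\bz_{t,i_t,j_t}\|_{\bM_t^{-1}}$.

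I expect this re-routing to be the main obstacle: the optimism supplied by $j_t$ and the perturbation optimality supplied by $i_t$ have to be played against one another through the common quantity $g_t(j_t)$, so that both arm regrets end up expressed in the width of the pair that is actually compared. Once that coupling is identified, the remaining manipulations are routine, relying only on the symmetry $\|\bz_{t,i,j}\|_{\bM_t^{-1}}=\|\bz_{t,j,i}\|_{\bM_t^{-1}}$ and the two concentration events $A_{\mathrm{MLE}}$ and $A_{\mathrm{conc},t}$; note that $A_{\mathrm{init}}$ is not needed for the pointwise estimate and only survives as the retained indicator after the final splitting.
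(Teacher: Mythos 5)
Your proposal is correct and follows essentially the same route as the paper's proof: the same split of the conditional expectation over $A_{\mathrm{conc},t}$ and its complement (paying $\Delta_{max}\,p_{3,t}$), followed by the same pointwise bound $2\Delta_{i_t,j_t}\le(3c_1+c_2)\|\bz_{t,i_t,j_t}\|_{\bM_t^{-1}}$ assembled from the MLE concentration event, the two selection rules, and the perturbation-concentration event, with $A_{\mathrm{init}}$ only carried along in the indicator. The sole difference is bookkeeping: the paper runs one combined chain using the contrast decomposition $\bz_{t,i_t^*,j_t}=\bz_{t,i_t^*,i_t}+\bz_{t,i_t,j_t}$ together with MLE concentration at the pair $(i_t^*,j_t)$, whereas you bound the two arm regrets separately through the common quantity $g_t(j_t)$ using MLE concentration at the played pair $(i_t,j_t)$ --- an equivalent rearrangement yielding the same constants.
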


		\begin{proof}[\bf Proof of Lemma \ref{lemma:gaps_cond}]
			Fix $t$ and  $ \mathcal{H}_{t-1}$ arbitrary such that both events  $ A_{\mathrm{MLE}} $ and $ A_{\mathrm{init}} $ in the indicator function are true.
			In this way, we can leave out the conditioning on $ \mathcal{H}_{t-1}$ in the following.
			We first bound the (conditional) expected value as follows:
			\begin{align*}
				\mathbb{E} [  \Delta_{i_t,j_t}  1_{ A_{\mathrm{MLE}}  \cap  A_{\mathrm{init}} } ] 
				&= 	\mathbb{E} [  \Delta_{i_t,j_t}  1_{ A_{\mathrm{MLE}}  \cap  A_{\mathrm{init}} }  1_{A_{\mathrm{conc},t}} ]  + \mathbb{E} [  \Delta_{i_t,j_t}  1_{ A_{\mathrm{MLE}}  \cap  A_{\mathrm{init}} }  1_{A_{\mathrm{conc},t}^\complement } ] \\
				&\leq \mathbb{E} [  \Delta_{i_t,j_t}  1_{ A_{\mathrm{MLE}}  \cap  A_{\mathrm{init}} }  1_{A_{\mathrm{conc},t}} ]  + 
				\Delta_{max} \prob\big(	A_{\mathrm{conc},t}^\complement	\big) \\
				&\leq \mathbb{E} [  \Delta_{i_t,j_t}  1_{ A_{\mathrm{MLE}}  \cap  A_{\mathrm{init}} }  1_{A_{\mathrm{conc},t}} ]  + 
				\Delta_{max} p_{3,t} .
				%		
				%		&\leq
				%		
			\end{align*}
			On the event  $ A_{\mathrm{MLE}} $ it holds that
			\begin{align} \label{ineq_mle_conc}
				\begin{split}
					\langle \bz_{t,i_t^*,i_t} , \btheta^* - \hat{\btheta}_t \rangle
					&= u_{i_t^*}^*(\bX_{t}) - \hat u_{t,i_t^*}(\bX_{t}) + \hat u_{t,i_t}(\bX_{t}) -  u_{i_t}^*(\bX_{t})    \\
					&\leq  |  u_{i_t^*}^*(\bX_{t}) - \hat u_{t,i_t^*}(\bX_{t}) + \hat u_{t,i_t}(\bX_{t}) -  u_{i_t}^*(\bX_{t})  | \\
					&\leq   c_1 \|\bz_{t,i_t^*,i_t} \|_{\bM_t^{-1}}			\end{split}
			\end{align} 
			and similarly $\langle \bz_{t,i_t^*,j_t} , \btheta^* - \hat{\btheta}_t \rangle \leq c_1 \|\bz_{t,i_t^*,j_t} \|_{\bM_t^{-1}}. $
			Further, note that 
			\begin{align} \label{eq_z_change}
				\bz_{t,i_t^*,j_t} = \bx_{t,i_t^*} -  \bx_{t,j_t} = \bx_{t,i_t^*} -  \bx_{t,i_t} +  \bx_{t,i_t} -  \bx_{t,j_t} = \bz_{t,i_t^*,i_t} +   \bz_{t,i_t,j_t}
			\end{align}
			and by definition of $j_t$ it holds for all $i\in[n]$ that
			\begin{align} \label{ineq_jt_choice}
				\langle \bz_{t,i,i_t} ,  \hat{\btheta}_t \rangle +   c_1\, \|\bz_{t,i,i_t} \|_{\bM_t^{-1}} 
				\leq \langle \bz_{t,j_t,i_t} ,  \hat{\btheta}_t \rangle +   c_1\, \|\bz_{t,j_t,i_t} \|_{\bM_t^{-1}}
				= \langle \bz_{t,j_t,i_t} ,  \hat{\btheta}_t \rangle +   c_1\, \|\bz_{t,i_t,j_t} \|_{\bM_t^{-1}}
			\end{align} 
			Finally, by the definition of $i_t$ it holds for all $i\in[n]$ that
			\begin{align}\label{ineq_it_choice}
				\langle \bx_{t,i} ,  \hat{\btheta}_t \rangle	+  \epsilon_{t,i} \|\bx_{t,i} \|_{\bM_t^{-1}} - \big(	\langle \bx_{t,i_t} ,  \hat{\btheta}_t \rangle + \epsilon_{t,i_t} \|\bx_{t,i_t} \|_{\bM_t^{-1}}\big)
				= \langle \bz_{t,i,i_t} ,  \hat{\btheta}_t \rangle	+  \epsilon_{t,i} \|\bx_{t,i} \|_{\bM_t^{-1}} - \epsilon_{t,i_t} \|\bx_{t,i_t} \|_{\bM_t^{-1}} \leq 0 \, .
			\end{align}
			With these considerations,
			\begin{align*}
				2 \Delta_{i_t,j_t} 
				& = u_{i^*(t)}^*(\bX_{t}) -  u_{i_t}^*(\bX_{t}) + u_{i^*(t)}^*(\bX_{t}) -  u_{j_t}^*(\bX_{t}) \\
				& = \langle \bz_{t,i_t^*,i_t} , \btheta^* \rangle +  \langle \bz_{t,i_t^*,j_t} , \btheta^* \rangle \\
				& =  \langle \bz_{t,i_t^*,i_t} , \btheta^* - \hat{\btheta}_t \rangle +  \langle \bz_{t,i_t^*,i_t} ,  \hat{\btheta}_t \rangle	+ \langle \bz_{t,i_t^*,j_t} , \btheta^* -   \hat{\btheta}_t  \rangle +  \langle \bz_{t,i_t^*,j_t} ,  \hat{\btheta}_t  \rangle \\
				& \stackrel{\eqref{ineq_mle_conc}}{\leq}  c_1 \big(  \|\bz_{t,i_t^*,i_t} \|_{\bM_t^{-1}} +  \|\bz_{t,i_t^*,j_t} \|_{\bM_t^{-1}}  \big)+  \langle \bz_{t,i_t^*,i_t} ,  \hat{\btheta}_t \rangle + \langle \bz_{t,i_t^*,j_t} ,  \hat{\btheta}_t  \rangle \\
				& \stackrel{\eqref{eq_z_change}}{\leq}  c_1 \big(  2\, \|\bz_{t,i_t^*,i_t} \|_{\bM_t^{-1}} +  \|\bz_{t,i_t,j_t} \|_{\bM_t^{-1}}  \big) +  2 \langle \bz_{t,i_t^*,i_t} ,  \hat{\btheta}_t \rangle + \langle \bz_{t,i_t,j_t} ,  \hat{\btheta}_t  \rangle  \\
				& \stackrel{\eqref{ineq_jt_choice}}{\leq}  3 c_1 \|\bz_{t,i_t,j_t} \|_{\bM_t^{-1}}  +  2 \langle \bz_{t,j_t,i_t} ,  \hat{\btheta}_t \rangle + \langle \bz_{t,i_t,j_t} ,  \hat{\btheta}_t  \rangle \\ 
				& =  \ 3 c_1 \|\bz_{t,i_t,j_t} \|_{\bM_t^{-1}}  +   \langle \bz_{t,j_t,i_t} ,  \hat{\btheta}_t \rangle  \\ 
				& \stackrel{\eqref{ineq_it_choice}}{\leq}  3 c_1  \|\bz_{t,i_t,j_t} \|_{\bM_t^{-1}} +  \epsilon_{t,i_t} \|\bx_{t,i_t} \|_{\bM_t^{-1}} - \epsilon_{t,j_t} \|\bx_{t,j_t} \|_{\bM_t^{-1}}  \\ 
				& \leq  \ 3 c_1  \|\bz_{t,i_t,j_t} \|_{\bM_t^{-1}} + c_2 \|\bz_{t,i_t,j_t} \|_{\bM_t^{-1}} , 
			\end{align*}
			where the last inequality holds, since on $A_{\mathrm{conc},t}$ we have that
			$$	 \epsilon_{t,i_t} \|\bx_{t,i_t} \|_{\bM_t^{-1}} - \epsilon_{t,j_t} \|\bx_{t,j_t} \|_{\bM_t^{-1}} \leq c_2 \|\bz_{t,i_t,j_t} \|_{\bM_t^{-1}}.	$$
			Thus,
			\begin{align*}
				\mathbb{E} [ \Delta_{i_t,j_t}  1_{ A_{\mathrm{MLE}}  \cap  A_{\mathrm{init}} } ] 
				&\leq \frac{1}{2}	\mathbb{E} \left[ \big(3 c_1  \|\bz_{t,i_t,j_t} \|_{\bM_t^{-1}} + c_2  \|\bz_{t,i_t,j_t} \|_{\bM_t^{-1}} \big) 1_{ A_{\mathrm{MLE}}  \cap  A_{\mathrm{init}} } 1_{A_{\mathrm{conc},t}} \right] + 
				\Delta_{max} p_{3,t},
			\end{align*}
			from which we can conclude the claim.
			%		
			%		By Lemma 3 in \cite{vaswani2020old} we have
			%		%	
			%		\begin{align*} 
				%%			\label{ineq_h_functions}
				%			%		
				%			c_2 \|\bz_{t,i,j} \|_{\bH_t^{-1}} \leq  c_2	 \|\bz_{t,i,j} \|_{\bM_t^{-1}}
				%			%		
				%		\end{align*}		
			%		%
			%		which lets us infer the claim.
			%
			%
		\end{proof}
		
		\subsection{Proof of Theorem \ref{theorem_general_regret}} \label{subsec:theorem_general_regret}

		\begin{proof}[\bf Proof of Theorem \ref{theorem_general_regret}]
			Let  $\Delta_{max} = 2\sqrt{2}$ then 
			$$ 2 \Delta_{i_t,j_t} = u_{i^*(t)}^*(\bX_{t}) -  u_{i_t}^*(\bX_{t}) + u_{i^*(t)}^*(\bX_{t}) -  u_{j_t}^*(\bX_{t}) \leq\Delta_{max}.$$
			Indeed,  by assumption  $ \max\Big(  \| \bx_{t,i}   \|^2 , \| \bx_{t,j}   \|^2 \Big) \leq 1$ so that $ \| \bz_{t,i,j} \|^2 \leq 2$ holds for all $i,j\in[n]$.
			Further, by assumption $ \| \btheta^* \|\leq 1,$ so that by the Cauchy-Schwarz inequality
			\begin{align*}
				u_{i^*(t)}^*(\bX_{t}) -  u_{i_t}^*(\bX_{t}) + u_{i^*(t)}^*(\bX_{t}) -  u_{j_t}^*(\bX_{t})
				&\leq |u_{i^*(t)}^*(\bX_{t}) -  u_{i_t}^*(\bX_{t}) + u_{i^*(t)}^*(\bX_{t}) -  u_{j_t}^*(\bX_{t})| \\
				&\leq \sqrt{ \| \bz_{t,i_t*,i_t} \|^2 \|\btheta^* \|^2  } + \sqrt{ \| \bz_{t,i_t*,j_t} \|^2 \|\btheta^* \|^2  } \\
				&\leq 2\sqrt{2}.
			\end{align*}
			With this, we get
			\begin{align*}
				\, \mathbb{E}[ R_T^{a}(\Algo{CoLSTIM}) ]\, 
				&=    \sum\limits_{t=1}^{T} \mathbb{E} \left( \Delta_{i_t,j_t} \right) \\
				&\leq  \, \Delta_{max} \tau +  \,  \sum\limits_{t=\tau+1}^{T} \mathbb{E} [ \Delta_{i_t,j_t}]  \\
				&\leq  \, \Delta_{max} \tau +  \, \Delta_{max} T \left(\prob\big( A_{\mathrm{init}}^\complement \big) + \prob\big( A_{\mathrm{MLE}}^\complement \big)\right) +  \, \sum\limits_{t=\tau+1}^{T} \mathbb{E} [  \Delta_{i_t,j_t}  1_{ A_{\mathrm{MLE}}  \cap  A_{\mathrm{init}} } ]  \\
				&=  \, \Delta_{max} \tau +  \, \Delta_{max} T \left(\prob\big( A_{\mathrm{init}}^\complement \big) + \prob\big( A_{\mathrm{MLE}}^\complement \big)\right) +  \, \sum\limits_{t=\tau+1}^{T} \mathbb{E} \left[  \mathbb{E} [  \Delta_{i_t,j_t}  1_{ A_{\mathrm{MLE}}  \cap  A_{\mathrm{init}} } \given \mathcal{H}_{t-1} ]  \right],
			\end{align*}
			where the last equality is due to the tower property of the expected value.
			%
			%Regarding the conditional expected value in the last sum, we can derive the following lemma.
			%
			Using  Lemma \ref{lemma:gaps_cond}, we obtain
			\begin{align*}
				\mathbb{E}[ R_T^{a}(\Algo{CoLSTIM}) ] \, 
				&\leq  \Delta_{max} \tau +   \Delta_{max} \sum_{t=1}^T p_{3,t} + \Delta_{max} T \left(\prob\big( A_{\mathrm{init}}^\complement \big) + \prob\big( A_{\mathrm{MLE}}^\complement \big)\right) \\
				&\quad +  \frac12 \Big(	3 \, c_1 + c_2	\Big)  \sum\nolimits_{t=\tau+1}^{T}  \mathbb{E} \left[\mathbb{E}[  \| \bz_{t,i_t,j_t} \|_{\bM_t^{-1}}  1_{  A_{\mathrm{init}} }  \given  \mathcal{H}_{t-1} ]\right].
				%			 \mathbb{E} \left[  \sqrt{2} \wedge  \| \bz_{t,i_t,j_t} \|_{\bM_t^{-1}} \right].
				%
			\end{align*}
			On  $ A_{\mathrm{init}} $ it holds for any $i,j\in[n]$ and for each time step $t > \tau$ that 
			$$ \| \bz_{t,i,j} \|_{\bM_t^{-1}}  = \sqrt{ \bz_{t,i,j}^\top \bM_t^{-1} \bz_{t,i,j}   }  \leq  \sqrt{ \bz_{t,i,j}^\top  \bz_{t,i,j}   } \leq  \sqrt{2}.$$
			since $ \| \bz_{t,i,j} \|^2 \leq 2$ holds by assumption (on the context vectors $(\bx_{t,i})_{i\in[n]}$) and $ \bM_t\geq \bI_d$ holds on $ A_{\mathrm{init}}.$
			Thus, we can use Lemma \ref{lemma_elliptical_potential} to obtain
			\begin{align*}
				\mathbb{E}[ R_T^{a}(\Algo{CoLSTIM}) ] \,
				&\leq \Delta_{max} \tau +  \Delta_{max} \sum_{t=\tau+1}^T p_{3,t}  +  \Delta_{max} T \left(\prob\big( A_{\mathrm{init}}^\complement \big) + \prob\big( A_{\mathrm{MLE}}^\complement \big)\right) \\
				&\quad +   \frac{c_3}{2} \Big(	3 \, c_1 + c_2	\Big)   \sqrt{T \log\left( \frac{2T}{d} \right) },
			\end{align*}
			for any  $c_3 \geq \sqrt{  2 \,d \, }.$ 
			This lets us infer the first statement of the theorem, since $\prob\big( A_{\mathrm{init}}^\complement \big) \leq p_2$ and $\prob\big( A_{\mathrm{MLE}}^\complement \big)\leq p_1$ by assumption of the theorem.

			For the second statement note that by the Cauchy-Schwarz inequality 
			\begin{align*}
				\sum_{t=\tau+1}^T  \sqrt{2} \wedge  \| \bz_{t,i_t,j_t} \|_{\bM_t^{-1}} 
				\leq  	\sum_{t=\tau+1}^T  \sqrt{ \bz_{t,i_t,j_t}^\top \bM_t^{-1} \bz_{t,i_t,j_t}}
				\leq  \sum_{t=\tau+1}^T \| \bz_{t,i_t,j_t} \| \|\bM_t^{-1/2} \|
				%	\\
				\leq \sqrt{2}  \sum_{t=\tau+1}^T \lambda_{\min}^{-1/2}(\bM_t),
			\end{align*}
			where we used the fact that the Euclidean matrix norm is consistent with the Euclidean vector norm.
			Thus, using the assumption on the smallest eigenvalue of $\bM_t,$ we obtain
			\begin{align*}
				\mathbb{E}[ R_T^{a}(\Algo{CoLSTIM}) ] \,
				&\leq \Delta_{max} \tau + \Delta_{max} \sum_{t=\tau+1}^T p_{3,t}  + \Delta_{max} T \left(\prob\big( A_{\mathrm{init}}^\complement \big) + \prob\big( A_{\mathrm{MLE}}^\complement \big)\right) \\
				&\quad +  \frac{c_3}{2} \Big(	3 \, c_1 + c_2	\Big) \sqrt{T \log\left( \frac{2T}{d} \right) },
			\end{align*}
			for any  $c_3= \sqrt{  2  } c.$
		\end{proof}

		\section{Proof of Regret Upper Bounds for \Algo{Sup-CoLSTIM}} \label{sec_proofs_sup_colstim}
		
		For some fixed constants $c_1,c_2>0$ define the MLE concentration event 
		$$ A_{\mathrm{MLE}}^{(S)} = \{ \forall t>\tau, \forall s\in [S],	\forall \{i,j\} \subset A_t^{(s)} \, : \,	|  \hat u_{t,i}^{(s)}(\bX_{t}) - u_{i}^*(\bX_{t}) + u_{j}^*(\bX_{t}) - \hat u_{t,j}^{(s)}(\bX_{t})  | \leq c_1 \|\bz_{t,i,j} \|_{\bM_t^{-1}} 	\} $$
		and the time-dependent perturbed estimated utility concentration event
		$$	A_{\mathrm{conc},t}^{(S)} = \{	\forall s\in [S], \, 	\forall \{i,j\} \subset A_t^{(s)}  : \, 	| \, \tilde u_{t,i}(\bX_{t}) - \tilde u_{t,j}(\bX_{t}) - \hat u_{t,i}^{(s)}(\bX_{t})  + \hat u_{t,j}^{(s)}(\bX_{t})| \leq c_2 \|\bz_{t,i,j} \|_{\bM_t^{-1}} 	\}.	$$
		Moreover, define the initial concentration event 
		$$A_{\mathrm{init}}^{(S)} = \{	\forall t>\tau, \,  \forall s\in [S] : \,	|\hat \btheta_t^{(s)}   -  \btheta^*| \leq 1 \ \mbox{and} \ \bM_t^{(s)} \geq \bI_d \}.$$ 
		As in Section \ref{sec_proof_colstim} we denote the average regret for selecting pairs $(i_t,j_t)$ at time $t$ again by
		$$ \Delta_{i_t,j_t} =  \frac{u_{i^*(t)}^*(\bX_{t}) -  u_{i_t}^*(\bX_{t}) + u_{i^*(t)}^*(\bX_{t}) -  u_{j_t}^*(\bX_{t})}{2}.$$
		
		\begin{lemma} \label{lemma_indep_observations}
			For all time steps $t \in [T]$ and any stages $s \in [S],$ given any realization of chosen arm-pairs $(i_t,j_t)_{t \in \Psi^{(s)}},$ the corresponding preference observations $(Y_t)_{t \in \Psi^{(s)}}$ are independent Bernoulli distributed random variables with $Y_t$ having success probability $ F^*( u^*_{i_t}(\bX_{t})  - u^*_{j_t}(\bX_{t})).$
		\end{lemma}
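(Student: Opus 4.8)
The plan is to establish the standard ``decoupling'' property underlying the layered construction of \citet{Au02}: within a fixed stage $s$, the arm-pairs assigned to $\Psi^{(s)}$ are selected \emph{without} ever inspecting the feedbacks observed at that same stage. Once this is shown, conditioning on the realized pairs $(i_t,j_t)_{t\in\Psi^{(s)}}$ reveals no information about the corresponding feedbacks beyond the pairs themselves, and the conditional independence together with the stated Bernoulli parameter follows directly from the i.i.d.\ nature of the environmental noise in the feedback model \eqref{defi_feedback_context}. The initial exploration rounds $[\tau]\subseteq\Psi^{(s)}$ fit the same framework, since their pairs are drawn independently of all feedback.

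Concretely, I would enumerate the (random) set $\Psi^{(s)}=\{t_1<t_2<\cdots<t_m\}$ and argue by induction over the execution of Algorithm \ref{alg:SUPCoLSTIM} that each pair $(i_{t_k},j_{t_k})$ is a deterministic function of (i) the contexts $\bX_1,\ldots,\bX_{t_k}$, (ii) the feedbacks $Y_l$ assigned to stages strictly below $s$ prior to $t_k$, and (iii) the algorithm's internal randomness (the perturbation samples and the uniform-selection coins). The point is to trace the \textbf{while}-loop at time $t_k$: the active set $A_{t_k}^{(s)}$ is produced by the refinement steps at stages $1,\ldots,s-1$, which use only the estimates $\hat\btheta_{t_k}^{(s')}$, $s'<s$, built from $\Psi^{(s')}$ and hence from lower-stage feedbacks; the gate $w_{i,j}^{(s)}(\bX_{t_k})>1/2^s$ depends on $\bM_{t}^{(s)}$, which is a function of the \emph{contexts} of $t_1,\ldots,t_{k-1}$ only; and on the \textbf{else} branch the pair $(i_{t_k},j_{t_k})$ is drawn uniformly at random from the admissible pairs using fresh coins. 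Crucially, none of these quantities involves $Y_{t_1},\ldots,Y_{t_{k-1}}$, i.e.\ the stage-$s$ feedbacks, nor any higher-stage feedback.

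Granting this, let $\cF$ denote the $\sigma$-algebra generated by all contexts, the internal randomness, and the lower-stage feedbacks; then $\Psi^{(s)}$ and $(i_l,j_l)_{l\in\Psi^{(s)}}$ are $\cF$-measurable, while each $Y_{t_k}$ is, by \eqref{defi_feedback_context}, drawn as $\mathrm{Ber}\big(F^*(u^*_{i_{t_k}}(\bX_{t_k})-u^*_{j_{t_k}}(\bX_{t_k}))\big)$ using round-$t_k$ environmental noise that is independent of $\cF$ and of all other rounds. Conditioning on $\cF$ (equivalently, on the realized arm-pairs) therefore leaves $(Y_{t_k})_{k=1}^m$ mutually independent with exactly the claimed success probabilities, which is the assertion of the lemma. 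I expect the only genuine obstacle to be a rigorous formulation of the inductive measurability claim in the second paragraph: one must verify that the recursive dependence of the active sets $A_t^{(s)}$ on lower-stage estimates never opens a back-channel through which a same-stage feedback could influence a same-stage arm choice. This is precisely the property that motivates splitting the data across the index sets $\Psi^{(s)}$ in the first place, and handling it cleanly — rather than the subsequent independence argument, which is routine — is where the care is needed.
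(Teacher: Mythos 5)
Your proposal is correct and is essentially the paper's own argument: the paper proves this lemma simply by deferring to Lemma 4 of \citet{li2017provably} and Lemma 14 of \citet{saha2020regret}, which contain exactly the Auer-style decoupling you spell out — stage-$s$ pair selections (via the widths $w_{i,j}^{(s)}$, the Gram matrix $\bM_t^{(s)}$, the lower-stage refinements of $A_t^{(s)}$, and fresh uniform coins on the \textbf{else} branch) never consult stage-$s$ feedbacks, so conditioning on the realized pairs leaves the $(Y_t)_{t\in\Psi^{(s)}}$ independent Bernoulli with the stated parameters. Your write-up merely makes explicit, via the inductive measurability claim, what the paper imports by citation.
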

		\begin{proof}
			The proof is similar to Lemma 4 in \cite{li2017provably} or Lemma 14 in \cite{saha2020regret}.
		\end{proof}
	
		\begin{lemma} \label{lemma_avg_regret_bound_sup}
			Consider some time step $t \in [T]\backslash[\tau]$ and suppose that $(i_t,j_t)$ is chosen at stage $s_t \in [S].$ 
			Then, on the event $A_{\mathrm{MLE}}^{(S)}$ it holds that $i_t^* \in A_t^{(s)}$ for all $s\leq s_t.$
			Moreover, on  $ A_{\mathrm{MLE}}^{(S)} \cap A_{\mathrm{conc},t}^{(S)}$ it holds that $\Delta_{i_t,j_t}  \leq   \begin{cases}
					\frac{2}{\sqrt{T}}, & \mbox{if $t \in \Psi^{(0)},$} \\
					\frac{4}{2^{s_t}}, & \mbox{else.} \\
				\end{cases}$  
		\end{lemma}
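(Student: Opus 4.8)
The plan is to establish the two assertions in sequence: first, that the optimal arm $i_t^*$ survives every elimination round that precedes the stage $s_t$ at which the pair is selected; and second, to convert this into the two regret bounds by treating separately the case where $(i_t,j_t)$ is chosen through \Algo{CoLSTIM}'s mechanism (branch~1, giving $t\in\Psi^{(0)}$) and the case of the uniform random choice (branch~3, giving $t\in\Psi^{(s_t)}$). Throughout I read the width appearing in $A_{\mathrm{MLE}}^{(S)}$ as the stage-specific quantity $w_{i,j}^{(s)}(\bX_t)=c_1\|\bz_{t,i,j}\|_{(\bM_t^{(s)})^{-1}}$ used by the algorithm, so that on $A_{\mathrm{MLE}}^{(S)}$ we have $|\langle \bz_{t,i,j},\hat\btheta_t^{(s)}-\btheta^*\rangle|\le w_{i,j}^{(s)}(\bX_t)$ for all $i,j\in A_t^{(s)}$.

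For the first claim I would induct on $s$ to show $i_t^*\in A_t^{(s)}$ for every $s\le s_t$. The base case $s=1$ is immediate since $A_t^{(1)}=[n]$. For the step, if $s<s_t$ then no pair was chosen at stage $s$, and since the stage counter advances only through the ``elseif'' branch, stage $s$ entered that branch; hence $w_{i,j}^{(s)}(\bX_t)\le 1/2^{s}$ for all $i,j\in A_t^{(s)}$ and $A_t^{(s+1)}$ is built by the elimination rule. Writing $j^*=\argmax_{j\in A_t^{(s)}}\bx_{t,j}^\top\hat\btheta_t^{(s)}$, I decompose $\langle\bz_{t,j^*,i_t^*},\hat\btheta_t^{(s)}\rangle=\langle\bz_{t,j^*,i_t^*},\hat\btheta_t^{(s)}-\btheta^*\rangle+\langle\bz_{t,j^*,i_t^*},\btheta^*\rangle$, bound the first term by $w_{j^*,i_t^*}^{(s)}(\bX_t)\le 1/2^{s}$ using $A_{\mathrm{MLE}}^{(S)}$ together with $i_t^*,j^*\in A_t^{(s)}$, and note the second term equals $u_{j^*}^*(\bX_t)-u_{i_t^*}^*(\bX_t)\le 0$ by optimality of $i_t^*$. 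Thus $\bx_{t,j^*}^\top\hat\btheta_t^{(s)}\le\bx_{t,i_t^*}^\top\hat\btheta_t^{(s)}+2^{-s}$, which is exactly the membership condition for $A_t^{(s+1)}$, closing the induction.

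For the regret bound when $t\in\Psi^{(0)}$ (branch~1 at stage $s_t$) I would reuse verbatim the chain of inequalities in the proof of Lemma~\ref{lemma:gaps_cond} (relations \eqref{ineq_mle_conc}, \eqref{eq_z_change}, \eqref{ineq_jt_choice}, \eqref{ineq_it_choice}), now with $[n]$ replaced by $A_t^{(s_t)}$ and $\bM_t$ by $\bM_t^{(s_t)}$. This substitution is legitimate precisely because the first claim gives $i_t^*\in A_t^{(s_t)}$, so the optimal arm is an admissible competitor in the two argmax definitions of $i_t$ and $j_t$. On $A_{\mathrm{MLE}}^{(S)}\cap A_{\mathrm{conc},t}^{(S)}$ the chain yields $2\Delta_{i_t,j_t}\le (3c_1+c_2)\|\bz_{t,i_t,j_t}\|_{(\bM_t^{(s_t)})^{-1}}$. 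Since branch~1 was entered, $w_{i_t,j_t}^{(s_t)}(\bX_t)=c_1\|\bz_{t,i_t,j_t}\|_{(\bM_t^{(s_t)})^{-1}}\le 1/\sqrt T$, so $\|\bz_{t,i_t,j_t}\|_{(\bM_t^{(s_t)})^{-1}}\le 1/(c_1\sqrt T)$ and hence $2\Delta_{i_t,j_t}\le (3+c_2/c_1)/\sqrt T\le 4/\sqrt T$ using $c_2\le c_1$; this is the claimed $\Delta_{i_t,j_t}\le 2/\sqrt T$.

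For $t\in\Psi^{(s_t)}$ (branch~3 at stage $s_t$) I would bound $u_{i_t^*}^*(\bX_t)-u_k^*(\bX_t)$ uniformly over $k\in A_t^{(s_t)}$; since $i_t,j_t\in A_t^{(s_t)}$ this controls both gaps in $\Delta_{i_t,j_t}$. If $s_t=1$ the target $4/2^{s_t}=2$ follows at once from $2\Delta_{i_t,j_t}\le\Delta_{max}=2\sqrt2$, as in the proof of Theorem~\ref{theorem_general_regret}. For $s_t\ge 2$, membership $k\in A_t^{(s_t)}$ and $i_t^*\in A_t^{(s_t-1)}$ give $\bx_{t,i_t^*}^\top\hat\btheta_t^{(s_t-1)}-\bx_{t,k}^\top\hat\btheta_t^{(s_t-1)}\le 2^{-(s_t-1)}$ from the elimination rule of stage $s_t-1$, while $A_{\mathrm{MLE}}^{(S)}$ and the branch-2 bound $w_{i_t^*,k}^{(s_t-1)}(\bX_t)\le 1/2^{s_t-1}$ control $\langle\bz_{t,i_t^*,k},\btheta^*-\hat\btheta_t^{(s_t-1)}\rangle$. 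Adding the two gives $u_{i_t^*}^*(\bX_t)-u_k^*(\bX_t)\le 2\cdot 2^{-(s_t-1)}=4/2^{s_t}$, and averaging over $k\in\{i_t,j_t\}$ yields $\Delta_{i_t,j_t}\le 4/2^{s_t}$. The main obstacle I anticipate is bookkeeping rather than conceptual: one must verify that the selection stage is reached only through a chain of ``elseif'' branches, so that the deterministic width bounds $w^{(s)}\le 1/2^{s}$ are available at every intermediate stage (in particular at $s_t-1$), and that each use of $A_{\mathrm{MLE}}^{(S)}$ is applied to a pair both of whose members lie in the relevant $A_t^{(s)}$.
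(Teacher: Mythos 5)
Your proof is correct and follows essentially the same route as the paper. The core chain of inequalities for the case $t \in \Psi^{(0)}$ (MLE concentration, the contrast decomposition $\bz_{t,i_t^*,j_t} = \bz_{t,i_t^*,i_t} + \bz_{t,i_t,j_t}$, the defining inequalities of $j_t$ and $i_t$, the perturbation concentration, and finally the width bound $c_1\|\bz_{t,i_t,j_t}\|_{(\bM_t^{(s_t)})^{-1}} \leq 1/\sqrt{T}$ combined with $c_2 \leq c_1$) is exactly the paper's argument, and your decision to read the norm in $A_{\mathrm{MLE}}^{(S)}$ as the stage-dependent $\|\cdot\|_{(\bM_t^{(s)})^{-1}}$ matches what the paper itself uses in \eqref{ineq_mle_conc_sup}, correctly resolving what is evidently a typo in the event's definition. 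The only real difference is one of self-containment: where the paper disposes of the first claim and of the case $t \notin \Psi^{(0)}$ by declaring them analogous to Lemma 6 of \citet{li2017provably} and to Parts 1 and 3 of Lemma 6 of \citet{saha2020regret}, you reconstruct those arguments explicitly---the induction showing that $i_t^*$ survives every elimination (valid because every stage before $s_t$ must pass through the elseif branch, making both the width bound $1/2^s$ and the elimination rule available), and the stage-$(s_t-1)$ elimination-plus-concentration argument yielding $4/2^{s_t}$, with the $s_t=1$ case correctly handled by the crude bound $\Delta_{i_t,j_t} \leq \sqrt{2} < 2$. Both reconstructions are sound (and, as you note and the paper confirms, the second one needs only $A_{\mathrm{MLE}}^{(S)}$, not $A_{\mathrm{conc},t}^{(S)}$), so your proposal is in effect a self-contained version of the paper's proof.
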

		
		\begin{proof}
			The proof of the first part (i.e., $i_t^* \in A_t^{(s)} \ \forall s\leq s_t$) is analogous to Lemma 6 in \cite{li2017provably} or Part-1 in Lemma 6 in \cite{saha2020regret}, as $A_{\mathrm{MLE}}^{(S)}$ corresponds to the set $\mathcal{E}_X$ \citep{li2017provably} or $\mathcal{E}$ \citep{saha2020regret}.

			Next, let us write for sake of brevity $s=s_t$ and let us assume that $t \in \Psi^{(0)},$ i.e., it holds that
			\begin{align} 
				\label{ineq_widths}
				w_{i,j}^{(s)}(\bX_t) = c_1 \| \bz_{t,i,j} \|_{(\bM_{t}^{(s)})^{-1}}  \leq 1/\sqrt{T}, \quad  \forall i,j \in A_t^{(s_t)}.
			\end{align}
			On the event  $ A_{\mathrm{MLE}}^{(S)} $ it holds that
			\begin{align} \label{ineq_mle_conc_sup}
				\begin{split}
					\langle \bz_{t,i_t^*,i_t} , \btheta^* - \hat{\btheta}_t^{(s)} \rangle
					&= u_{i_t^*}^*(\bX_{t}) - \hat u_{t,i_t^*}^{(s)}(\bX_{t}) + \hat u_{t,i_t}^{(s)}(\bX_{t}) -  u_{i_t}^*(\bX_{t})    \\
					&\leq  |  u_{i_t^*}^*(\bX_{t}) - \hat u_{t,i_t^*}^{(s)}(\bX_{t}) + \hat u_{t,i_t}^{(s)}(\bX_{t}) -  u_{i_t}^*(\bX_{t})  | \\
					&\leq   c_1 \| \bz_{t,i,j} \|_{(\bM_{t}^{(s)})^{-1}} 			\\
%					
%					&\leq 1/\sqrt{T},
%				
				\end{split}
			\end{align} 
			and similarly $\langle \bz_{t,i_t^*,j_t} , \btheta^* - \hat{\btheta}_t \rangle \leq c_1 \|\bz_{t,i_t^*,j_t} \|_{\bM_t^{-1}}, $ where we used that $i_t^* \in A_t^{(s)}$ for all $s\leq s_t$ from the first part of the lemma.
			On $A_{\mathrm{conc},t}^{(S)}$ we have that
				\begin{align} \label{ineq_pert_conc_sup}
					 \epsilon_{t,i_t} \|\bx_{t,i_t}  \|_{(\bM_{t}^{(s_t)})^{-1}}  - \epsilon_{t,j_t} \|\bx_{t,j_t}  \|_{(\bM_{t}^{(s_t)})^{-1}}  \leq c_2 \|\bz_{t,i_t,j_t}  \|_{(\bM_{t}^{(s_t)})^{-1}} . 
				\end{align}
			Further, since  $t \in \Psi^{(0)}$ the choice of $j_t$ implies for all $i\in A_t^{(s_t)}$ that
			\begin{align} \label{ineq_jt_choice_sup}
				\langle \bz_{t,i,i_t} ,  \hat{\btheta}_t^{(s_t)} \rangle +   c_1\, \|\bz_{t,i,i_t}  \|_{(\bM_{t}^{(s_t)})^{-1}} 
				%		  
%				\leq \langle \bz_{t,j_t,i_t} ,  \hat{\btheta}_t^{(s)} \rangle +   c_1\, \|\bz_{t,j_t,i_t}  \|_{(\bM_{t}^{(s)})^{-1}} 
				%		  
				\leq \langle \bz_{t,j_t,i_t} ,  \hat{\btheta}_t^{(s_t)} \rangle +   c_1\, \|\bz_{t,i_t,j_t} \|_{(\bM_{t}^{(s_t)})^{-1}} 
				\leq \langle \bz_{t,j_t,i_t} ,  \hat{\btheta}_t^{(s_t)} \rangle +  1/\sqrt{T},
			\end{align} 
			while the choice of $i_t$ implies for all $i\in A_t^{(s_t)}$ that
			\begin{align}\label{ineq_it_choice_sup}
				\begin{split}
					\langle \bx_{t,i} ,  \hat{\btheta}_t^{(s_t)} \rangle	+  \epsilon_{t,i} \|\bx_{t,i}  \|_{(\bM_{t}^{(s_t)})^{-1}}
					&- \big(	\langle \bx_{t,i_t} ,  \hat{\btheta}_t^{(s_t)} \rangle + \epsilon_{t,i_t} \|\bx_{t,i_t}  \|_{(\bM_{t}^{(s_t)})^{-1}} \big) \\
					&\quad = \langle \bz_{t,i,i_t} ,  \hat{\btheta}_t^{(s_t)} \rangle	+  \epsilon_{t,i} \|\bx_{t,i} \|_{(\bM_{t}^{(s_t)})^{-1}} - \epsilon_{t,i_t} \|\bx_{t,i_t}  \|_{(\bM_{t}^{(s_t)})^{-1}} \leq 0 \, .
				\end{split}		
			\end{align}
			With these considerations,
			\begin{align*}
				2 \Delta_{i_t,j_t} 
				& = u_{i^*(t)}^*(\bX_{t}) -  u_{i_t}^*(\bX_{t}) + u_{i^*(t)}^*(\bX_{t}) -  u_{j_t}^*(\bX_{t}) \\
				& = \langle \bz_{t,i_t^*,i_t} , \btheta^* \rangle +  \langle \bz_{t,i_t^*,j_t} , \btheta^* \rangle \\
				& =  \langle \bz_{t,i_t^*,i_t} , \btheta^* - \hat{\btheta}_t^{(s_t)} \rangle 
				+  \langle \bz_{t,i_t^*,i_t} ,  \hat{\btheta}_t^{(s_t)} \rangle	
				+ \langle \bz_{t,i_t^*,j_t} , \btheta^* -   \hat{\btheta}_t^{(s_t)}  \rangle 
				+  \langle \bz_{t,i_t^*,j_t} ,  \hat{\btheta}_t^{(s_t)}  \rangle \\
				& \stackrel{\eqref{ineq_mle_conc_sup}}{\leq}  c_1 \big(  \|\bz_{t,i_t^*,i_t}  \|_{(\bM_{t}^{(s)})^{-1}}  +  \|\bz_{t,i_t^*,j_t}  \|_{(\bM_{t}^{(s)})^{-1}}   \big) 
				+  \langle \bz_{t,i_t^*,i_t} ,  \hat{\btheta}_t^{(s_t)} \rangle 
				+ \langle \bz_{t,i_t^*,j_t} ,  \hat{\btheta}_t^{(s_t)}  \rangle \\
				& \stackrel{\eqref{eq_z_change}}{\leq}  c_1 \big(  2\, \|\bz_{t,i_t^*,i_t}  \|_{(\bM_{t}^{(s)})^{-1}}  +  \|\bz_{t,i_t,j_t}  \|_{(\bM_{t}^{(s)})^{-1}}   \big) 
				+  2 \langle \bz_{t,i_t^*,i_t} ,  \hat{\btheta}_t^{(s_t)} \rangle + \langle \bz_{t,i_t,j_t} ,  \hat{\btheta}_t^{(s_t)}  \rangle  \\
				& \stackrel{\eqref{ineq_jt_choice_sup}}{\leq}  3 c_1 \|\bz_{t,i_t,j_t}  \|_{(\bM_{t}^{(s)})^{-1}}   +  2 \langle \bz_{t,j_t,i_t} ,  \hat{\btheta}_t^{(s_t)} \rangle + \langle \bz_{t,i_t,j_t} ,  \hat{\btheta}_t^{(s_t)}  \rangle \\ 
				& =  \ 3 c_1 \|\bz_{t,i_t,j_t}  \|_{(\bM_{t}^{(s)})^{-1}}   +   \langle \bz_{t,j_t,i_t} ,  \hat{\btheta}_t^{(s_t)} \rangle  \\ 
				& \stackrel{\eqref{ineq_it_choice_sup}}{\leq}  3 c_1  \|\bz_{t,i_t,j_t}  \|_{(\bM_{t}^{(s)})^{-1}}  +  \epsilon_{t,i_t} \|\bx_{t,i_t}  \|_{(\bM_{t}^{(s)})^{-1}}  - \epsilon_{t,j_t} \|\bx_{t,j_t}  \|_{(\bM_{t}^{(s)})^{-1}}   \\ 
				& \stackrel{\eqref{ineq_pert_conc_sup}}{\leq}    \ 3 c_1  \|\bz_{t,i_t,j_t}  \|_{(\bM_{t}^{(s)})^{-1}}  + c_2 \|\bz_{t,i_t,j_t}  \|_{(\bM_{t}^{(s)})^{-1}}  \\
				& \stackrel{\eqref{ineq_widths}}{\leq} \frac{4}{\sqrt{T}},
			\end{align*}
			where we used in the last inequality that by choice $c_2\leq c_1.$ This shows the case $t \in \Psi^{(0)}.$

		Finally, showing  $\Delta_{i_t,j_t}  \leq  \frac{4}{2^{s_t}} $ if $t\notin \Psi^{(0)}$ is analogous to Part-3 of Lemma 6 in \cite{saha2020regret}, as the choice of the pair $(i_t,j_t)$ is the same as for Sta'D in this case\footnote{Our $\Psi^{(0)}$ corresponds to $\phi^c$ in \cite{saha2020regret}.}. Note that we do not need to condition on the event $A_{\mathrm{conc},t}^{(S)}$ for this case.

	\end{proof}

		\begin{lemma} \label{lemma_stage_times_bound}
			On the event $A_{\mathrm{init}}^{(S)}$ it holds for any $s\in[S]$ that $\sqrt{|\Psi^{(s)}|} \leq c_1 2^s \sqrt{2\,d\, \log(4T^2/d)}.$
		\end{lemma}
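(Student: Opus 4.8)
The plan is to bound $|\Psi^{(s)}|$ by a standard elliptical-potential argument, applying Lemma~\ref{lemma_elliptical_potential} to the contrast vectors accumulated at stage $s$, with the argument driven by a per-round lower bound on the confidence width of the pair that triggers an addition to $\Psi^{(s)}$. The guiding intuition is that each time a round is placed in $\Psi^{(s)}$ the selected pair is still ``poorly estimated'' at resolution $2^{-s}$, so large values of $\|\bz_{t,i_t,j_t}\|_{(\bM_t^{(s)})^{-1}}$ cannot occur too often before the Gram matrix $\bM_t^{(s)}$ grows enough to shrink all widths.

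First I would establish the per-round width lower bound. A round $t>\tau$ enters $\Psi^{(s)}$ only through the final (\textbf{ELSE}) branch of Algorithm~\ref{alg:SUPCoLSTIM}, where $(i_t,j_t)$ is drawn from $\{i,j\in A_t^{(s)} \mid w_{i,j}^{(s)}(\bX_t) > 1/2^s\}$; hence every such $t$ satisfies $c_1\|\bz_{t,i_t,j_t}\|_{(\bM_t^{(s)})^{-1}} = w_{i_t,j_t}^{(s)}(\bX_t) > 2^{-s}$, i.e. $\|\bz_{t,i_t,j_t}\|_{(\bM_t^{(s)})^{-1}} > 1/(c_1 2^s)$. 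On the event $A_{\mathrm{init}}^{(S)}$ we also have $\bM_t^{(s)}\geq \bI_d$ for all $t>\tau$, which serves two purposes: it caps each width by $\|\bz_{t,i_t,j_t}\|_{(\bM_t^{(s)})^{-1}}\leq \|\bz_{t,i_t,j_t}\|\leq\sqrt2$ (so that the truncation $\sqrt2\wedge\|\cdot\|$ in Lemma~\ref{lemma_elliptical_potential} leaves these terms unchanged), and it supplies the precondition $\bM^{(s)}_{\tau+1}\geq\bI_d$ that the potential lemma requires.

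Next I would set up the bookkeeping matching the algorithm to the potential lemma. The crucial structural fact is that $\bM_t^{(s)}=\sum_{l\in\Psi^{(s)}}\bz_{l,i_l,j_l}\bz_{l,i_l,j_l}^\top$ aggregates \emph{only} the rounds already in $\Psi^{(s)}$, and never data from other stages. Enumerating the elements of $\Psi^{(s)}$ in chronological order, with the initialization block $[\tau]$ first, the matrix $\bM_t^{(s)}$ used at the $k$-th element is exactly the Gram matrix of the preceding $k-1$ elements; this is precisely the running matrix $\bM_k$ in Lemma~\ref{lemma_elliptical_potential}. Writing $N=|\Psi^{(s)}|$, I would then invoke the sum-of-squares estimate established inside the proof of that lemma, namely $\sum_{k=\tau+1}^{N}(\sqrt2\wedge\|\bz^{(k)}\|_{\bM_k^{-1}})^2\leq 2d\log(2N/d)$, whose summation conveniently begins just after the initialization block.

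Finally I would combine the two bounds. Writing $m$ for the number of rounds added to $\Psi^{(s)}$ through the \textbf{ELSE} branch, each contributes a squared width exceeding $1/(c_1^2 2^{2s})$, so $m/(c_1^2 2^{2s}) < 2d\log(2N/d)\leq 2d\log(4T^2/d)$ after using $N=|\Psi^{(s)}|\leq T$ in the logarithm; rearranging gives $m < c_1^2\,2^{2s}\,2d\log(4T^2/d)$, and taking square roots yields the claimed form. The step I expect to be the main obstacle is exactly this last bookkeeping: the telescoping estimate controls only the \textbf{ELSE}-added rounds (the sum starts at $\tau+1$, and the initialization pairs in $[\tau]$ need not have large width), so to pass from the bound on $m$ to a bound on $|\Psi^{(s)}|=\tau+m$ one must verify that the slack between $\log(2N/d)$ and the stated $\log(4T^2/d)$, together with the prescribed exploration length $\tau$, absorbs the initialization count; the other delicate point is confirming that, because $\bM_t^{(s)}$ sums exactly the $\Psi^{(s)}$-indexed vectors in time order, the algorithm's per-stage matrices genuinely coincide with the sequence of matrices to which Lemma~\ref{lemma_elliptical_potential} applies.
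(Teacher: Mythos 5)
Your approach is the one the paper intends: the paper's own ``proof'' is a one-line reference to Lemma 7 of \citet{saha2020regret}, whose argument is exactly the combination you describe --- every round entering $\Psi^{(s)}$ through the final \textbf{ELSE} branch has $c_1\|\bz_{t,i_t,j_t}\|_{(\bM_t^{(s)})^{-1}} > 2^{-s}$, and the elliptical potential (Lemma \ref{lemma_elliptical_potential}, applicable because $A_{\mathrm{init}}^{(S)}$ guarantees $\bM_t^{(s)}\geq \bI_d$, which also keeps the $\sqrt{2}$-truncation inactive) caps how often this can happen. Your variant of summing \emph{squared} widths and invoking the intermediate estimate $2d\log(2N/d)$ from inside that lemma's proof, instead of applying Cauchy--Schwarz to the sum of widths, is a harmless simplification and reproduces the paper's constant $c_1 2^s\sqrt{2d\log(4T^2/d)}$ exactly. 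Your bookkeeping observation that the per-stage matrices $\bM_t^{(s)}$ aggregate precisely the $\Psi^{(s)}$-indexed contrast vectors in chronological order, so that Lemma \ref{lemma_elliptical_potential} genuinely applies stage by stage, is also correct and is the point the paper's one-liner is gesturing at.

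The obstacle you flag at the end is real, but it is a defect of the paper's statement rather than of your argument. In \citet{saha2020regret} the sets $\Psi^{(s)}$ start empty, so the counting argument bounds all of $|\Psi^{(s)}|$; here each $\Psi^{(s)}$ is initialized to $[\tau]$, the width lower bound $2^{-s}$ is only available for rounds added after initialization, and so the potential argument controls only $m=|\Psi^{(s)}\setminus[\tau]|$, exactly as you derive. The absorption you hope for does not go through in general: the slack $\log(4T^2/d)-\log(2T/d)=\log(2T)$ buys an additive margin of $c_1^2\, 2^{2s}\cdot 2d\log(2T)$, which at $s=1$ is of order $d\log(nT)\log(T)/\mu^2$, whereas $\tau \geq d^2\log(T)/(\mu^2\rho)$ can be arbitrarily larger whenever $d/\rho \gg \log(nT)$. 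So the lemma as literally stated (with $[\tau]\subseteq\Psi^{(s)}$) is not delivered by the cited argument; what is provable is your bound $\sqrt{|\Psi^{(s)}\setminus[\tau]|}\leq c_1 2^s\sqrt{2d\log(4T^2/d)}$. That weaker form is also all that the proof of Theorem \ref{theorem_regret_bound_Sup_colstim} actually needs, since the initialization rounds are charged separately there by the $\Delta_{max}\tau$ term and Lemma \ref{lemma_avg_regret_bound_sup}'s bound $4/2^{s_t}$ applies only to rounds $t>\tau$ in any case. In short: your proof is the paper's proof carried out more carefully, and the one step you could not close is a step the paper should not have claimed.
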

		\begin{proof}
			The proof is analogous to Lemma 7 in \cite{saha2020regret}, as the event  $A_{\mathrm{init}}^{(S)}$ ensures that the smallest eigenvalue of any stage-dependent Gram matrix $\bM_t^{(s)}$ is at least 1\footnote{The stage-dependent Gram matrix $\bM_t^{(s)}$ is denoted by $V_t^{s}$ in \cite{saha2020regret}.}.
		\end{proof}
	
		\begin{proof}[\bf Proof of Theorem \ref{theorem_regret_bound_Sup_colstim}]
			By the choice of the initial exploration length $\tau = d + \max\{ \nicefrac{d^2 \log(T)}{\mu^2 \rho}  ,d/\rho \} $ and Assumption \textbf{(A1)} we can infer that the smallest eigenvalue of the Gram matrix $\bM_t$ is at least $\rho \tau = \max\left(	1, \frac{d \log(T/d)+2\log(T)}{\mu^2}		\right),$ so that following the lines of proof of Lemma 4 in \cite{saha2020regret} we have that $\prob\big(A_{\mathrm{init}}^{(S)}\big) \geq 1-1/T.$ 
			Moreover, due to the choice of $c_1= \frac{3}{2\mu}\sqrt{2 \log( 3 n T^2)}$ as well as Lemma \ref{lemma_indep_observations} (corresponds to Lemma 14 in \cite{saha2020regret}) it holds that $\prob\big(A_{\mathrm{MLE}}^{(S)}\big) \geq 1-1/T.$
			As in the proof of Theorem \ref{theorem_regret_bound} it is straightforward to show that $\prob(A_{\mathrm{conc},t}^{(S)} \given \mathcal{H}_{t-1} ) \geq 1-p_t$ and 
			\begin{align} \label{sum_p3t_asympt_sup_colst}
				\sum_{t=\tau+1}^T p_{t}  \leq  \frac{\sqrt{2d}}{2} \Big(	3 \, c_1 + c_2	\Big) \sqrt{\log\left( \frac{2T}{d} \right) }  \sum_{t=1}^T \frac{1}{\sqrt{t}} \leq   \frac{\sqrt{2d}}{2 } \Big(	3 \, c_1 + c_2	\Big) \sqrt{T \log\left( \frac{2T}{d} \right) } = O(\sqrt{dT\log(n)}\log(T)).   
			\end{align}
			With this, we get
			\begin{align*}
				\mathbb{E}[ R_T^{a}(\Algo{Sup-CoLSTIM}) ]\, 
				&=    \sum\limits_{t=1}^{\tau} \mathbb{E} [ \Delta_{i_t,j_t}]  + \sum\limits_{t \in \Psi^{(0)}} \mathbb{E}[ \Delta_{i_t,j_t}]  +   \sum\limits_{s=1}^{S} \sum\limits_{t \in \Psi^{(s)}} \mathbb{E}[ \Delta_{i_t,j_t}] \\
				&\leq  \Delta_{max} \tau +  \Delta_{max} T \left(\prob\big( (A_{\mathrm{init}}^{(S)})^\complement \big) + \prob\big( (A_{\mathrm{MLE}}^{(S)})^\complement \big)\right) + 	\sum_{t=\tau+1}^T p_{t}   \\
				&\quad  + \sum\limits_{t \in \Psi^{(0)}} \mathbb{E} \left[  \mathbb{E} [  \Delta_{i_t,j_t}  1_{ A_{\mathrm{init}}^{(S)} \cap  A_{\mathrm{MLE}}^{(S)} \cap A_{\mathrm{conc},t}^{(S)} } \given \mathcal{H}_{t-1} ]  \right]  \\
				&\quad +   \sum\limits_{s=1}^{S} \sum\limits_{t \in \Psi^{(s)}} \mathbb{E} \left[  \mathbb{E} [  \Delta_{i_t,j_t}  1_{ A_{\mathrm{init}}^{(S)} \cap A_{\mathrm{MLE}}^{(S)} \cap A_{\mathrm{conc},t}^{(S)} } \given \mathcal{H}_{t-1} ]  \right] \\
				&\leq   \Delta_{max} \tau +   2 \Delta_{max} +  	\sum_{t=\tau+1}^T p_{t}  +  \frac{2|\Psi^{(0)}|}{\sqrt{T}}  +   4 \sum\limits_{s=1}^{S} \frac{|\Psi^{(s)}|}{2^{s}}   \tag{Lemma \ref{lemma_avg_regret_bound_sup}}\\
				&\leq  \Delta_{max} \tau +   2 \Delta_{max} +  	\sum_{t=\tau+1}^T p_{t}   +  \frac{2|\Psi^{(0)}|}{\sqrt{T}}  +   4  c_1   \sqrt{2\,d\, \log(4T^2/d)} \sum\limits_{s=1}^{S}  \sqrt{|\Psi^{(s)}|}  \tag{Lemma \ref{lemma_stage_times_bound}}  \\
				&\leq  \Delta_{max} \tau +   2 \Delta_{max} +  	\sum_{t=\tau+1}^T p_{t}  +  2\sqrt{T}  +   4  c_1   \sqrt{2\,d\, \log(4T^2/d)} \sqrt{T \log(T)}  \tag{Cauchy-Schwarz inequality and $|\Psi^{(s)}|\leq T$}.
			\end{align*}
			Using \eqref{sum_p3t_asympt_sup_colst} and noting that $c_1=  O(\sqrt{\log( n T)})$ as well as $\tau=o(\sqrt{dT}),$ we can conclude the proof.
		\end{proof}
		
		\clearpage
		
		\section{SGD vs.\ Full Maximum-Likelihood-Estimation} \label{sec:sgd_vs_mle}
		
		Figure \ref{fig:sgdfullmleeasybtl} illustrates the difference in regret for the easy problem scenario $E(10,50,G^*)$ for $G^*$ being the standard Gumbel distribution.
		There is not much of a difference between the resulting regret curves for MaxInP, while the full MLE variant of \Algo{CoLSTIM} performs slightly better than its SGD variant.
		Moreover, the full MLE variant of \Algo{CoLSTIM} has a smaller fluctuation in the sense that its range of the standard error is smaller than of its SGD variant.
		\begin{figure}[ht]
			\centering
			\includegraphics[width=0.7\linewidth]{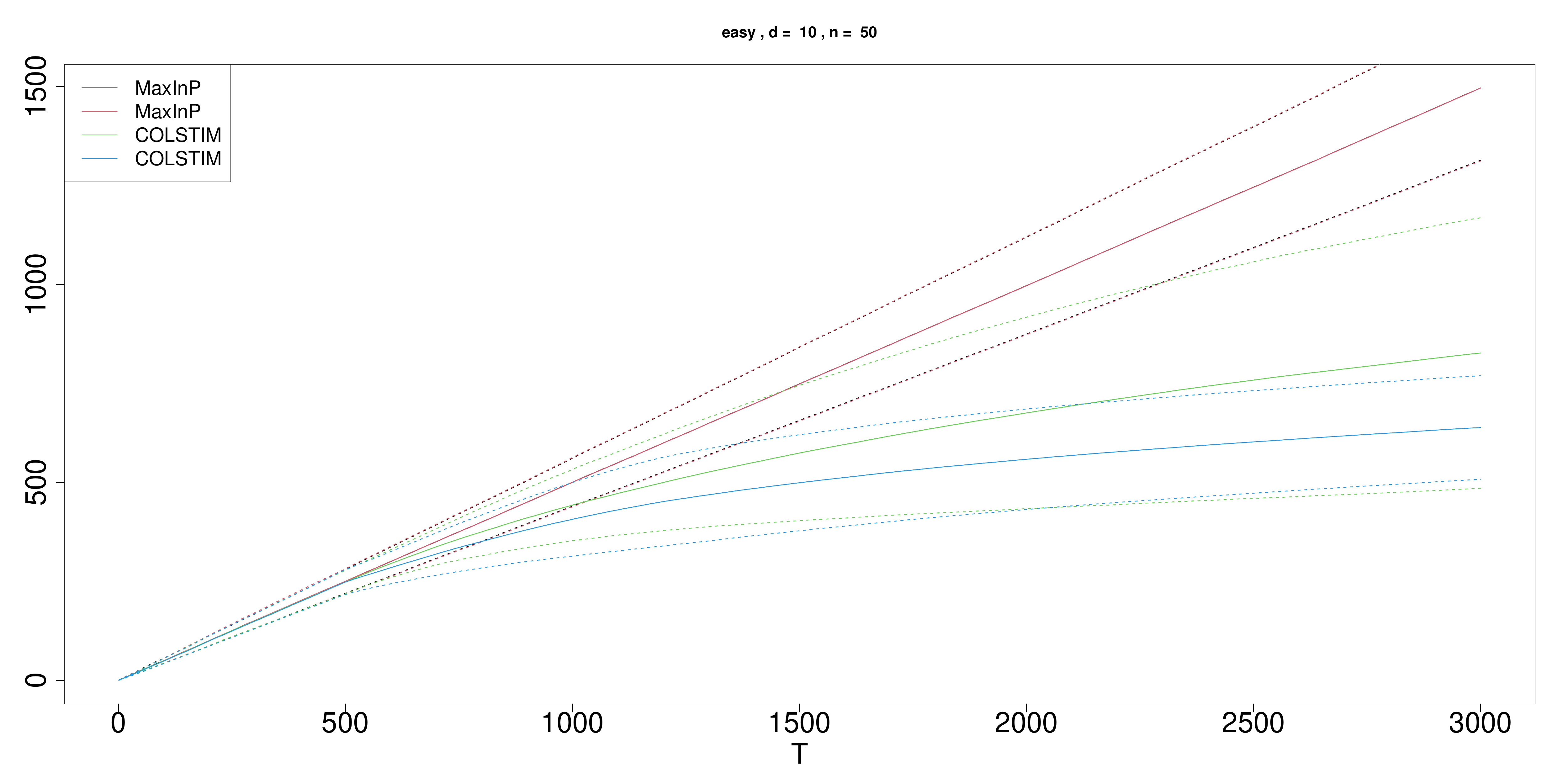}
			\caption{Averaged cumulative regret of the SGD variants of \Algo{CoLSTIM} and MaxInP and their full MLE variants on $E(d,n,G^*)$ for $G^*$ being standard Gumbel.}
			\label{fig:sgdfullmleeasybtl}
		\end{figure}

		However, regarding the average elapsed runtimes (in seconds), we see a huge difference between the SGD variants and the full MLE variants as Table \ref{tab:SGD_MLE} shows.
		
		\begin{table}[ht]
			\centering
			\label{tab:SGD_MLE}
			\caption{Averaged cumulative runtimes and the corresponding standard deviations (in brackets) of the SGD variants of \Algo{CoLSTIM} and MaxInP and their full MLE variants on $E(d,n,G^*)$ for $G^*$ being standard Gumbel.}
			\begin{tabular}{r|r}
				\hline
				& avg. runtimes (std) \\
				\hline
				MaxInP-SGD & 158.00 (4.04) \\ 
				MaxInP-MLE & 910.13 (113.99) \\ 
				\Algo{CoLSTIM-SGD} & 7.01  (0.15) \\ 
				\Algo{CoLSTIM-MLE} & 747.04  (113.63) \\ 
				\hline
			\end{tabular}
		\end{table}

	\end{document}